\documentclass[a4paper]{article}

\usepackage[english]{babel}
\usepackage[utf8]{inputenc}
\usepackage{amsmath}
\usepackage{graphicx}
\usepackage[square,numbers]{natbib}
\bibliographystyle{abbrvnat}
\usepackage[colorinlistoftodos]{todonotes}
\usepackage[utf8]{inputenc} 
\usepackage[T1]{fontenc}    
\usepackage[colorlinks = true, citecolor = blue]{hyperref}       
\usepackage{url}            
\usepackage{booktabs}       
\usepackage{amsfonts}       
\usepackage{nicefrac}       
\usepackage{microtype}      
\usepackage{lipsum}
\usepackage{fancyhdr}       
\usepackage{graphicx}       
\graphicspath{{media/}}     
\usepackage{fullpage}
\usepackage{multicol}

\usepackage[utf8]{inputenc} 
\usepackage[T1]{fontenc}  

\usepackage{url}  
\usepackage[export]{adjustbox}
\usepackage[normalem]{ulem}

\usepackage{enumitem}

\usepackage{booktabs} 
\usepackage{subcaption} 
\usepackage{amsfonts}       
\usepackage{nicefrac}        
\usepackage{microtype}      
\usepackage{xcolor}  
\usepackage{algpseudocode}
\usepackage{algorithm,algpseudocode}
\usepackage{amsmath,amsthm,amsfonts,amssymb,amscd}
\usepackage{graphicx}
\usepackage{placeins}
\usepackage{float}
\usepackage[normalem]{ulem}
\usepackage{wrapfig}
\newcommand{\indep}{\perp \!\!\! \perp}
\newcommand{\notindep}{\not\!\perp\!\!\!\perp}
\usepackage{amsmath,bm}
\usepackage{enumitem}

\theoremstyle{plain}
\newtheorem{theorem}{Theorem}[section]

\newtheorem{fact}{Fact}
\newtheorem{claim}{Claim}
\newtheorem{proposition}[theorem]{Proposition}
\newtheorem{lemma}[theorem]{Lemma}

\theoremstyle{definition}
\newtheorem{definition}[theorem]{Definition}
\newtheorem{assumption}[theorem]{Assumption}
\theoremstyle{remark}
\newtheorem{remark}[theorem]{Remark}

\newcommand{\mbf}{\mathbf}

\newcommand{\combinedgraph}{G^{\textsc{j}}}

\newcommand{\nontrivialblocks}{\mathcal{B}^{\textsc{nt}}}
\newcommand{\vobs}{V^{\textsc{o}}}

\newcommand{\ancobs}{A_{\textrm{obs}}}
\newcommand{\anchid}{A_{\textrm{hid}}}
\newcommand{\tripletCollection}{\mathfrak{V}}
\newcommand{\tripletCollectionObs}{\mathfrak{V}_{\textrm{obs}}}
\newcommand{\tripletCollectionHid}{\mathfrak{V}_{\textrm{hid}}}

\newcommand{\vtrue}{V}

\newcommand{\gtrue}{G}

\newcommand{\vcombined}{V^{\textsc{j}}}
\newcommand{\ecombined}{E^{\textsc{j}}}
\newcommand{\gcombined}{G^{\textsc{j}}}
\newcommand{\anccombined}{A^{\textsc{j}}}

\newcommand{\mainalgo}{\textsf{NoMAD}}
\newcommand{\glasso}{\textsf{GLASSO}}

\newcommand{\bctree}{\mathcal{T}_{\textsc{bc}}}

 \newcommand{\Palgo}{\mathcal{P}_{\rm algo}}
 
\newcommand{\Aalgo}{A_{\rm algo}}
 \newcommand{\Ealgo}{E_{\rm algo}}
\newcommand{\ASTalgo}{\mathcal{T}_{\rm algo}}

\newcommand{\Dextend}{\mathcal{D}_{\rm ext}}

\usepackage{authblk}

\title{Robust Model Selection of Gaussian Graphical Models}

\date{}

\author[1]{\small Abrar Zahin}
\author[2]{\small Rajasekhar Anguluri \thanks{The work was primarily done while R.A. was at Arizona State University.}}
\author[1]{\small Lalitha Sankar}
\author[1]{\small Oliver Kosut}
\author[1]{\small Gautam Dasarathy}

\affil[1]{\footnotesize Department of Electrical, Computer, \& Energy Engineering, Arizona State University}
\affil[2]{\footnotesize Department of Computer Science \& Electrical Engineering, University of Maryland, Baltimore County}

\date{}

\begin{document}
\maketitle

\begin{abstract}

  In Gaussian graphical model selection, noise-corrupted samples present significant challenges. It is known that even minimal amounts of noise can obscure the underlying structure, leading to fundamental identifiability issues. A recent line of work addressing this ``robust model selection'' problem narrows its focus to tree-structured graphical models. Even within this specific class of models, exact structure recovery is shown to be impossible. However, several algorithms have been developed that are known to provably recover the underlying tree-structure up to an (unavoidable) equivalence class.

 In this paper, we extend these results beyond tree-structured graphs. We first characterize the equivalence class up to which general graphs can be recovered in the presence of noise. Despite the inherent ambiguity (which we prove is unavoidable), the structure that can be recovered reveals local clustering information and  global connectivity patterns in the underlying model. Such information is useful in a range of real-world problems, including power grids, social networks, protein-protein interactions, and neural structures. We then propose an algorithm which provably recovers the underlying graph up to the identified ambiguity.  We further provide finite sample guarantees in the high-dimensional regime for our algorithm and validate our results through numerical simulations. 

\end{abstract}

\section{Introduction}
\label{sec:intro}

Probabilistic graphical models have emerged as a powerful and flexible formalism for expressing and leveraging relationships among entities in large interacting systems \citep{lauritzen1996graphical}. They have found  application in a range of areas including signal processing \citep{kim2013single, ott2006neurodynamics, murphy2013loopy}, power systems \citep{Angu2022, deka2020graphical, deka2015one},  (phylo)genomics \citep{zuo2017incorporating, dasarathy2014data, dasarathy2022stochastic}, and neuroscience \citep{bullmore2011brain, vinci2019graph}. Gaussian graphical models are an important subclass of graphical models and are the main focus of this paper; our techniques do apply more broadly, as discussed in Section~\ref{sec:discussion}.

  In several applications, we do not know the underlying graph structure, and the goal is to learn this from data --- a problem dubbed graphical model selection. This is important because the graph structure provides a succinct representation of the complex multivariate distribution and can reveal important relationships among the underlying variables. See, e.g., \cite{drton2017structure, maathuis2018handbook} and references therein for more on this problem. Here, we focus on a relatively new but important task where samples from the underlying distribution are corrupted by independent noise with unknown variances. This occurs in a wide variety of applications where sensor data or experimental measurements suffer from statistical uncertainty or measurement noise. In these situations, we refer to the task of graph structure learning as \emph{robust model selection}.

  This problem was recently considered by \cite{katiyar2019robust} and a line of follow-up work \citep{katiyar2020robust, casanellas2021robust, tandon2021sga} who show that unfortunately the conditional independence structure of the underlying distribution can be completely lost in general under such corruption; see Section~\ref{subsec: robust model selection} for more on this. The authors show that even in the often tractable case of tree-structured graphical models, one can only identify the structure up to an equivalent class. In fact, the assumption that the underlying uncorrupted graphical model has a tree structure is critical to the techniques of this line of work. As \citet{casanellas2021robust} astutely observe, when the random vector associated with the true underlying graph is corrupted with independent but non-identical additive noise, the robust estimation problem reduces to a latent tree structure learning problem. We improve on the algorithmic and theoretical results of this line of work by considering the robust model selection problem for general graphs. Our main contributions are summarized below.

\begin{itemize}
\setlength\itemsep{0em}
\setlength\topsep{0em}
\item We establish a fundamental identifiability result for general graphs in the robust Gaussian graphical model selection problem. This confirms that the identifiability problem is exacerbated if one considers more general graphs. More importantly, this also generalizes the identifiability results from earlier lines of work and identifies an equivalence class up to which one may hope to recover the underlying structure. 
\item We devise a novel algorithm, called \mainalgo{} (for Noisy Model selection based on Ancestor Discovery), that tackles the robust model selection problem for {\em general} graphs extending the results of \citep{katiyar2019robust, katiyar2020robust, casanellas2021robust, tandon2021sga}. Our algorithm is based on a novel ``ancestor discovery'' procedure (see Section~\ref{subsubsec:algo}) that we expect to be of independent interest. It is worth observing that the tree-based algorithms previously proposed fail, often catastrophically, when there are loops in the underlying graph.  
\item We show that \mainalgo{} provably recovers the underlying graph up to a small equivalence class and establish sample complexity results for partial structure recovery in the high-dimensional regime. 
\item We also show the efficacy of our algorithm through experiments on synthetic and realistic network structures. 
\end{itemize}

\subsection{Related Work}
\label{sec:rel-work}
Several lines of research have tackled the problem of robust estimation of high-dimensional graphical models under corruption. This includes graphical modeling with missing data, outliers, or bounded noise, see, for instance,  \cite{loh2011high, chen2013robust, wang2014robust, nguyen2022distributionally} and references therein. For the missing data problem, several other algorithms have been proposed for estimating mean values and covariance matrices from the incomplete dataset available to the learner \cite{rja1987statistical,schneider2001analysis,lounici2014high}. \citet{zheng2022graphical} considered a variant of the missing value problem where instead of missing values, the measurements are irregular; that is, different vertex pairs have vastly different sample sizes. \citet{vinci2019graph, chang2023nonparanormal, dasarathy2019gaussian} explored the situations where one is only able to obtain samples from subsets of variables, possibly missing joint observations from several pairs. \citet{sun2012robust} and \citet{yang2015robust} proposed algorithms for handling the outliers. There is another line of work that treats this problem using the error-in-variables lens (see the books and papers \cite{hwang1986multiplicative, carroll1995measurement, iturria1999polynomial, xu2007covariate} and references therein). For the problem of model selection from bounded noisy measurements, see \cite{wang2014robust,ollerer2015robust, loh2018high, chen2015robust}.

 However, these papers do not consider the setting of unknown additive noise and the corresponding implications on the conditional independence structure. Recently, \cite{nikolakakis2019learning} considered recovering forest-structured graphical models assuming that noise distribution across all vertices is identical. In contrast, our setting allows for unknown and non-identical noise. The robust model selection problem, as considered here, had not been adequately addressed even for the tree-structured graphical models until the recent work by \cite{katiyar2019robust, katiyar2020robust} who showed that the structure recovery in the presence of unknown noise is possible only up to an equivalence class. These studies also proposed algorithms to recover the correct equivalence class from noisy samples. Using information-theoretic methods, \cite{tandon2021sga} improved the sample complexity result of \cite{katiyar2020robust, nikolakakis2019learning} and provided a more statistically robust algorithm for partial tree recovery. Finally, \cite{zhang2021robustifying} studied the structure recovery problem under noise when the nodes of the GGM are vector-valued random variables. However, these results are limited to tree-structured graphical models. The results of this paper significantly extend this line of work, and are applicable to general graphs.

\section{Preliminaries and Problem Statement}
\label{sec:prelim}
\textbf{Graph theory.} Let $G = \left(V, E\right)$ be an undirected graph on vertex set $V$ (with cardinality $p$) and edge set $E \subset {V \choose 2}$. For a vertex $v\in V$, let $N_v\triangleq\{ u \in V: \{u,v\} \in E\}$ be the neighborhood of the vertex $v\in V$ and {\em degree} ${\rm deg}(v)$ be the size of $N_v$. A vertex $v$ is said to be {\em leaf} if ${\rm deg}(v)=1$.  A \emph{subgraph} of $G$ is any graph whose vertices and edges are subsets of those of $G$. For $V' \subseteq V$ the \emph{induced subgraph} $G(V')$ has the vertex set $V'$ and the edge set $E' = \lbrace \{u,v\} \in E  : u,v \in V'\rbrace$. A {\em path} between the vertices $u, v$  is a sequence of distinct vertices $v_1=u,v_2, \ldots, v_k=v$ such that $\{v_{i},v_{i+1}\} \in E$, for $ 1\leq i < k$. We let $\mathcal{P}_{uv}$ denote the set of all paths between $u$ and $v$. If $\mathcal{P}_{uv}$ is not empty, we say $u$ and $v$ are connected. The graph $G$ is connected if every pair of vertices in $G$ is connected. A set $S \subseteq V$ separates two disjoint subsets $A, B \subseteq V$ if any path from $A$ to $B$ contains a vertex in $S$. We denote this separation as $A \indep B \mid S$. The resemblance of this notation to that of the conditional independence of random variables in the graphical model will be made clear later. \\

\noindent\textbf{Gaussian graphical models.} 
Let $\mathbf{X} = \left(X_1, X_2, \ldots, X_p\right) \in \mathbb{R}^p$ be a zero-mean Gaussian random vector with a covariance matrix $\Sigma \in \mathbb{R}^{p\times p}$. Compactly, $\mathbf{X}\sim \mathcal{N}(\mathbf{0}, \Sigma)$, where $\mathbf{0}$ is the $p$-dimensional vector of all zeros. Let $G = ([p], E)$ be a graph on the vertex set $[p] \triangleq \{1,2,\ldots, p\}$ representing the coordinates of $\mathbf{X}$. Let $K \triangleq \Sigma^{-1}$ is called the \emph{precision matrix} of $\mathbf{X}$. The distribution of $\mathbf{X}$ is said to be a \emph{Gaussian graphical model} (or equivalently, Markov) with respect to $G$ if $K_{ij} = 0$ for all $\{i,j\} \notin E$ \footnote{Hence, the sparsity pattern of $K$ is represented by the edge set of $G$}.  In other words, for any $\{i, j\}\notin E$, $X_i$ and $X_j$ are conditionally independent given all the other coordinates of $\mathbf{X}$ (see 
\cite{lauritzen1996graphical} for more details). In the sequel,  we will use a generic set $V$ to denote the vertex set of our graph with the understanding that every element in $V$ is uniquely mapped to a coordinate of the corresponding random vector $\mathbf{X}$. For a vertex $v\in V$, with a slight abuse of notation, we write $X_v$ to denote the corresponding coordinate of $\mathbf{X}$. Similarly, we write $\Sigma_{uv}$ to mean the covariance between $X_u$ and $X_v$.  

\subsection{The Robust Model Selection Problem}
\label{subsec: robust model selection}

In this paper, we consider a variant of the model selection problem, which we refer to as \emph{robust model selection}. Formally, let $\mathbf{X}\sim \mathcal{N}(\mathbf{0},
 \Sigma)$ be a Gaussian graphical model with respect to an unknown graph $\gtrue{}\!=\!(V, E)$. In the robust model selection problem, the goal is to estimate the edge set $E$ (or equivalently the sparsity pattern of $K = \Sigma^{-1}$) when one only has access to noisy samples of $\mathbf{X}$. That is, we suppose we have access to the corrupted version $Y$ of the underlying random vector $X$ such that $\mathbf{Y} = \mathbf{X} + \mathbf{Z}$, where the noise  $\mathbf{Z}\sim \mathcal{N}(\mathbf{0}, D)$ is independent of $\mathbf{X}$ and $D$ is assumed to be diagonal with possibly distinct and even zero entries. In other words, the noise is assumed to be independent and heteroscedastic while potentially allowing for some coordinates of $\mathbf{X}$ to be observed uncorrupted. Observe that $\mathbf{Y}\sim \mathcal{N}(\mathbf{0}, \Sigma^o)$, where $\Sigma^o \triangleq \Sigma + D$. Indeed, $D$ is assumed to be unknown. 
 
 Unfortunately, such corruption can completely obliterate the conditional independence structure of $\mathbf{X}$. For instance, suppose that $D=e_je_j^T$, where $e_j$ is a vector of zeros except in the $j^{th}$ entry where it is one. By the Sherman-Morrison identity \citep[see e.g.,][]{horn2012matrix}, we have $(\Sigma^o)^{-1}=K-cK e_je_j^\top K$ for some $c\geq 0$. The term $K e_je_j^\top K $ can be dense in general and, hence, can fully distort the sparsity of $K$ (the conditional independence structure of $\mbf{X}$). 

  In view of the above example, the robust model selection problem  appears intractable. As outlined in Section~\ref{sec:intro}, recent studies show that this problem is partially tractable for tree-structured graphs. 
Notably, as \cite{casanellas2021robust} astutely observes, one can reduce the problem of robust structure estimation of trees to the problem of learning the structure of latent tree graphical models, which enjoy several efficient algorithms and rich theoretical results (see, e.g., ~\cite{choi2011learning, erdos1999few,semple2003phylogenetics,dasarathy2014data}). To see this, suppose that $\mathbf{X}$ is Markov according to a graph $G = (V = [p], E)$ that is tree-structured. Let the \emph{joint graph} $\gcombined{}$ denote the graph obtained by creating a copy of each node in $G$ and linking the copies to their counterparts in $G$. Formally, we define $\gcombined = (\vcombined{}, \ecombined{)}$, where  $\vcombined{}\triangleq V \cup \{1^e, 2^e, \ldots, p^e\}$ and $\ecombined{}\triangleq E \cup \{\{i, i^e\}: i\in [p]\}$. In subsequent sections, for any vertex subset $B \subseteq 2^V$, we let $B^e$ denote the vertices associated with the noisy observations from $B$ and call it the  \emph{noisy counterpart} of $B$.

  Notice that with this definition, if we associate the coordinates of $\mathbf{Y}$ to the newly added leaf vertices, the concatenated random vector $[\mathbf{X}; \mathbf{Y}]$, obtained by stacking $\mathbf{X}$ on top of $\mathbf{Y}$, is Markov according to $\gcombined{}$. \cite{casanellas2021robust} then uses the fact that when given samples of $\mathbf{Y}$, one can reconstruct a reduced latent tree representation of $\gcombined{}$, which in turn can be used to infer an equivalence class of trees that contains the true tree $G$. Indeed, the equivalence class thus obtained is the same one identified by \cite{katiyar2019robust,katiyar2020robust}. We next state our general identifiability result after introducing some more graph-theoretic concepts.

\subsection{An Identifiability Result}
\label{sec:identifiability}

   A connected graph $G$ is said to be {\em biconnected} if at least 2 vertices need to be removed to disconnect the graph. A subgraph $H$ of $G$ is said to be a {\em biconnected component} if it is a maximal biconnected subgraph of $G$. That is, $H$ is not a strict subgraph of any other biconnected subgraph of $G$. The {vertex set} of such a biconnected component will be referred to as a \emph{block}. A block is  \emph{non-trivial} if it has more than two vertices. For example, in Fig.~\ref {fig:true_graph (zero_corrupted)}, the vertex set $\{1, 2, 3, 4\}$ and $B_1 \cup \{6,8\}$ are non-trivial blocks where $B_1$ is an arbitrary set of vertices such that the subgraph on $B_1 \cup \{6,8\}$ is a biconnected component, whereas, the set $\{10,8\}$ is a trivial block. In what follows, we will often be interested in the vertices of such non-trivial blocks and toward this we write $\nontrivialblocks{}$ to denote the set of all vertex sets of non-trivial blocks in $G$. From these definitions, it follows that trees (which are cycle free) do not have any non-trivial blocks. It also follows that two blocks can share at most one vertex; we refer to such shared vertices as  \emph{cut} vertices. In Fig.~\ref {fig:true_graph (zero_corrupted)}, the vertices $4 \mbox{ and } 10$ are cut vertices. The vertices in $\nontrivialblocks{}$ which are not cut vertices are referred to as \emph{non-cut} vertices. In Fig.~\ref {fig:true_graph (zero_corrupted)}, the vertex $1$ is a non-cut vertex.

  With these definitions, we now introduce a novel representation for a graph $G$ that will be crucial to stating our results. This representation is a tree-structured graph $\bctree{(G)}$, which we call the {\em articulated set tree} of $G$, whose vertices correspond to (a) non-trivial blocks in $G$, and (b) vertices in $G$ that are not a member of any non-trivial blocks.  Vertices in this tree-structured representation are connected by edges if the corresponding sets in $G$ either share a vertex or are connected by a single edge. The vertices in the original graph that are responsible for the edges in the  representation are called articulation points\footnote{Articulation (points) vertices are cut vertices that separate non-trivial blocks from the rest of the graph.}. We formally define this below. For example, for $G$ in Fig.~\ref {fig:true_graph (zero_corrupted)}, the articulated set tree is illustrated in Fig.~\ref{fig:corresponding_set_tree}, where the sets $\{1,2,3,4\}$ and $\{17,18,19\}$ are associated to vertices in the articulated set tree representation, and $4,6,7,14,$ and $17$ are examples of articulation points. 

    \begin{definition}[Articulated Set Tree]
      \label{def:art-set-graph}
      For an undirected graph $G = (V,E)$, the {\bf articulated set tree}  $\bctree{(G)}$ is a tuple $(\mathcal{P}, \mathcal{E}, \mathcal{A})$ where (a) the set $\mathcal{P} = \left\{ B: B\in \nontrivialblocks{} \right\} \cup \left\{\{v\}: v\in V \setminus \cup_{B\in \mathcal{B}^{\textsc{nt}}}B \right\}$; (b) an edge $\{P, P'\}\in \mathcal{E}$ if and only if (i) vertices $v,v' \in V$ are such that $v \in P, v' \in P'$, and $\{v,v'\} \in E$ or (ii) there exists a vertex $v \in V$ such that $v \in P \cap P'$, and (c) the articulation function $\mathcal{A}: \mathcal{E}\to V\times V$ returns the articulation points of each edge. 
\end{definition}

  Notice that the articulated set tree (AST), as the name suggests, is indeed a tree. Otherwise, by definition, the set of non-trivial blocks $\nontrivialblocks{}$ would be incorrect (we show this formally in Lemma~\ref{lemma:AST_is_a_Tree} in the Appendix). Readers familiar with graph theory may have observed that the AST representation is quite similar to the block-cut tree representation~\citep[see e.g.,][]{harary2018graph, biggs1986graph}, but unlike a block-cut tree, the subgraph associated with any non-trivial block does not matter in the articulated set tree. \\

  We will now define the equivalence class of graphs up to which robust recovery is possible. Let $L(G)$ denote the set of all leaves in $G$ (i.e., all vertices of degree one). A subset $R\subset L(G)$ is said to be \emph{remote} if no two elements of $R$ share a common neighbor. Let $\mathcal{R}$ be the set of all remote subsets of $L(G)$. For each $R\in \mathcal{R}$, define a graph $G_R$ on $V$ by exchanging each vertex in $R$ with its (unique) neighbor.

    \begin{figure*}
    \begin{subfigure}[b]{0.325\textwidth}
  \centering
    \includegraphics[width=3.8cm, height=2.1 cm]{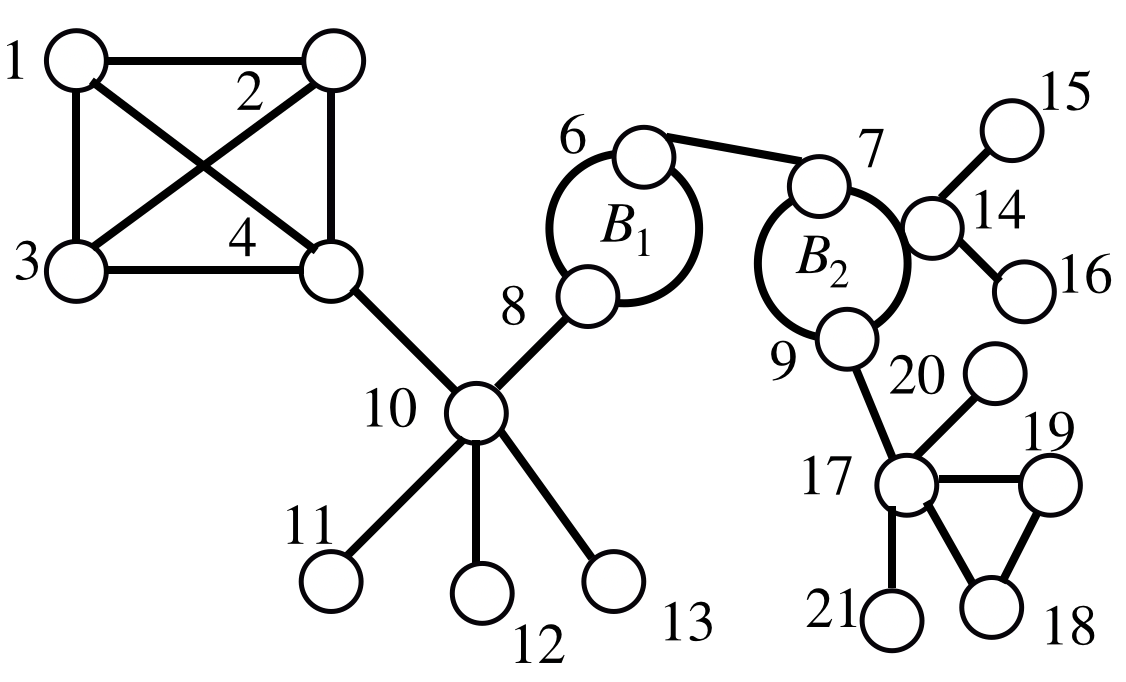}
  \caption{$\gtrue{}$}
    \label{fig:true_graph (zero_corrupted)}
  \end{subfigure}
  \begin{subfigure}[b]{0.325\textwidth}
  \centering
    \includegraphics[width=3.9cm, height=2.5 cm]{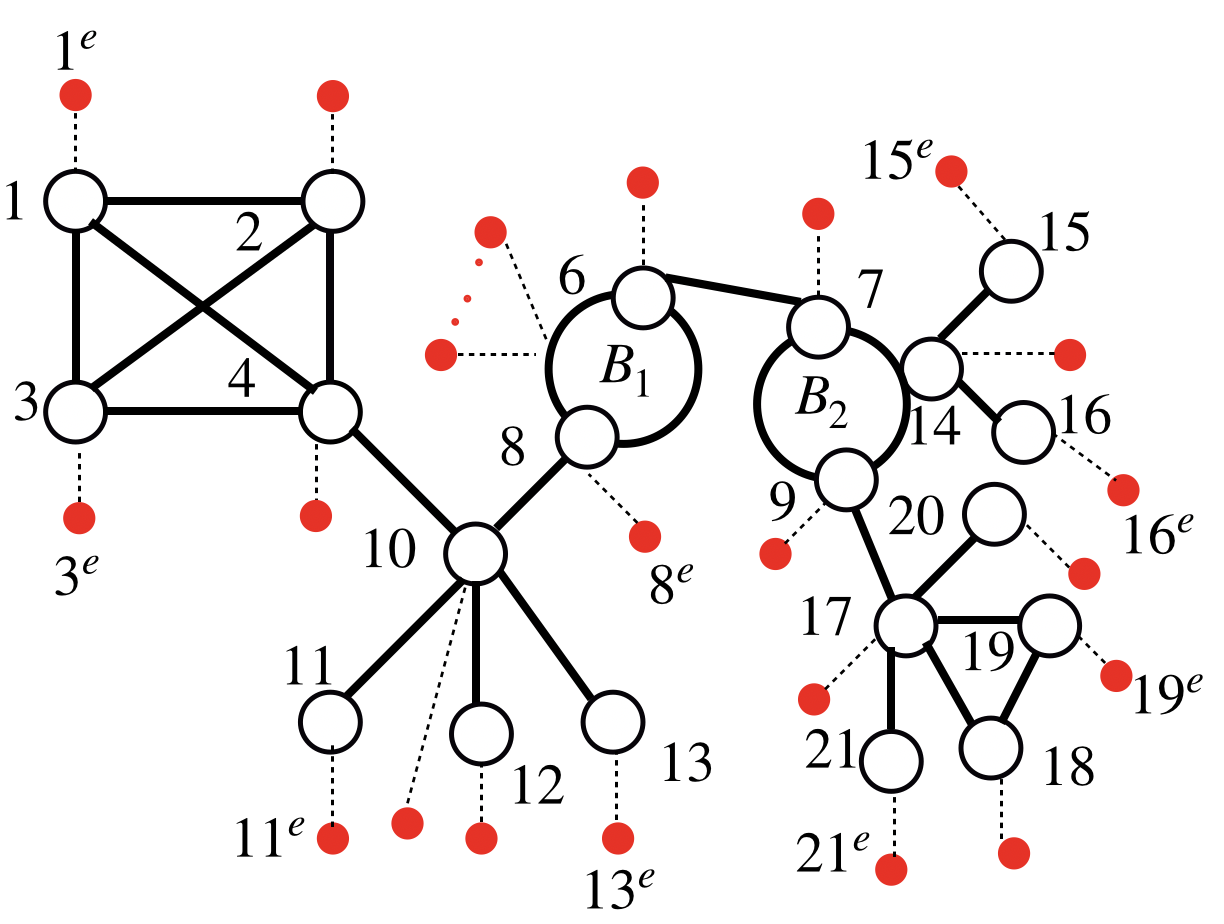}
\caption{$\gcombined{}$}
    \label{fig:parttially_crrptd}
  \end{subfigure}
  \begin{subfigure}[b]{0.325\textwidth}
  \centering
    \includegraphics[width=3.4 cm, height=2.5 cm]{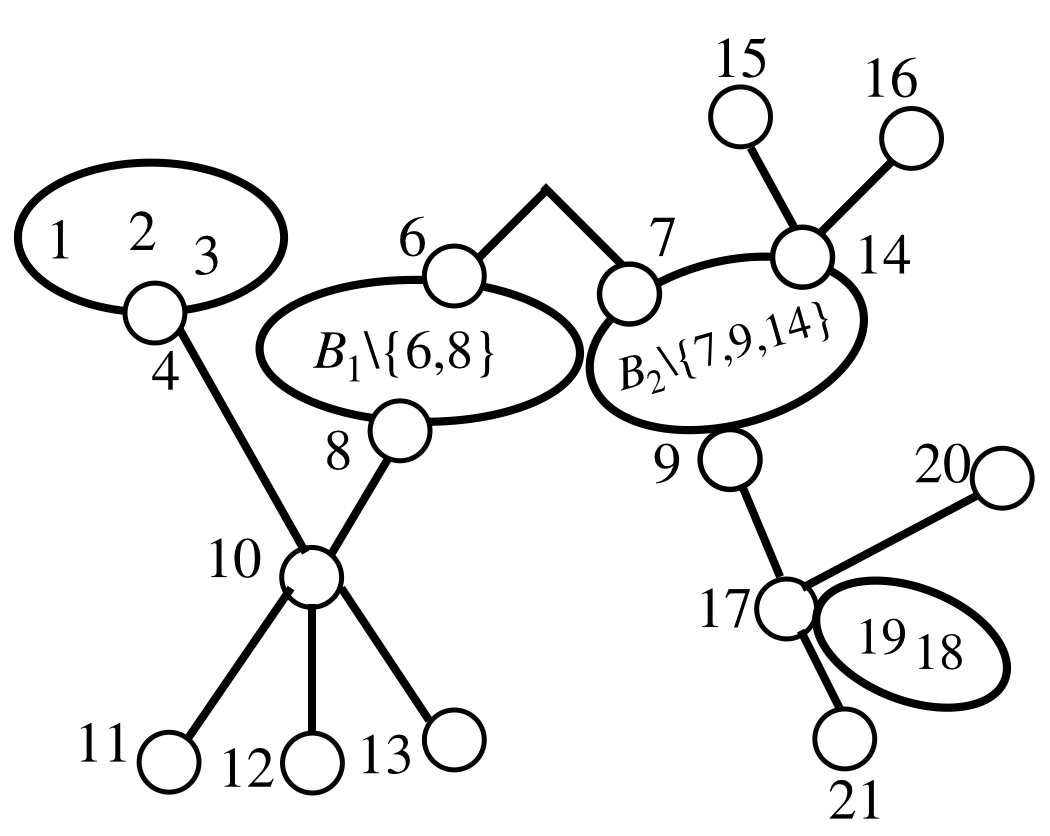}
  \caption{$\bctree{(\gtrue{})}$}
    \label{fig:corresponding_set_tree}
  \end{subfigure}

  \caption{ (a) a true underlying graph where both $B_1 \cup \{6,8\}$ ($B_2 \cup \{7,9\}$) are non-trivial blocks where $B_1$ ($B_2$) is an arbitrary set of vertices such that the subgraph on $B_1 \cup \{6,8\}$ ($B_2 \cup \{7,9\}$) is a biconnected component, (b) joint graph $\gcombined{}$; noisy vertices associated with the non-trivial blocks containing $B_1$ and $B_2$, and some other vertices are not numbered to reduce the clutter, and (c) the articulated set tree $\bctree{(\gtrue{})}$.}
    \label{fig:intro_figure_Robust_Problem}
\end{figure*}

\begin{definition}[Equivalence Relation, $\sim$]
\label{def:equiv_rel}
Two graphs $G, H$ are said to be equivalent if and only if $\exists R\in \mathcal{R}\mbox{ such that } \bctree{(G_R)} = \bctree{(H)}$. Symbolically, we write as $H \sim G$. 
\end{definition}

   We let $[G]$ denote the equivalence class of $G$ with respect to $\sim$. It is not hard to verify that Definition \ref{def:equiv_rel} is a valid equivalence relation. Furthermore, it can be readily checked that this notion of equivalence subsumes the ones defined for trees in \cite{katiyar2019robust, casanellas2021robust}. Fig.~\ref{fig:three_graphs_from_same_EC} illustrates three graphs from the same equivalence class. Notice that $G_1$ can be constructed from $G$ by: (i) exchanging the labels between  the leaf vertices $\{13, 17, 15\}$ with their corresponding neighbors $\{10, 21, 14\}$;  (ii) adding an edge $\{2,4\}$ inside a non-trivial block. Similarly, $G_2$ can be constructed from $G$ by: (i) exchanging the labels between the leaf vertices $\{11, 20\}$ with their corresponding neighbors $\{10,21\}$; (ii) removing an edge $\{1,4\}$ from a non-trivial block\footnote{Notice that the sets $\{13, 17, 15\}$ and $\{11, 20\}$ are remote according to Definition~\ref{def:equiv_rel}.}. Therefore, for any two graphs in the equivalence class, the non-cut vertices of any non-trivial block remain unchanged, whereas, the edges in the non-trivial block can be arbitrarily changed; the labels of the leaves can be swapped with their neighbor. In the following we will show that our identifiability result also complements the equivalence class. Finally, notice that any graph whose AST is equal to $\bctree{(G)}$ is in $[G]$. Hence, with a slight abuse of notation, we also let $\bctree{(G)}$ to denote $[G]$.

 \begin{figure*}
  \begin{subfigure}[b]{0.325\textwidth}
  \centering
    \includegraphics[width=4cm, height=2.2 cm]{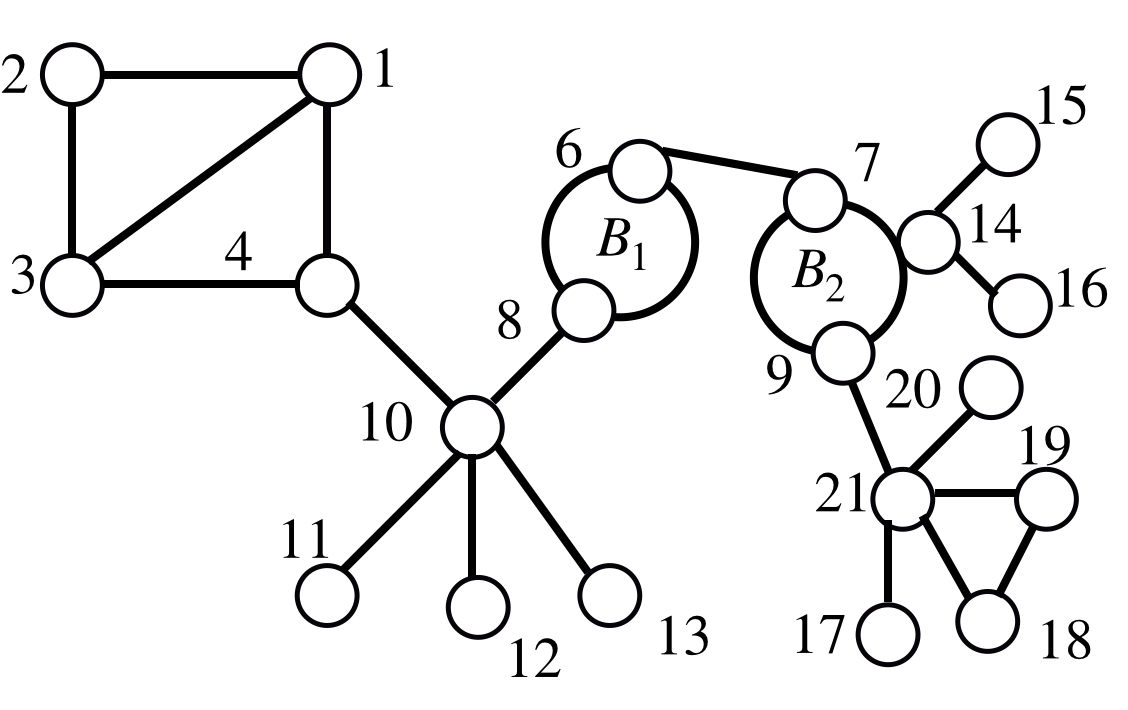}
  \caption{$G$}
    \label{fig:G_EC_1}
  \end{subfigure}
  \begin{subfigure}[b]{0.325\textwidth}
  \centering
    \includegraphics[width=4cm, height=2.2 cm]{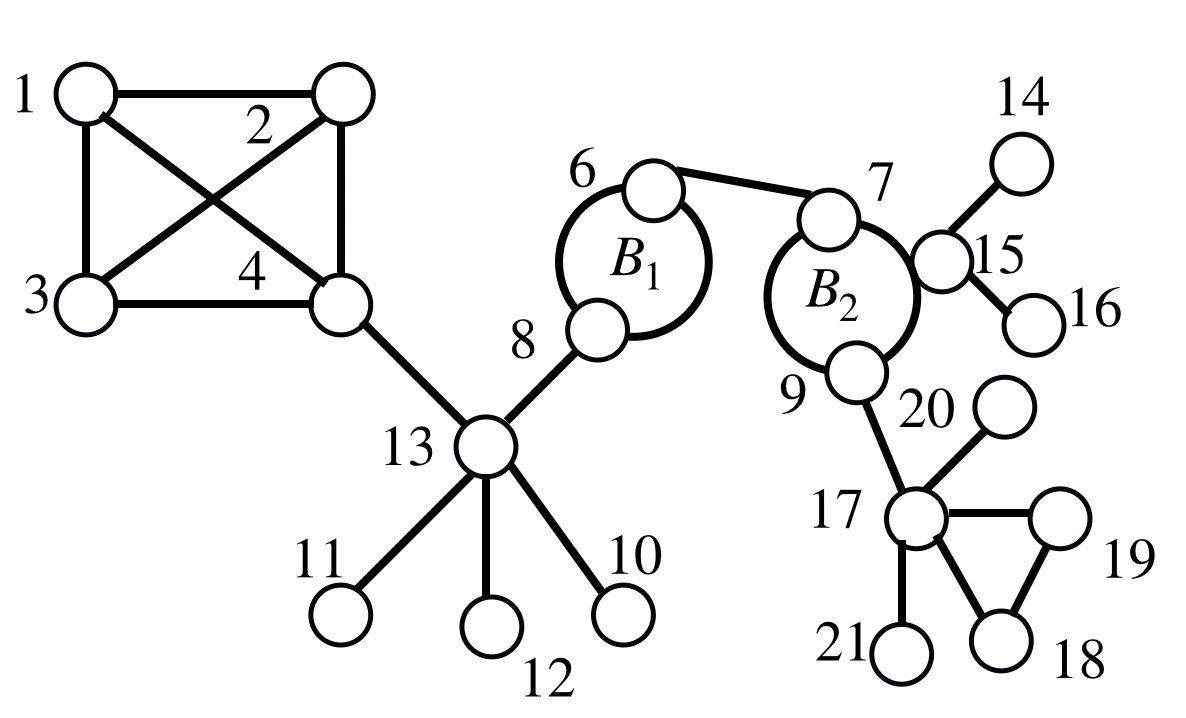}
  \caption{$G_1$}
    \label{fig:G_EC_2}
  \end{subfigure}
      \begin{subfigure}[b]{0.325\textwidth}
  \centering 
    \includegraphics[width=4cm, height=2.2 cm]{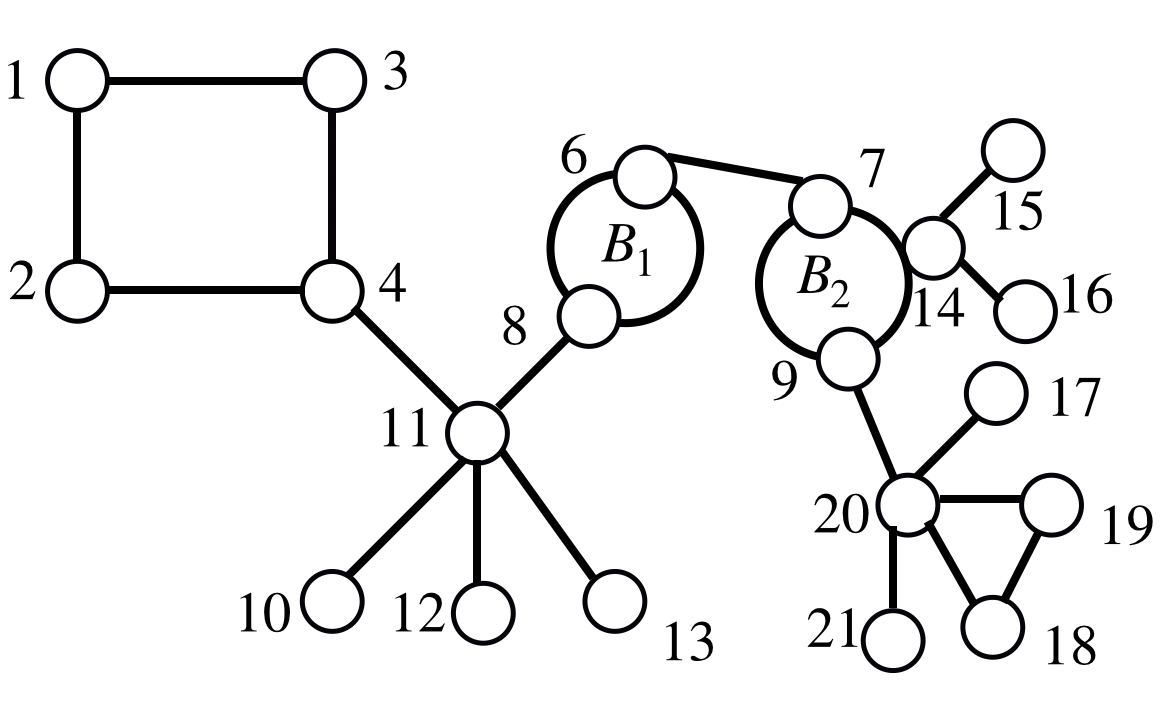}
  \caption{$G_2$}
    \label{fig:G_EC_3}
  \end{subfigure}

  \caption{ An illustration of three graphs from the same equivalence class of $G$ in Fig.~\ref{fig:true_graph (zero_corrupted)}. For all three graphs, $B_1 \cup \{6,8\}$ ($B_2 \cup \{7,9\}$) can have any induced subgraph as long as subgraphs on $B_1 \cup \{6,8\}$ ($B_2 \cup \{7,9\}$) is a biconnected component.}
    \label{fig:three_graphs_from_same_EC}
\end{figure*}

  We now state our identifiability result. This result establishes that there exists at least one (and most likely several) graphs whose true covariance matrices, under the noise model of Subsection~\ref{subsec: robust model selection}, will result in the same observed covariance matrix $\Sigma^o$.

\setcounter{theorem}{0}

\begin{theorem}[Identifiability] \label{thm: identifiability}
Fix a covariance matrix $\Sigma^*$ whose conditional independence structure is given by the graph $G$. Fix $D$, where $D=\textrm{diag} (D_{11},\ldots,D_{pp})\geq 0$. Let $\Sigma^{o}=\Sigma^*+D$. Then, there exists at least one $H\in [G]$ such that $\Sigma^{o}$ can be written as $\Sigma^{q}+D^{q}$, where $D^{q}$ is a diagonal matrix, and the sparsity pattern of $(\Sigma^{q})^{-1}$ is described by $H$.

\end{theorem}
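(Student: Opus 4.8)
The plan is to argue by explicit construction: given the true pair $(\Sigma^*, D)$ with $\Sigma^*$ Markov to $G$, I would build a graph $H \in [G]$, a covariance $\Sigma^q \succ 0$ Markov to $H$, and a nonnegative diagonal $D^q$ with $\Sigma^q + D^q = \Sigma^o$. The organizing observation is that the off-diagonal entries of any admissible $\Sigma^q = \Sigma^o - D^q$ are forced to coincide with those of $\Sigma^o$ (hence of $\Sigma^*$); only the diagonal is free. I would therefore track how redistributing diagonal mass moves the zero pattern of $(\Sigma^q)^{-1}$ via the Sherman--Morrison identity: adding $d_v$ at a single vertex $v$ perturbs the precision matrix by a rank-one term supported on the closed neighborhood $N_v \cup \{v\}$, so the only fill-in it can create is a clique on $N_v$.

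First I would classify each noise-bearing vertex by its role in $\bctree{(G)}$. (i) If $v$ is a leaf, the perturbation is confined to the single edge $\{v, u\}$ and alters nothing, so its noise may be left in place. (ii) If $v$ is a non-cut vertex of a non-trivial block $B$, then $N_v \subseteq B$, so the fill-in only adds edges internal to $B$; since the articulated set tree is insensitive to block internals, the perturbed sparsity pattern is still the edge set of a graph with the same AST, i.e.\ a member of $[G]$, and that noise can be reinterpreted as signal (setting the corresponding $D^q$ entry to zero). (iii) The genuine difficulty is a cut vertex or a tree-internal vertex, whose neighbors straddle distinct components of $\bctree{(G)}$; there the clique fill-in on $N_v$ would merge AST nodes and push the sparsity pattern out of $[G]$, so to obtain an alternative explanation this noise must be relocated rather than reinterpreted.

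The key local lemma handles case (iii) through a single leaf--neighbor swap. Let $\ell$ be a leaf with neighbor $u$, let $\beta = \Sigma^*_{\ell u}/\Sigma^*_{uu}$ and $\tau^2 = \Sigma^*_{\ell\ell} - \beta^2\Sigma^*_{uu} > 0$ be $\mathrm{Var}(X_\ell\mid X_u)$. Using the Markov chain $\ell - u - (\text{rest})$, every cross-covariance factors as $\Sigma^*_{\ell w} = \beta\,\Sigma^*_{uw}$. I would exhibit the swapped model in which $u$ becomes a leaf of $\ell$: set $D^q_\ell = \tau^2 + D_\ell$ and $D^q_u = 0$, keep the remaining diagonal entries, and verify that the resulting $\Sigma^q$ is exactly the covariance of the Gaussian graphical model on the swapped graph in which the subvector on $\{\ell\}\cup(\text{rest})$ is distributed as $(\beta X_u, X_{\text{rest}})$ and $X'_u$ is a noisy child of $X'_\ell$ with conditional variance $D_u$. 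Because the construction is generative, positive-definiteness of $\Sigma^q$ is automatic; nonnegativity holds since $D^q_\ell = \tau^2 + D_\ell \ge 0$; and the only substantive requirement is $D_u > 0$, which is precisely the statement that relocation is possible exactly when the troublesome vertex carries noise. Rescaling $X_u$ by $\beta$ preserves the zero pattern of the precision among the rest, so every remaining edge of $G$ is retained and the swapped graph lies in $[G]$ via the remote set $R = \{\ell\}$.

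To finish I would assemble these moves globally. The block-internal reinterpretations of (ii) and the leaf swaps of (iii) act on disjoint neighborhoods once the swapped leaves are chosen to form a remote set --- no two sharing a neighbor --- which is exactly the condition built into the relation $\sim$; this guarantees the rank-one corrections commute and that the simultaneous diagonal redistribution is well defined and still yields $\bctree{(H)} = \bctree{(G)}$. I expect the main obstacle to be maintaining all three constraints at once, namely $D^q \ge 0$, $\Sigma^q \succ 0$, and $\bctree{(H)} = \bctree{(G)}$, when several noisy internal vertices interact and when some coordinates are observed noiselessly ($D_{vv} = 0$), since such coordinates cannot absorb additional variance and therefore restrict which swaps are admissible. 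The generative viewpoint is what tames the positive-definiteness bookkeeping, and the remote-set condition is what tames the global consistency; the fallback $H = G$ with $(\Sigma^q, D^q) = (\Sigma^*, D)$ always furnishes the required existence, while the relocations above produce the additional members of $[G]$ that make the ambiguity unavoidable.
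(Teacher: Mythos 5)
Your proposal is correct and takes essentially the same route as the paper: both decompose the noise as $D=D^{(1)}+D^{(2)}$, absorbing into the signal covariance $\Sigma^q=\Sigma^*+D^{(1)}$ exactly the noise sitting on non-cut vertices of non-trivial blocks (so that the resulting precision-matrix fill-in stays inside blocks and the articulated set tree is unchanged), keeping $D^q=D^{(2)}$ diagonal, and handling leaves/cut vertices via the leaf--neighbor swap with the same condition $D_u>0$ (which the paper imports from \cite{katiyar2019robust} and you re-derive generatively). The only differences are in execution, not approach: you verify the block-absorption vertex-by-vertex with Sherman--Morrison rank-one updates, whereas the paper computes $(\Sigma^*+D^{(1)})^{-1}=(I+K^*D^{(1)})^{-1}K^*$ in block form on a canonical special case and then extends by analogy.
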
 

  This theorem is proved in Section~\ref{sec:identifiblty--proof}. Notice that this indentifiability result shows it is impossible to uniquely recover $G$ as there are other confounding graphs whose noisy observations would be indistinguishable from those of $G$. However, this theorem does not rule out the possibility of recovering the equivalence class to which $G$ belongs since all the confounding examples are confined to $[G]$. In Section~\ref{sub-sec:algo}, we devise an algorithm that precisely does this. Before we conclude this section, we introduce the notion of partial structure recovery and discuss how recovering the equivalence class still reveals useful information about the true graph. \\

 {\bf Partial Structure Recovery of $G$.}
Given the noise model described in Subsection~\ref{subsec: robust model selection} we know that any graph is only identifiable upto the equivalence relation in Definition~\ref{def:equiv_rel}. This is not only the best one can do (by Theorem~\ref{thm: identifiability}), but also preserves useful \textit{partial} structure of the graph. In particular, such a partial structure recovery is able to identify the (non-cut) constituents of the non-trivial blocks and the set of leaf vertices (and  neighbors thereof).  As the following examples illustrate, we conclude this subsection by arguing that \textbf{even such partially recovered graphs} are instrumental in several application domains.

\begin{enumerate}
    \item \textbf{Electrical distribution networks} usually have radial (or globally tree-like) network structures. Recently, several parts of such networks have become increasingly locally interconnected due to the adoption of technologies like roof-top solar panels and battery power storage that enable more flexible power flow. Nonetheless, practitioners critically rely on (learning) the global structure for most operations and maintenance tasks, such as state estimation, power flow, and cybersecurity.

    \item \textbf{Neuronal networks.} Network models are commonly used to describe structural and functional connectivity in the brain \cite{sporns2018graph}. An important property of networks that model the brain is their modular structure; modules or communities  correspond to clusters of nodes that are densely connected (and are presumably functionally related). Such modular structure has long been regarded as a hallmark of many complex systems; see \cite{herbert1962architecture} for more details. Module detection, that is, understanding which nodes belong to which modules can yield important insights into how networks function and also uncover a network's latent community structure.
\end{enumerate}

\subsection{The Robust Model Selection Algorithm}
\label{sub-sec:algo}

We now present an algorithm that can recover the partial structure of a graph $G$ for which only noisy samples are available based on the setup from Subsection~\ref{subsec: robust model selection}. Before we describe our algorithm, we introduce a few more concepts that will play a key role. We start with a well-known fact about the factorization of pairwise correlations for a faithful\footnote{The global Markov property for GGMs ensures that graph separation implies conditional independence. The reverse implication need not to hold. However, for a faithful GGM, the reverse implication does hold.} Gaussian graphical model.
\begin{fact}[see e.g., \cite{soh2014testing}]
\label{fact:sepration and pairwise correlation factorization (faithful_graph)}
For a faithful Gaussian graphical model, $X_i \indep X_k \vert X_j$ if and only if $\rho_{ik}=\rho_{ij}\times \rho_{jk}$.\vspace{-2mm}
\end{fact}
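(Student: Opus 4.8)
The plan is to reduce the claim to a one-line Schur-complement computation on the $3\times 3$ marginal of $(X_i, X_j, X_k)$, exploiting the special structure of Gaussian conditionals. Since any marginal of a Gaussian vector is Gaussian, the joint law of $(X_i, X_j, X_k)$ is itself Gaussian, and the conditional independence $X_i \indep X_k \mid X_j$ depends only on this marginal. First I would recall the standard Gaussian fact that the conditional distribution of $(X_i, X_k)$ given $X_j$ is Gaussian with a covariance matrix that is independent of the conditioning value; hence $X_i \indep X_k \mid X_j$ holds if and only if the off-diagonal entry of this conditional covariance vanishes, because for jointly Gaussian variables uncorrelatedness is equivalent to independence.

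Next I would compute that conditional covariance explicitly. As correlations are invariant under positive rescaling of each coordinate, I may assume $X_i, X_j, X_k$ have unit variance, so the relevant marginal covariance is the correlation matrix
\[
R = \begin{pmatrix} 1 & \rho_{ij} & \rho_{ik} \\ \rho_{ij} & 1 & \rho_{jk} \\ \rho_{ik} & \rho_{jk} & 1 \end{pmatrix}.
\]
Writing $A=\{i,k\}$ and $B=\{j\}$, the conditional covariance of $(X_i,X_k)$ given $X_j$ is the Schur complement
\[
R_{AA} - R_{AB} R_{BB}^{-1} R_{BA} = \begin{pmatrix} 1-\rho_{ij}^2 & \rho_{ik}-\rho_{ij}\rho_{jk} \\ \rho_{ik}-\rho_{ij}\rho_{jk} & 1-\rho_{jk}^2 \end{pmatrix},
\]
whose off-diagonal entry is exactly $\rho_{ik}-\rho_{ij}\rho_{jk}$ (equivalently, the numerator of the partial correlation $\rho_{ik\cdot j} = (\rho_{ik}-\rho_{ij}\rho_{jk})/\sqrt{(1-\rho_{ij}^2)(1-\rho_{jk}^2)}$). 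Combining the two steps gives $X_i \indep X_k \mid X_j$ if and only if $\rho_{ik}-\rho_{ij}\rho_{jk}=0$, i.e.\ $\rho_{ik}=\rho_{ij}\rho_{jk}$, which is the claim.

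I do not expect a genuine obstacle, since the content is the displayed Schur complement; the points to state with care are (i) that for Gaussian vectors a zero conditional covariance upgrades to full conditional independence, which is exactly what turns the algebraic identity into a conditional-independence statement, and (ii) nondegeneracy, namely that the partial-correlation denominator $\sqrt{(1-\rho_{ij}^2)(1-\rho_{jk}^2)}$ is strictly positive, which holds because positive definiteness of $\Sigma$ forbids $|\rho_{ij}|=1$ or $|\rho_{jk}|=1$. Finally I would remark that the algebraic equivalence holds for any nondegenerate Gaussian triple; the faithfulness hypothesis is used only to identify this probabilistic conditional independence with the graph-separation reading of $\indep$ from the preliminaries, so that the factorization test can subsequently be read off the graph.
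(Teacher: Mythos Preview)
Your proof is correct and entirely standard: the Schur-complement computation on the $3\times 3$ correlation matrix is exactly the right way to see that the partial correlation $\rho_{ik\cdot j}$ vanishes if and only if $\rho_{ik}=\rho_{ij}\rho_{jk}$, and for Gaussians this is equivalent to the conditional independence $X_i\indep X_k\mid X_j$. Your care with the nondegeneracy of the denominator and the role of faithfulness (needed only to pass from probabilistic conditional independence to graph separation, not for the algebraic equivalence itself) is well placed.

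There is nothing to compare against: the paper does not prove this statement at all. It is recorded as a \emph{Fact} with a citation to \cite{soh2014testing} and is used as a black box throughout (in the subroutines that test separation via $d_{ij}=d_{ik}+d_{jk}$). Your argument would serve perfectly well as the omitted justification.
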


We now define information distances.

\setcounter{theorem}{2}
\begin{definition}[Information distances]
\label{def:information_distances}
 For $(X_1,\ldots,X_p)\sim \mathcal{N}(\mathbf{0},\Sigma)$, the information distance between $X_i$ and $X_j$ is defined by $d_{ij}\triangleq-\log \vert \rho_{ij} \vert \geq 0,$ where $\rho_{ij}$ is the \textit{pairwise correlation coefficient}  between $X_i$ and $X_j$; that is
    $\rho_{ij} = \Sigma_{ij} / \sqrt{\Sigma_{ii} \Sigma_{jj}}.$
\end{definition}

 For tree-structured Gaussian graphical models, the strength of the correlation dictates the information (or graphical) distance between vertices $i$ and $j$. Higher the strength, the smaller is the distance, and vice versa. In fact, the information distance defined this way is an additive metric on the vertices of the tree. Although the graphs we consider are not necessarily trees, we still refer to this quantity as a {distance} throughout the paper for convenience. We now define the notion of ancestors for a triplet of vertices using the notion of minimal mutual separator.

\begin{definition}[Minimal mutual separators, Star triplets, Ancestors]
\label{def:minimal-mut-sep}
Fix a triplet of vertices $U \in {V \choose 3}$. A vertex set $S \subseteq V$ is called a mutual separator of $U$ if $S$ separates each pair $i, j \in U$; that is, every path $\pi \in \mathcal{P}_{ij}$ contains at least one element of $S$. The set $S$ is called a {\em minimal mutual separator} of the triplet $U$ if no proper subset of $S$ is a mutual separator of $U$. We let $S_{\rm min}(U)$ denote the set of all minimal mutual separators of $U$. $U$ is said to be a {\em star triplet} if $\left| S_{\rm min}(U) \right| = 1$ and the separator in $S_{\rm min}(U)$ is a singleton. The unique vertex that mutually separates $U$ is called the {\em ancestor} of $U$. 
\end{definition}

 For instance, for the graph in Fig. \ref{fig:graph_with_non_unique_mms}, the set $\{2,4,7,8,3,5\}$ is a mutual separator for the triple $\{1,6,9\}$. Further notice that minimal mutual separator set may not be unique for a triplet: here, the sets $\{2,3,7\}$ and $\{4,5,8\}$ are minimal mutual separators of the triple $\{1,6,9\}$. In Fig.~\ref{fig:true_graph (zero_corrupted)}, for the triplet $\{1,11,12\}$, minimal mutual separator $S_{\rm min}\left(\{1,11,12\}\right)= \{10\}$. Further, notice that an ancestor of $U$ can be one of the elements of $U$. In Fig.~\ref{fig:true_graph (zero_corrupted)}, the vertex $\{10\}$ is the ancestor of triplet $\{4,10,11\} \triangleq U$. Notice that for triplet $U$, the distance between $10$ and the ancestor is zero. \\

 \begin{wrapfigure}{r}{0.18\textwidth}
\includegraphics[width=1\linewidth]{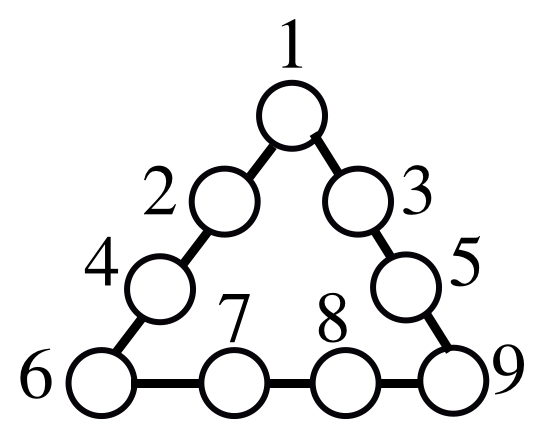} 
\caption{Graph with multiple minimal mutual separators.}
\label{fig:graph_with_non_unique_mms}
\end{wrapfigure}

 For a star triplet  $\{i,j,k\}$ with ancestor $r$,  it is clear that the following holds true based on the relationship between graph separation and conditional independence: $X_i \indep X_j \mid X_r$, $X_i \indep X_k \mid X_r$, and $X_j \indep X_k \mid X_r$. As a consequence, from Fact \ref{fact:sepration and pairwise correlation factorization (faithful_graph)} and Definition \ref{def:information_distances}, the pairwise distances $d_{ij}, d_{ik}, \mbox{ and } d_{jk}$ satisfy the following equations:  $d_{ij}=d_{ir}+d_{rj}$, $d_{ik}=d_{ir}+d_{rk}$, and $d_{jk}=d_{jr}+d_{rk}$. Some straightforward algebra results in the following identities that allows us to compute the distance between each vertex in $\{i,j,k\}$ and the ancestor vertex $r$. In particular, for any ordering $\{x,y,z\}$ of the set $\{i,j,k\}$ notice that the following is true:

\begin{align}
    \label{eq:ancestor_distance_triplet}
    d_{xr} = 0.5\times(d_{xy} + d_{xz} - d_{yz})
\end{align}

 For a triplet $U=\{i,j,k\}$, we will let $d_{i}^{U} \triangleq \frac{1}{2}(d_{ij}+d_{ik}-d_{jk})$. If $U$ is a star triplet, then $d_i^U$ reveals the distance between $i$ and the ancestor of $U$. However, we do not restrict this definition to star triplets alone. When $U$ is not a star triplet, $d_i^U$ is some arbitrary (operationally non-significant) number; in fact, for non-star triplets this quantity may even be negative. Notice that all vertex triplets in a tree are star triplets. Hence, for a tree-structured graphical model, we can choose any arbitrary triplet  and if we can find a vertex $r$ for which $d_{i}^U = d_{ir}, i\in U$, then we can identify the ancestor of $U$. If such a vertex does not exist, we can deduce the existence of a latent ancestor. Therefore, iterating through all possible triplets, one can recover the true (latent) tree structure underlying the observed variables. In fact, several algorithms in the literature use similar techniques to learn trees (see e.g., \cite{saitou1987neighbor, krishnamurthy2012robust, choi2011learning, dasarathy2014data}.   
\subsection{The \mainalgo{} Algorithm}
\label{subsubsec:algo}

  \begin{figure*}
     \centering
     \begin{subfigure}[b]{0.23\textwidth}
         \centering
         \includegraphics[width=\textwidth]{graph_with_noisy_observation.png}
         \caption{}
         \label{fig:par-Corrupt-JointGraph}
     \end{subfigure}
     \hfill
     \begin{subfigure}[b]{0.23\textwidth}
         \centering
         \includegraphics[width=\textwidth]{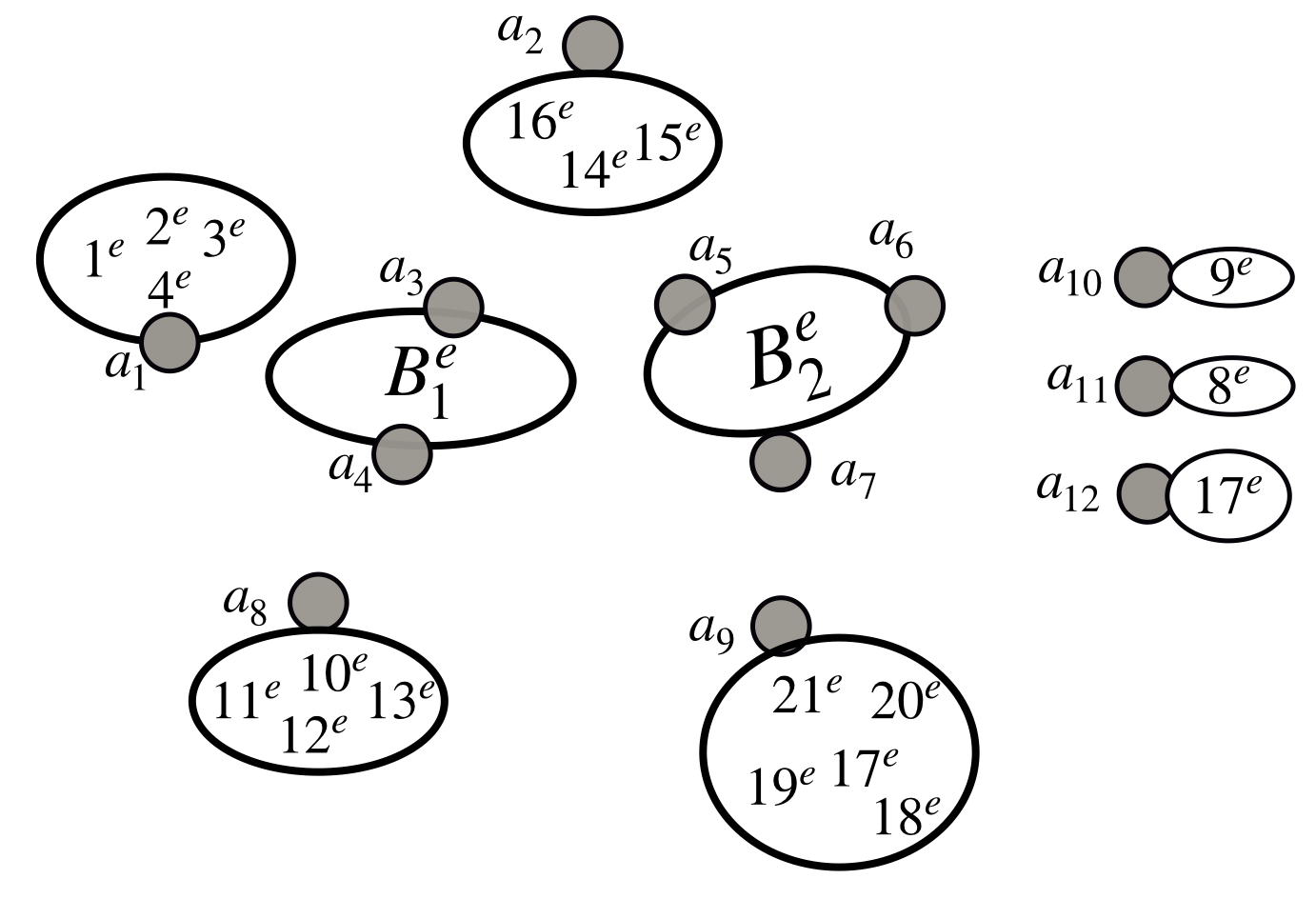}
         \caption{}
         \label{fig:clusters}
     \end{subfigure}
     \hfill
     \begin{subfigure}[b]{0.23\textwidth}
         \centering
         \includegraphics[width=\textwidth]{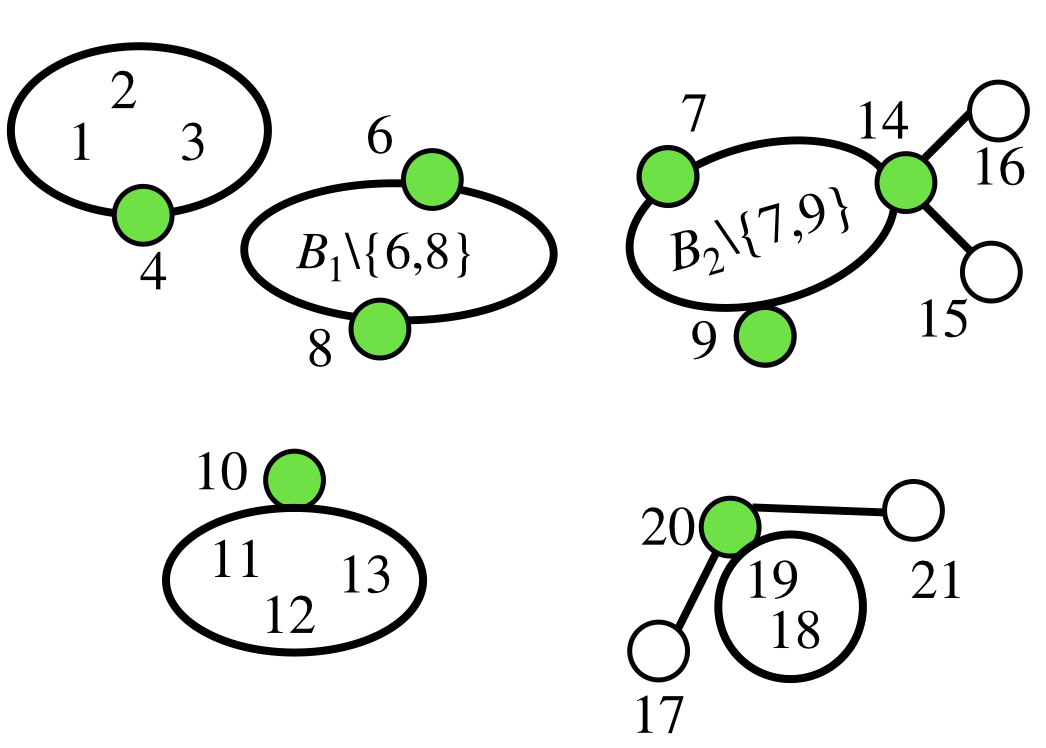}
         \caption{}
         \label{fig:learn-partition-and-artcltn-points}
     \end{subfigure}
          \begin{subfigure}[b]{0.23\textwidth}
         \centering
         \includegraphics[width=\textwidth]{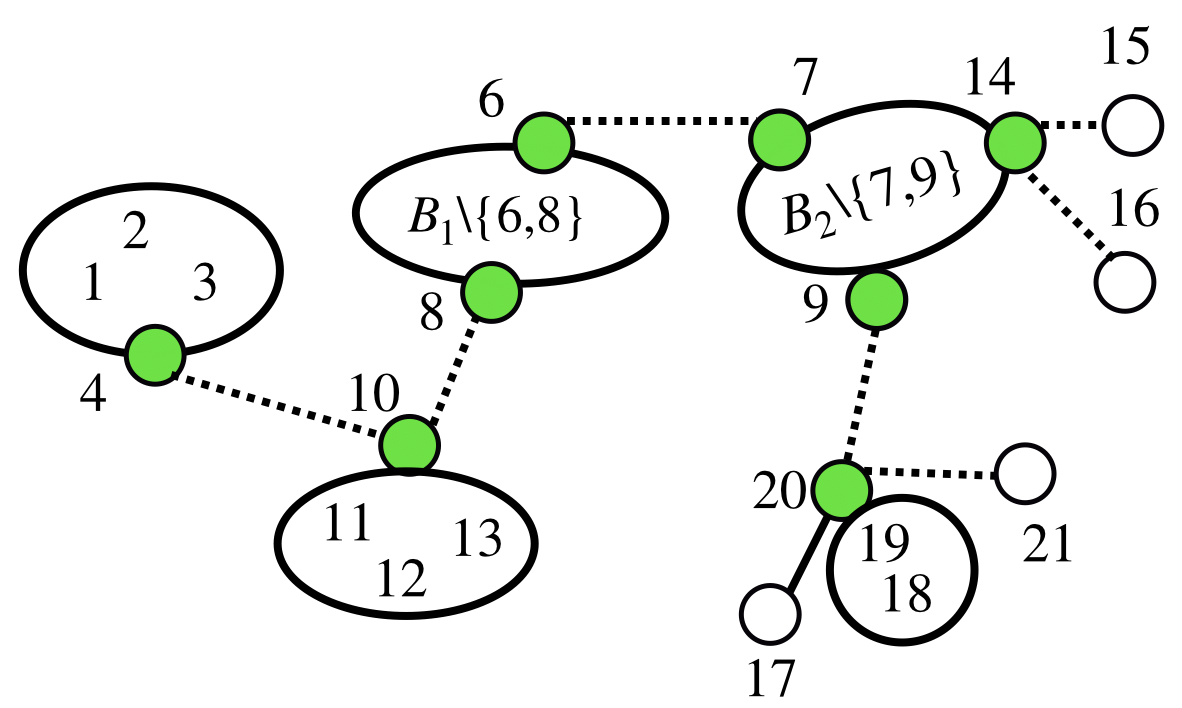}
         \caption{}
         \label{fig:learn-edge-set}
     \end{subfigure}
        \caption{(a) The joint graph $\gcombined{}$, (b) the leaf clusters and internal clusters of $\gcombined{}$; $B_1^e$ ($B_2^e$) denote the set of noisy vertices associated with the vertices in $B_1$ ($B_2$); Also, recall that clusters are a set of vertices; grey vertices are identified but unlabeled articulation points associated with the clusters, (c) non-trivial blocks and trivial blocks along with the identified and labeled articulation points, and (d) the edges between different articulation points.}
        \label{fig:illustrations_of_procedures}
\end{figure*}

In this section, we introduce our algorithm \mainalgo{}, for \underline{No}isy \underline{M}odel selection based on \underline{A}ncestor \underline{D}iscovery, for robust model selection. We describe \mainalgo{}
  in the population setting (i.e., in the infinite sample limit) for clarity of presentation;  the modifications required for the finite sample setting are discussed in Section~\ref{sec:sample-complexity}. In the population setting, \mainalgo{} takes as input the (exact) pairwise information distances $d_{ij}$, for all $i,j$ in the observed vertex set $\vobs{}$, and returns an articulated set tree $\ASTalgo{} \triangleq  (\Palgo{}, \Aalgo{}, \Ealgo{})$ (see Definition~\ref{def:art-set-graph}). A high level overview of the algorithm is given in Algorithm~\ref{algo:gen-frmwrk}. Its operation may be divided into  two main steps: 
(a) {learning $\Palgo{}$ and $\Aalgo{}$}; and (b) {learning $\Ealgo{}$}. These steps are summarized in the following. A formal algorithmic listing and a full description can be found in the Appendix.

 \textbf{(a) Learning $\Palgo{}$ and $\Aalgo{}$.} Inspired by the aforementioned  ancestor based tree reconstruction algorithms,  \mainalgo{} first identifies the ancestors\footnote{We say that a vertex $r$ is an ancestor if there is a triple $U$ such that $r$ is an ancestor of $U$.} in $\gcombined{}$, and learns the pairwise distances between them. Notice that finding the ancestors in $\gcombined{}$ is challenging for the following reasons: (a) since $\gcombined{}$ is not a tree, some vertex triplets are not star triplets
(e.g., $\{1^e,3^e,11^e\}$ in Fig.~\ref{fig:parttially_crrptd}), and (b) a subset of vertices (which may include ancestors) in $\gcombined{}$ are unobserved or latent. Hence, we can not guarantee the identification of a star triplet following the procedure for trees. \mainalgo{} instead uses a novel procedure that compares {\em two triplets} of vertices to identify the ancestors; we call this the \textsc{TIA} (Test Identical Ancestor) test which is defined as follows:

  \begin{algorithm}
	\begin{algorithmic}[1]
  	\caption{\mainalgo{}}
 	\label{algo:gen-frmwrk}
 		\State \textbf{Input:} Pairwise distances $\mathcal{D} = \{d_{ij}\}_{i,j \in \vobs{}}$.
 		\State \textbf{Output:} $\ASTalgo{} \triangleq  (\Palgo{}, \Aalgo{}, \Ealgo{})$. 
   	\State {\color{cyan}\textsc{IdentifyAncestors}}(Subroutine{\color{cyan}~\ref{app:algo--Ident_Anc_Extnd_Dist}}). \textbf{Accepts:} $\mathcal{D}$; \textbf{Returns:} (I) A set $\ancobs{}$ ($\anchid{}$ resp.) of observed (hidden resp.) ancestors, and the corresponding collection of vertex triplets $\tripletCollectionObs{}$ ($\tripletCollectionHid{}$ resp.), and (II) The set of pairwise distances $\{d_{ij}\}$ for each pair $i,j \in \vobs{} \cup \anchid{}$

  \State {\color{cyan}\textsc{LearnClusters}}  (Subroutine{\color{cyan}~\ref{app:algo--Learn_clusters}}). \textbf{Accepts}:  $\ancobs{}, \anchid{}$, and $\mathcal{D}$; \textbf{Returns:}  A collection of (I) leaf clusters $\mathcal{L}$, and 
    (II)   internal clusters $\mathcal{I}$.

 	\State {\color{cyan} \textsc{VertexSet-AST}}(Subroutine{\color{cyan} ~\ref{app:algo--PaLE}}).\textbf{Accepts:} $\mathcal{L}$ and $\mathcal{I}$; \textbf{Returns:} 
 	    (I) the vertex set $\Palgo{}$ (II) the articulation points $\Aalgo{}$, and (III) a \textit{subset} of the edge set $\Ealgo{}$.

 		 	\State {\color{cyan} \textsc{EdgeSet-AST}}(Subroutine{\color{cyan}~\ref{app:algo--EAST}}). \textbf{Accepts:} $\Palgo{}$ and $\Aalgo{}$; \textbf{Returns:} $\Ealgo{}$.
\end{algorithmic}
 \end{algorithm}

\begin{definition}[\textsc{TIA} Test]
\label{def:TIA-test}
The \underline{T}est \underline{I}dentical \underline{A}ncestor (\textsc{TIA}) test accepts a triplet pair $U,W \in {\vobs{} \choose 3}$, and returns \textsc{True} if and only if for all $x \in U$, there exists at least one pair $y,z \in W$ such that $d_x^U+d_y^W=d_{xy}$ and $d_x^U+d_z^W=d_{xz}$.
\end{definition}
In words, in order for a pair of triplets $U, W$ to share an ancestor in $\gcombined{}$, each vertex in one triplet (say, $U$) needs to be separated from at least a pair in $W$ by the (shared) ancestor in $\gcombined{}$. We now describe the fist step of the \mainalgo{} in three following sub-steps: \\

 \underline{Identifying the ancestors in $\gcombined{}$.} In the first sub-step, \mainalgo{} (a) uses the TIA test to create the set $\mathfrak{V} = \{\mathcal{V}\subset {V\choose 3}: \mbox{each $U\in \mathcal{V}$ has the same ancestor
}\}$, (b) it then assigns each triplet collection $\mathcal{V} \in \mathfrak{V}$ to either $\tripletCollectionObs{}$ or $\tripletCollectionHid{}$; the former is the collection of vertex triples whose ancestor is observed and the latter has ancestors that are hidden, and (c) identifies the observed ancestors and enrolls them into a set of observed ancestors $\ancobs{}$. Furthermore, for each collection $\mathcal{V}_i \in \tripletCollectionHid{}$, \mainalgo{} introduces a hidden vertex, and enrolls it in $\anchid{}$ such that, $\vert \anchid{} \vert$ equals to the number of hidden ancestors in $\gcombined{}$. For example, consider the joint graph $\gcombined{}$ in 
Fig.~\ref{fig:par-Corrupt-JointGraph}. In $\gcombined{}$, the vertex $\{4\}$ is the observed ancestor of the pair $\{1^e,4^e,10\}$ and $\{3^e,4^e,8^e\}$, and the vertex $\{8\}$ is the hidden ancestor of the pair $\{3^e,8^e,9^e\}$, $\{1^e,8^e,7^e\}$. Complete pseudocode for this step appears in Subroutine~\ref{app:algo--Ident_Anc_Extnd_Dist} in Appendix. \\

 \underline{Extending the distance set $\{d_{ij}\}_{i,j \in \vobs{}}$.}  In the next sub-step, using pairwise distances $\{d_{ij}\}_{i,j \in \vobs{}}$, and $\anchid{}$, \mainalgo{} learns the following distances: (a) $d_{ij}$ for all $i,j \in \anchid{}$, and (b) $d_{ij}$ for all $i,j \in \anchid{} \cup \vobs{}$. For learning (a), notice from the last sub-step that each $a_i \in \anchid{}$ is assigned to a collection of triplets $\mathcal{V}_i \in \tripletCollectionHid{}$. In order to compute the distance between two hidden ancestors (say $a_p,a_q \in \anchid{}$), the \mainalgo{} chooses two triplets $V_p\in \mathcal{V}_p$ and $V_q\in \mathcal{V}_q$, and computes the set $\Delta_{pq}$ as follows: $\Delta_{pq}= \left\{d_{xy} - (d_{x}^{V_p} + d_{y}^{V_q}):  x\in V_p, y \in V_q\right\}.$ Then, the most frequent element in $\Delta_{pq}$, i.e., ${\rm mode}(\Delta_{pq})$,  is declared as $d_{pq}$. We show in Appendix that \mainalgo{} not only correctly learns the distance set $\{d_{ij}\}_{i,j \in \anchid{}}$, but also learns $d_{ij}$ for any $i \in \anchid{}$ and $j \in \vobs{}$. A pseudocode for this step appears in Subroutine~\ref{app:algo--Ident_Anc_Extnd_Dist} in Appendix. \\

  \underline{Learning $\Palgo{}$ and $\Aalgo{}$.} 
 In the final sub-step, \mainalgo{} learns the clusters of vertices in $\vobs{}\setminus \ancobs{}$ using the separation test in Fact~\ref{fact:sepration and pairwise correlation factorization (faithful_graph)} which eventually lead to finding $\Palgo{}$ and $\Aalgo{}$. Specifically, \mainalgo{} learns (a) all the leaf clusters, each of which is a set of vertices that are separated from the rest of the graph by a single ancestor, and (b) all the internal clusters, each of which is a set of vertices that are separated from the rest of the graph by multiple ancestors. For example, in Fig.~\ref{fig:clusters}, the set $\{17^e,18^e,19^e,20^e,21^e\}$ is a leaf cluster--- separated from the rest of the graph by the (hidden) ancestor $a_9$. The set $B_2^e$ is an internal cluster--- separated from the rest of the graph by the set of ancestors $\{a_5,a_6,a_7\}$. Next, \mainalgo{} uses the clusters to learn $\Palgo{}$ and $\Aalgo{}$ for $\ASTalgo{}$ by applying the TIA test on each cluster to identify the non-cut vertices and potential cut vertices in it. For example, for the leaf cluster $\{17^e,18^e,19^e,20^e,21^e\}$, $18$ and $19$ are non-cut vertices, and a vertex from $17, 20, \mbox{ and } 21$ may be declared as a cut-vertex arbitrarily (see Fig.~\ref{fig:learn-partition-and-artcltn-points}). A pseudocode for this step appears as Subroutine~\ref{app:algo--Learn_clusters} and Subroutine~\ref{app:algo--PaLE} in Appendix. \\

 \textbf{(b) Learning $\Ealgo{}$.}
 In this step, \mainalgo{} learns the edge set $\Ealgo{}$. Notice from Definition \ref{def:art-set-graph} that any two elements of $\Palgo{}$ are connected with each other through their respective articulation points. Hence, in order to learn $\Ealgo{}$, \mainalgo{} needs to learn the neighborhood of each articulation point in $\gtrue{}$. To this end,  \mainalgo{} first learns this neighborhood for each articulation point in $\gtrue{}$ using Fact~\ref{fact:sepration and pairwise correlation factorization (faithful_graph)}. Then, in the next step,  \mainalgo{} creates an edge between two elements of $\Palgo{}$ if the articulation points from each element are neighbors in $\gtrue{}$ (dotted lines in Fig.~\ref{fig:learn-edge-set}). A pseudocode for this step appears as Subroutine~\ref{app:algo--EAST} in Appendix.

 \newcommand{\noncut}{\textrm{non-cut}} 
 \section{Performance Analysis of \mainalgo{} in the Population Setting} 
 \label{sec:theory}
  In this section, we show the correctness of \mainalgo{} in returning the equivalence class of a graph $G$ while having access only to the noisy samples according to the problem setup in Section~\ref{subsec: robust model selection}. We now make an assumption that will be crucial to show the correctness of \mainalgo{}. This is similar to the faithfulness assumption common in the graphical modeling literature  \cite{choi2011learning,kalisch2007estimating, uhler2013geometry}, and like the latter, it rules out ``spurious cancellations''. To that end, let $\mathcal{V}_{\textrm{star}} \subseteq {\vtrue \choose 3}$ be the set of all star triplets in $\gtrue{}$ (see Definition~\ref{def:minimal-mut-sep}). Let $\mathcal{V}_{\textrm{sep}} \subseteq {\vtrue{} \choose 3}$ be the set of triplets $V$ such that one of the vertices in $V$ separates the other two vertices. 
\setcounter{theorem}{0}
\begin{assumption}[Ancestor faithfulness]
\label{assump:anc_faithfulness}
Let $U,W \in {\vtrue \choose 3} \setminus \mathcal{V}_{\rm star} \cup \mathcal{V}_{\rm sep}$. Then,
(i) there are no vertices $x \in U$ and $a \in W$ that satisfy $d_x^U + d_a^W =d_{xa}$, and (ii) there does not exist any vertex $r \in \vtrue$ and $x \in U$ for which the distance $d_{xr}$ satisfies relation in \eqref{eq:ancestor_distance_triplet}. 
\end{assumption}
Notice that Assumption~\ref{assump:anc_faithfulness} is only violated when there are explicit constraints on the corresponding covariance values which, like the faithfulness assumption, only happens on a set of measure zero. We next state the main result of our paper in the population settings.

\setcounter{theorem}{1}

  \begin{theorem}
  \label{thm--main_thm_pop}
Consider a covariance matrix $\Sigma^*$ whose conditional independence structure is given by the graph $G$, and the model satisfies Assumption~\ref{assump:anc_faithfulness}. Suppose that according to the problem setup in Section~\ref{subsec: robust model selection}, we are given pairwise distance $d_{ij}$ of a vertex pair $(i,j)$ in the observed vertex set $\vobs{}$, that is, $d_{ij} \triangleq - {\rm log} |\rho_{ij}|$ where $\rho_{ij} \triangleq \Sigma^o_{ij} / \sqrt{\Sigma^o_{ii} \Sigma^o_{jj}}$. Then, given the pairwise distance set $\{d_{ij}\}_{i,j \in \vobs{}}$ as inputs, \mainalgo{} outputs the equivalence class $[\gtrue{}]$.
 \end{theorem}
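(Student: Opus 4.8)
The plan is to show that the object $\ASTalgo = (\Palgo, \Aalgo, \Ealgo)$ returned by \mainalgo{} coincides with the articulated set tree $\bctree{(\gtrue)}$; since $\bctree{(\gtrue)}$ is, by the convention following Definition~\ref{def:equiv_rel}, a representative of the class $[\gtrue]$, this establishes the claim. The first reduction is to pass from $\gtrue$ to the joint graph $\gcombined$: because the stacked vector $[\mathbf{X};\mathbf{Y}]$ is Markov (and, under Assumption~\ref{assump:anc_faithfulness}, faithful) with respect to $\gcombined$, Fact~\ref{fact:sepration and pairwise correlation factorization (faithful_graph)} turns every conditional-independence query among observed vertices into an exact test on the input distances $\{d_{ij}\}_{i,j\in\vobs}$. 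The key structural observation I would record first is a metric lemma: each noise vertex $i^e$ is a pendant attached only to $i$, so $i$ separates $i^e$ from the rest of $\gcombined$; consequently distances measured between observed leaves are additive along the tree-level skeleton of $\gcombined$, even though distances internal to a non-trivial block are not additive. Put differently, contracting each block to a point turns $\{d_{ij}\}_{i,j\in\vobs}$ into an additive tree metric on a pendant-decorated version of $\bctree{(\gcombined)}$, and this is precisely the structure \mainalgo{} is built to reconstruct.

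Second, I would prove correctness of the \textsc{IdentifyAncestors} subroutine. The workhorse is a soundness-and-completeness statement for the \textsc{TIA} test of Definition~\ref{def:TIA-test}: two observed triplets $U,W$ pass the test if and only if they share a common ancestor in $\gcombined$. For the ``if'' direction, when $U$ and $W$ are star triplets with common ancestor $r$, the identity \eqref{eq:ancestor_distance_triplet} gives $d_x^U = d_{xr}$ and $d_y^W=d_{yr}$; since $r$ separates $x$ from $y$, additivity yields $d_x^U + d_y^W = d_{xr}+d_{ry}=d_{xy}$, so the required pairs exist. The ``only if'' direction is where Assumption~\ref{assump:anc_faithfulness} is essential: it guarantees that the defining equalities $d_x^U+d_y^W=d_{xy}$ cannot hold by accidental cancellation for triplets that are neither star nor separation triplets, so a passing pair genuinely shares a separator. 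From this equivalence, the induced grouping of triplets into $\tripletCollection{}$ is exactly a partition by shared ancestor; sorting each class by whether its separating vertex is itself observed yields the correct split into $\tripletCollectionObs$ and $\tripletCollectionHid$, the correct $\ancobs$, and exactly one fresh hidden vertex per hidden ancestor. I would then verify the distance-extension step by showing that for hidden ancestors $a_p,a_q$ the multiset $\Delta_{pq}$ concentrates its mode at $d_{a_p a_q}$: any cross pair $x\in V_p,\,y\in V_q$ whose connecting path routes through both $a_p$ and $a_q$ contributes $d_{xy}-(d_x^{V_p}+d_y^{V_q})=d_{a_p a_q}$, and a counting argument shows such pairs form the strict majority, so ${\rm mode}(\Delta_{pq})$ returns the correct value; the same device extends distances to all pairs in $\anchid\cup\vobs$.

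Third, with the ancestors and the full distance set in hand, I would verify the remaining subroutines and assemble the tree. \textsc{LearnClusters} uses the separation test of Fact~\ref{fact:sepration and pairwise correlation factorization (faithful_graph)} to group observed vertices by their separating ancestors: a set cut off by a single ancestor is a leaf cluster and one cut off by several ancestors is an internal cluster, and I would show these clusters are in bijection with the leaf groups and the noisy counterparts of the non-trivial blocks of $\gtrue$. Running the \textsc{TIA} test within a cluster then distinguishes the non-cut vertices---whose membership is invariant across $[\gtrue]$---from the candidate articulation points, producing $\Palgo$ and $\Aalgo$ exactly as prescribed by Definition~\ref{def:art-set-graph}. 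Finally, \textsc{EdgeSet-AST} recovers the neighborhood of each articulation point in $\gtrue$ via Fact~\ref{fact:sepration and pairwise correlation factorization (faithful_graph)} and places an edge between two elements of $\Palgo$ precisely when their articulation points are adjacent, matching clause (b) of Definition~\ref{def:art-set-graph}. Combining these identifications gives $\ASTalgo=\bctree{(\gtrue)}$, and the freedom left unresolved---arbitrary edges inside each block and the swapping of each leaf with its neighbor---is exactly the latitude encoded by the remote subsets of Definition~\ref{def:equiv_rel}, so the output is the full class $[\gtrue]$.

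The step I expect to be the main obstacle is the completeness half of the \textsc{TIA} analysis under the simultaneous presence of cycles and latent ancestors. A single triplet cannot certify its ancestor once that ancestor is hidden or the triplet fails to be a star triplet, so the argument must show that comparing \emph{two} triplets is both necessary and sufficient to localize a shared separator in $\gcombined$; this is delicate because for a non-star triplet the quantity $d_x^U$ is operationally meaningless, and one must rule out that two unrelated triplets spuriously satisfy all the \textsc{TIA} equalities. This is precisely the role of Assumption~\ref{assump:anc_faithfulness}, and the crux of the proof is a careful case analysis verifying that, away from the measure-zero exceptional set the assumption excludes, every passing pair is genuinely co-ancestral and every co-ancestral pair passes. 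A secondary difficulty is bookkeeping the correspondence between the hidden ancestors discovered by the algorithm and the true articulation points of $\gtrue$, ensuring that the mode-based distances glue consistently so that the reconstructed skeleton is exactly the articulated set tree rather than some finer or coarser object.
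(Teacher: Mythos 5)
Your overall architecture matches the paper's proof---reduce the theorem to showing $\ASTalgo{}$ coincides with $\bctree{(\gtrue{})}$, prove soundness and completeness of the \textsc{TIA} test, extend distances to hidden ancestors via the mode of $\Delta_{pq}$, then verify clustering, non-cut identification, and edge recovery---but two steps would fail as written. The main gap is the mode test. You assert that the cross pairs $(x,y)\in V_p\times V_q$ whose connecting paths pass through both $a_p$ and $a_q$ ``form the strict majority'' of $\Delta_{pq}$. They do not: the separation structure (Claim~\ref{claim:triplet_sepration_criteria}) guarantees only $4$ of the $9$ entries equal $d_{a_pa_q}$, and this is tight. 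For instance, let $x_3\in U_p$ hang off a vertex $x_3'$ lying strictly between the two ancestors, and symmetrically $y_3\in U_q$ off a vertex $y_3'$; both triplets are still star triplets, yet all five entries involving $x_3$ or $y_3$ are spurious, and they arrive in pairs: $\Delta(x_3,y_1)=\Delta(x_3,y_2)=d_{a_pa_q}-2d_{a_px_3'}$ and $\Delta(x_1,y_3)=\Delta(x_2,y_3)=d_{a_pa_q}-2d_{y_3'a_q}$. If $d_{a_px_3'}=d_{y_3'a_q}$---a coincidence that Assumption~\ref{assump:anc_faithfulness} does not forbid, since these triplets are in $\mathcal{V}_{\rm star}$---then a spurious value also attains multiplicity four and the mode is ambiguous. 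The correct statement, proved in the paper's Lemma~\ref{lemma:learning_pairwise_dist_betn_landmarks}, is weaker and more delicate: the true value attains multiplicity at least four, and a case analysis (on which, if either, ancestor separates $x_3$ from $y_3$) shows no spurious value can reach four, where the final $2{+}2$ coincidence above is excluded only by a measure-zero genericity argument. Your counting argument needs to be replaced by exactly this analysis.

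The second, smaller gap is in your \textsc{TIA} completeness claim. You attribute the ``only if'' direction entirely to Assumption~\ref{assump:anc_faithfulness}, but that assumption constrains only triplet pairs outside $\mathcal{V}_{\rm star}\cup\mathcal{V}_{\rm sep}$. The configuration in which $U$ and $W$ are \emph{both} star triplets but with distinct ancestors $r_u\neq r_w$ is untouched by it, and this is a generic configuration in a loopy $\gcombined{}$. Excluding it requires the structural argument of Claim~\ref{claim:triplet_sepration_criteria}: there exists $u\in U$ with $u\indep r_w\mid r_u$ together with a pair $w_1,w_2\in W$ separated from $u$ by both ancestors, so that $d_{uw_i}=d_{ur_u}+d_{r_ur_w}+d_{r_ww_i}$ for $i=1,2$, and the \textsc{TIA} equalities then fail precisely because $d_{r_ur_w}>0$. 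Without this case your completeness argument has a hole (a passing pair could consist of two legitimately star triplets that do not share a separator); with it, your proof coincides with the paper's.
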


 {\bf Proof Outline.} In order to show that $\mainalgo{}$ correctly learns the equivalence class, it suffices to show that it can correctly deduce the articulated set tree $\ASTalgo{}$. Given this, and the equivalence relation from Definition~\ref{def:equiv_rel}, the entire equivalence class can be readily generated. Our strategy will be to show that \mainalgo{} learns $\ASTalgo{}$ correctly by showing that it learns (a) the vertex set $\Palgo{}$, (b) the articulation points $\Aalgo{}$, and (c) the edge set $\Ealgo{}$ correctly. We will now establish (a) and (b).  

\begin{itemize}
    \item From the description of the algorithm in Section~\ref{sub-sec:algo}, it is clear that \mainalgo{} succeeds in finding the ancestors, which is the first step, provided the TIA tests succeed. Indeed, in the first stage of our proof, we establish Lemma~\ref{lemma:two_dstar_triplets_with_identical_ancestor} in Appendix which shows that the \textsc{TIA} test passes with two triplets if and only if they share a common ancestor in $\gcombined{}$. 
    \item Next, we show in Lemma~\ref{lemma:learning_pairwise_dist_betn_landmarks} and Claim~\ref{claim:collect_all_vertex_land_dist} in Appendix that \mainalgo{} correctly learns the distances  $d_{ij}$ for all vertices $i,j$ that either in the set of observed vertices $\vobs{}$ or in the set of hidden ancestors $\anchid{}$. Proposition~\ref{prop:correctness of Pale} establishes the correctness of \mainalgo{} in learning the leaf clusters and internal clusters, and in learning $\Palgo{}$ and $\Aalgo{}$. The proof correctness of this step crucially depends on identifying the non-cut vertices in $\gtrue{}$ from different clusters which is proved in Lemma~\ref{lemma:cut_test}.
\end{itemize}
Finally, we outline the correctness of \mainalgo{} in learning the edge set $\Ealgo{}$. Recall that \mainalgo{} learns the neighbor articulation points of each articulation point. Proposition~\ref{prop:correctness of East} in Appendix shows that \mainalgo{} correctly achieves this task.  Using the neighbors of different articulation points, \mainalgo{} correctly learns the edges between different elements in $\Palgo{}$.  

\section{Performance Analysis of { \textsc{NoMAD}} in Finite Sample Setting}
\label{sec:sample-complexity}
In describing \mainalgo{} (cf. Section~\ref{subsubsec:algo}) and in the analysis in the population setting (cf. Section~\ref{sec:theory}), we temporarily assumed that we have access to the actual distances $d_{ij}$ for the sake of exposition. However, in practice, these distances need to be estimated from samples $\{\mathbf{Y}_1,\ldots,\mathbf{Y}_n\}$. In what follows, we show that \mainalgo{}, with high probability, correctly outputs the equivalence class $[G]$ even if we replace $d_{ij}$ with the estimate $\widehat{d}_{ij}=\widehat{\Sigma^o}_{ij}/\sqrt{\widehat{\Sigma^o}_{ii} \widehat{\Sigma^o}_{jj}}$, where $\widehat{\Sigma^o}_{ij}$ is the $(i,j)$-th element in $\frac{1}{n}\sum_{i=1}^N{\mbf{Y}}_i\mbf{Y}_i^T$. We recall from Section~\ref{subsubsec:algo} that the subroutines in \mainalgo{} depend on the  \textsc{TIA} test, which relies on the distances $d_{ij}$. Thus, we establish the correctness of \mainalgo{} in the finite sample setting by showing that the empirical \textsc{TIA} (TIA with estimated distances) correctly identifies the ancestors in $G^J$ with high probability. We begin with the following assumptions.

\setcounter{theorem}{1}

\begin{assumption}
\label{assump:strong_faith_assump}
[$\gamma$-Strong Faithfulness Assumption]
\label{assump:strong-faith}
For any vertex triplet $i,j,k \in {\vobs{} \choose 3}$, if $i \notindep j \vert k$, then $\left \vert d_{ij}-d_{ik}-d_{jk} \right \vert > \gamma$.
\end{assumption}

$\gamma$-Strong Faithfulness assumption is a standard assumption used in the literature (\cite{kalisch2007estimating,uhler2013geometry}). 0-Strong-Faithfulness is just the usual
Faithfulness assumption discussed in Section~\ref{sub-sec:algo}. This motivates our next assumption which strengthens our requirement on ``spurious cancellations'' involving ancestors.
 
\begin{assumption}[Strong Ancestor consistency]
\label{assump:strong_anc_consistency}
For any triplet pair $U,W \in {\vtrue \choose 3} \setminus \mathcal{V}_{\rm star} \cup \mathcal{V}_{\rm sep}$ and any vertex pair $(x,a) \in U \times W$, there exists a constant $\zeta>0$, such that $\left \vert d_x^U + d_a^W -d_{xa}\right \vert > \zeta$.
\end{assumption}
Assumption~\ref{assump:strong_anc_consistency} is in direct analogy with  Assumption~\ref{assump:strong_faith_assump}.  As we show in Lemma~\ref{lemma:two_dstar_triplets_with_identical_ancestor}, for any pair $U,W \in {\vtrue \choose 3} \setminus \mathcal{V}_{\rm star} \cup \mathcal{V}_{\rm sep}$ (i.e., any pair that would fail the TIA test), there exists at least one triplet $\{x,a,b\}$ where $x \in U \mbox{ and } a,b \in W$ such that $d_{xa}-d_x^U - d_a^W \neq 0$ and $d_{xb}-d_x^U - d_b^W \neq 0$. This observation motivates us to replace the exact equality testing in the \textsc{TIA} test in Definition~\ref{def:TIA-test} with the following hypothesis test against zero: $\max\left\{\left|\widehat{d}_{xa}-\widehat{d}_x^U - \widehat{d}_a^W \right|,  \left|\widehat{d}_{xb}-\widehat{d}_x^U - \widehat{d}_b^W \right|\right\}\leq \xi$. We set $\xi < \frac{\zeta}{2}$. \\

Furthermore, in order to learn the distance between two hidden ancestors in $\anchid{}$, the ${\rm mode}$ test introduced in Subsection~\ref{subsubsec:algo} needs to be replaced with a finite sample version; we call this the $\epsilon_d$-mode test and this is formalized in Appendix~\ref{app:smp-complexity}. Finally, for any triplet $(i,j,k) \in {\vobs{} \choose 3}$, in order to check whether $i \indep j \vert k$, the  test in Fact~\ref{fact:sepration and pairwise correlation factorization (faithful_graph)} needs to be replaced as follows: $\vert \widehat{d}_{ij}- \widehat{d}_{ik}-\widehat{d}_{jk} \vert < \frac{\epsilon_d}{6}$. We now introduce two new notations to state our main result. Let $\rho_{\rm min} (p) = {\rm min}_{i,j \in {p \choose 2}}|\rho_{ij}|$ and $\kappa (p)=\log((16+\left(\rho_\mathrm{min}(p)\right)^2\epsilon_d^2)/(16-\left(\rho_\mathrm{min}(p)\right)^2\epsilon_d^2))$, where $\epsilon_d= {\rm min} (\frac{\xi}{14}, \gamma)$, where $\gamma$ is from Assumption~\ref{assump:strong_faith_assump}.

\setcounter{theorem}{2}

\begin{theorem}
\label{main--thm:samp-complexity}
Suppose the underlying graph $\gtrue{} $ of a faithful GGM satisfies Assumptions \ref{assump:strong_faith_assump}-\ref{assump:strong_anc_consistency}. Fix any $\tau \in (0,1]$. Then, there exists a constant $C>0$ such that if the number of samples $n$ satisfies $n > C \left(\frac{1}{\kappa (p)}\right) {\rm max} \left( {\rm log} \left(\frac{p^2}{\tau}\right), {\rm log } \left(\frac{1}{\kappa (p)}\right)\right)$, then with probability at least $1- \tau$, \mainalgo{} accepting $\widehat{d}_{ij}$ outputs the equivalence class $[G]$.
\end{theorem}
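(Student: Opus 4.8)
The plan is to decouple the argument into a deterministic ``reduction'' step and a probabilistic ``concentration'' step, using the population-setting correctness already established in Theorem~\ref{thm--main_thm_pop} as a black box. The observation driving the whole proof is that \mainalgo{} interacts with the data only through the distance estimates $\widehat{d}_{ij}$, and every decision it makes is one of finitely many threshold tests: the empirical \textsc{TIA} test (threshold $\xi$), the $\epsilon_d$-mode test for hidden-ancestor distances, and the separation test $|\widehat{d}_{ij}-\widehat{d}_{ik}-\widehat{d}_{jk}|<\epsilon_d/6$. Thus it suffices to exhibit a single accuracy level and show (i) on the event that all $\widehat{d}_{ij}$ achieve this accuracy, every test agrees with its population counterpart, so \mainalgo{} returns $[\gtrue{}]$ by Theorem~\ref{thm--main_thm_pop}; and (ii) this ``good event'' holds with probability at least $1-\tau$ under the stated sample size.

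For step (i) I would take the accuracy level to be $\epsilon_d=\min(\xi/14,\gamma)$ and condition on the event $\mathcal{G}\triangleq\{|\widehat{d}_{ij}-d_{ij}|<\epsilon_d \text{ for all } i,j\in\vobs{}\}$. Each quantity a test compares against its threshold is a fixed linear combination of a constant number of pairwise distances: for instance $d_x^U=\tfrac12(d_{xy}+d_{xz}-d_{yz})$, so $\widehat{d}_x^U$ deviates from $d_x^U$ by at most $\tfrac32\epsilon_d$, and the \textsc{TIA} statistic $\widehat{d}_{xa}-\widehat{d}_x^U-\widehat{d}_a^W$ deviates by at most a fixed multiple of $\epsilon_d$ (the constant $14$ being chosen precisely to absorb this accumulation). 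Combined with the gaps guaranteed by the $\gamma$-strong faithfulness (Assumption~\ref{assump:strong_faith_assump}) and the strong ancestor consistency (Assumption~\ref{assump:strong_anc_consistency}), together with the choice $\xi<\zeta/2$, this ensures that a statistic which is $0$ in the population stays strictly below threshold, while one bounded away from $0$ (by $\gamma$ or $\zeta$) stays strictly above it. Hence no test flips its decision on $\mathcal{G}$; in particular the value returned by the $\epsilon_d$-mode test coincides with the true $\mathrm{mode}(\Delta_{pq})$, because the correct entries of $\Delta_{pq}$ remain clustered while the spurious ones stay separated by the ancestor-consistency gap. Lemma~\ref{lemma:two_dstar_triplets_with_identical_ancestor} then certifies that the empirical \textsc{TIA} test passes exactly on triplet pairs sharing an ancestor, so the run of \mainalgo{} on $\mathcal{G}$ reproduces the population run, outputting $\ASTalgo{}$ and hence $[\gtrue{}]$.

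For step (ii) I would control $\mathbb{P}(\mathcal{G}^c)$ by a union bound over the $\binom{p}{2}$ pairs after establishing a per-pair tail bound. Writing $\widehat{d}_{ij}=-\log|\widehat{\rho}_{ij}|$ with $\widehat{\rho}_{ij}$ the empirical correlation, I would first translate the distance-accuracy requirement into a correlation-accuracy requirement: since $-\log|\cdot|$ has derivative $1/|\rho|$ and all correlations satisfy $|\rho_{ij}|\ge\rho_{\min}(p)$, a deviation $|\widehat{d}_{ij}-d_{ij}|\ge\epsilon_d$ forces $|\widehat{\rho}_{ij}-\rho_{ij}|$ to exceed a threshold of order $\rho_{\min}(p)\,\epsilon_d$. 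I would then invoke the standard concentration inequality for the empirical correlation of $n$ i.i.d.\ Gaussian vectors, whose tail decays like $\exp(-c\,n\,\kappa(p))$ with exactly the rate $\kappa(p)=\log\big((16+\rho_{\min}(p)^2\epsilon_d^2)/(16-\rho_{\min}(p)^2\epsilon_d^2)\big)$ appearing in the statement. A union bound gives $\mathbb{P}(\mathcal{G}^c)\le \binom{p}{2}\,\mathrm{poly}(n)\,e^{-c\,n\,\kappa(p)}$, and solving $\mathbb{P}(\mathcal{G}^c)\le\tau$ for $n$ yields the claimed form $n>C\,\kappa(p)^{-1}\max(\log(p^2/\tau),\log(1/\kappa(p)))$, where the $\log(p^2/\tau)$ term comes from the union bound and the $\log(1/\kappa(p))$ term from clearing the polynomial prefactor of the concentration bound.

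I expect the main obstacle to lie in step (ii): obtaining the sharp rate $\kappa(p)$ requires a tight Gaussian-correlation tail bound together with careful handling of the non-Lipschitz behaviour of $-\log|\cdot|$ near the origin, which is precisely why $\rho_{\min}(p)$ enters and why the tolerance must ultimately be expressed in correlation rather than distance units. Step (i) is conceptually routine---bounded error propagation through finitely many linear statistics guarded by the faithfulness gaps---but still demands the bookkeeping verifying that one common tolerance $\epsilon_d$ simultaneously certifies all three families of tests.
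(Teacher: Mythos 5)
Your proposal is correct and follows essentially the same route as the paper's proof: reduce failure of \mainalgo{} to the event that some pairwise distance estimate deviates by more than $\epsilon_d=\min(\xi/14,\gamma)$ (with the constant $14$ absorbing error propagation through the at most seven distances entering each test statistic), translate distance accuracy into correlation accuracy at scale $\rho_{\min}(p)\,\epsilon_d$ to handle the non-Lipschitz behavior of $-\log|\cdot|$, apply the Kalisch--B\"uhlmann correlation tail bound whose rate is exactly $\kappa(p)$, and finish with a union bound over $\binom{p}{2}$ pairs and an inversion for $n$. The only cosmetic difference is that the paper formalizes the distance-to-correlation translation via conditioning on the event $\{|\widehat{\rho}_{ij}|>\rho_{\min}/2\}$ (introducing conditional probabilities and an auxiliary $\tau'$), whereas your contrapositive phrasing reaches the same bound slightly more directly.
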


\setcounter{theorem}{0}
\begin{remark}
Theorem~\ref{main--thm:samp-complexity} indicates that the sample complexity of the  \mainalgo{} is dependent on the absolute minimum and maximum pairwise correlations $\rho_{\rm min}$ and $\rho_{\rm max}$, the number of vertices $p$, and the magnitude of the quantity from Assumption \ref{assump:strong_anc_consistency}. Specifically, in regimes of interest, we see that the sample complexity scales as a logarithm in the number of vertices $p$ and inversely in $\rho^2_{\rm min} (p)$ and $\epsilon_d^2$, thus allowing for robust model selection in the high-dimensional regime. Further note that our sample requirement can have an exponential dependence on $p$ based on the degree of $G$. However, we expect that this dependency can be improved; see Section~\ref{sec:discussion} for more on this avenue for future work.
\end{remark}

\section{Experiments}

\newcommand{\trueprecision}{\Theta^*}
\newcommand{\glassoprecision}{\Theta_{\rm glasso}}

We perform experiments on both a synthetic graph and an IEEE 33 bus system, which is a graphical representation of established IEEE-33 bus benchmark distribution system \citep{baran1989network}, to assess the validity of our theoretical results and to demonstrate the performance of \mainalgo{}. In particular, our experiments demonstrate how an unmodified graphical modeling algorithm (in this case  \glasso{} ~\cite{friedman2008sparse}) compares to \mainalgo{}. 
The synthetic graph $G_{\rm{syn}}$ we consider and graph associated with the IEEE-33 bus system are given in Fig.~\ref{fig:synth_graph1} and Fig.~\ref{fig:synth_graph2}, respectively. 
As a first step, we need some new performance metrics to make a fair comparison in light of the unidentifiability of the underlying graph structure from noisy data. In the following we will introduce some sets, and show that if an algorithm can recover all these sets, then the output graph (from that algorithm) is in equivalence class.

    \begin{figure*}
    \begin{subfigure}[b]{0.42\textwidth}
  \centering
    \includegraphics[width=3.3cm, height=2 cm]{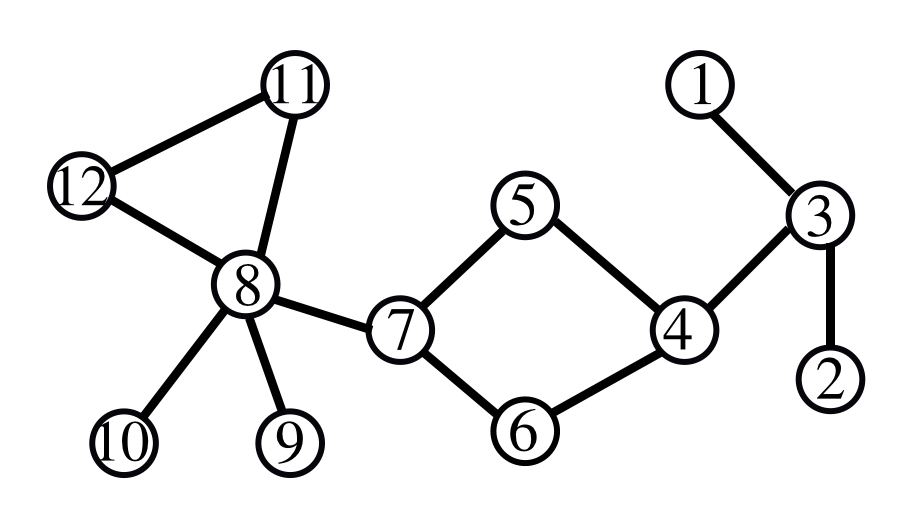}
  \caption{Synthetic graph, $G_{\rm{syn}}$}
    \label{fig:synth_graph1}
  \end{subfigure}
  \begin{subfigure}[b]{0.52\textwidth}
  \centering
    \includegraphics[width=3.6cm, height=2.8cm]{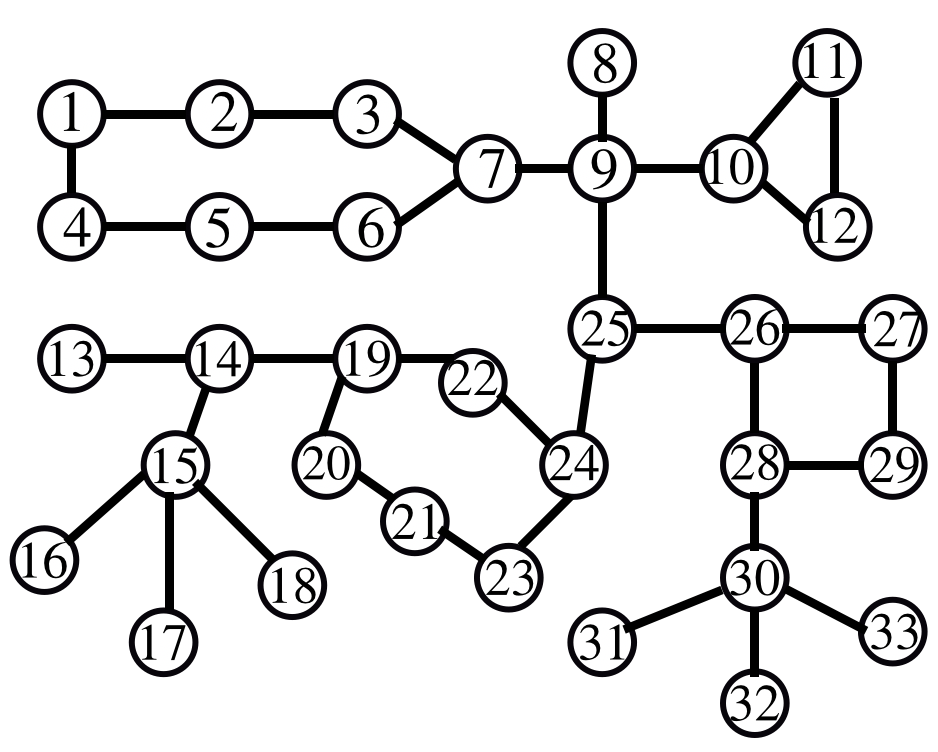}
\caption{IEEE 33 Bus Graph}
    \label{fig:synth_graph2}
  \end{subfigure}

  \caption{ Synthetic graph and IEEE-33 Bus system considered for our simulation}
    \label{fig:syn_graphs}
\end{figure*}

\begin{enumerate}
    \item {\bf Families.} For a vertex $i$, define its {\em family} $F_i$ as $\{v: {\rm deg}(v)=1 \mbox{ and } \{v,i\}\in E(G)\} \cup \{i\}$. Notice that the subgraph associated with each family is a tree. For synthetic graph $G_{\rm{syn}}$, the sets $\{1,2,3\}$, $\{8,9,10\}$ are some of the families in $G_{\rm{syn}}$; For IEEE 33 Bus Graph $\{15,16,17,18\}$, $\{30,31,32,33\}$ are some of the families. Let $\mathcal{F}= \bigcup_{i \in \vtrue{}}F_i$.

    \item {\bf Non-Cut Vertices.} For graph $G$, let $B_{\noncut}$ be the set of all non-cut vertices in a non-trivial block $B$. Define $\mathcal{B}_{\noncut} \triangleq \bigcup_{B \in \mathcal{B}^{\textsc{NT}}}B_{\noncut}$, where $\mathcal{B}^{\textsc{NT}}$ is the set of all non-trivial blocks. Recall from Section~\ref{sec:prelim} that in all the equivalent graphs the set of non-cut vertices remain unchanged.

\item {\bf Global Edges.} Let $K$ be the set of cut vertices who do not share an edge with a leaf vertex in $G$. For any vertex $k \in K$, let a family $F_k \in \mathcal{F}$ be such that there exists a vertex $f \in F_k$ such that $\{k,f\} \in E(G)$. Now, we will note two following condition for any neighbor $i \in N(k)$: (a) if $i \in K$, then $\{i,k\} \in E(G)$, and (b) otherwise, there exists a vertex $j \in F_k$ such that $\{j,k\} \in E(G)$. If condition (a) and condition (b) are met, global edge associated with $i$ are recovered correctly .

\end{enumerate} 

In Lemma~\ref{lemma:equiv-class-recovery}, we demonstrate that if an algorithm learns these sets correctly, and the conditions associated with the global edges are met, then an equivalent graph can be recovered. \\

 {\bf Experimental Setup}. We now describe our experimental setup. We generated precision matrices associated with $G_{\rm{syn}}$, and for IEEE-33 bus system, we added some loops. We then inverted the corresponding precision matrices to obtain the respective covariance matrices in the population setting. Next, we added a diagonal matrix of random positive values to the population covariance matrices to generate the corresponding noisy covariance matrices.\\ 

 We will now compare the performance of 0-1 loss of \mainalgo{} to \glasso{} under the influence of noise. In order to compare, our goal is to compare \mainalgo{} with the \textit{best} \glasso{}. Our protocol for selecting the \textit{best} \glasso{} is as follows: for a fixed maximum allowable diagonals, we selected the regularization parameter for which the output graph given the noise covariance matrix to \glasso{} is in equivalence class. Then, for that fixed regularization parameter we report the performance of \glasso{} for varying numbers of (increased) maximum allowable diagonals. Notice that the maximum allowed values of the diagonal elements of the matrix $D$ (which is being added to the generated covariance matrix) contain information about the potential influence of the noise in the setup. In order to study this influence we ran the experiment for a fixed sample size over 15 trials. In Fig.~\ref{fig:noiseInfluence}, we observe that up to the value of 2.5 and 1.5 for $G_{\rm{syn}}$ and IEEE-33 Bus, respectively, \glasso{} was able to recover an equivalent graph. On the contrary, \mainalgo{} was able to show a consistent performance in recovering an equivalent graph with various influence of noises.

 \begin{figure*}
    \begin{subfigure}[b]{0.5\textwidth}
    \centering
    \includegraphics[width=6cm, height=4cm]{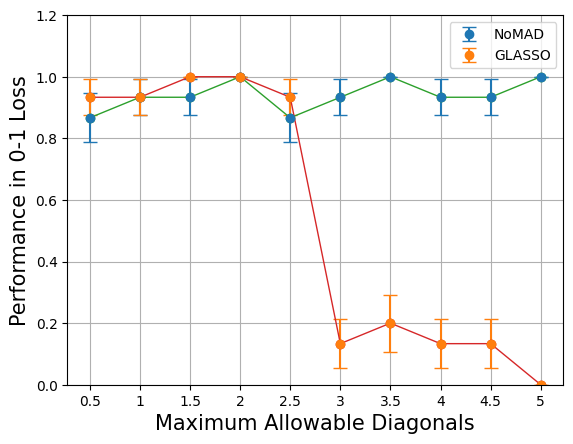}
    \caption{Performance for $G_{\rm{syn}}$}
    \label{fig:noiseInfluence_G1}
  \end{subfigure}
  \begin{subfigure}[b]{0.5\textwidth}
    \centering
    \includegraphics[width=6cm, height=4cm]{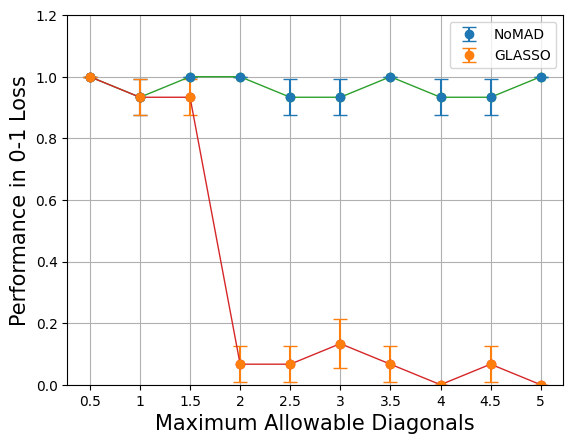}
    \caption{Performance for IEEE-33 Bus.}
    \label{fig:noiseInfluence_G2}
  \end{subfigure}

  \caption{ Performance of \glasso{} and \mainalgo{} for various degree of noises. Up to the values of 2.5 and 1.5 dollars for $G_{\rm{syn}}$ and IEEE-33 Bus, respectively, \glasso{} successfully reconstructed an equivalent graph. To investigate this influence, we conducted the experiment with a fixed sample size across 15 trials. In contrast, \mainalgo{} consistently demonstrated its ability to recover an equivalent graph under various noise influences. }
    \label{fig:noiseInfluence}
\end{figure*}

\section{Conclusion and Future Directions}
\label{sec:discussion}

{\bf Conclusion.} In this paper, we consider model selection of non tree-structured Gaussian graphical models when the observations are corrupted by independent but non-identically distributed noise with unknown statistics. We first show that this ambiguity is unavoidable. Finally, we devise a novel algorithm, referred to as $\mainalgo{}$, which learns structure up to a small equivalence class.  \\

 {\bf Future Directions.} This paper opens up several exciting avenues for future research. First, our novel ancestor testing method can be used to identify the ancestors for other graphical models (beyond Gaussians) where information distance satisfies the factorization property in Fact~\ref{fact:sepration and pairwise correlation factorization (faithful_graph)}; e.g., Discrete graphical models, where the random vector $\mathbf{X}$ takes values in the product space $\mathcal{X}^p$, where $\left| \mathcal{X} \right| = k$. For any $i, j \in [p]$, let $\Upsilon_{ij} \in \mathbb{R}^{k\times k}$ denote the tabular representation of the marginal distribution of the pair $(X_i, X_j)$ and $\Upsilon_{ii}$ denote a diagonal matrix with the marginal distribution of $X_i$ on the diagonal. Then, it is known that the following quantity 
\begin{align*}
d_{ij} = \frac{\det\left( \upsilon_{ij} \right)}{\sqrt{\det\left( \upsilon_{ii}\upsilon_{jj} \right)}}
\end{align*}
can be taken to be the information distance. We refer the reader to \cite{semple2003phylogenetics} for more on this, including a proof of an equivalent version of Fact~\ref{fact:sepration and pairwise correlation factorization (faithful_graph)}. Note that the Ising model is a special case, and our results naturally extend to them.

Second, it is well known that $\rho_{\rm min}$ could scale exponentially in the diameter of the graph; this could imply that the sample complexity will scale polynomially in the number of vertices $p$ even for balanced binary trees, and as bad as exponential for more unbalanced graphs. Now, notice that in Subroutine~\ref{app:algo--Ident_Anc_Extnd_Dist}, \mainalgo{} identifies all the star triplets for any ancestor in $\gcombined{}$. Hence, this identification procedure in quite computationally expensive. As we reason this in theoretical section of the appendix that this computation is required in order to learn the pairwise distances $d_{ij}$ for each pair $(i,j)$ such that $i \in \vobs{}$ and $j \in \anchid{}$, where $\vobs{}$ and $\anchid{}$ is the set of observed vertices, and hidden ancestors, respectively. A promising future research work is to develop a TIA test which can obtain $\{d_{ij}\}_{i,j \in \vobs{} \cup \anchid{}}$ without iterating all the triplets in $\vobs{} \choose 3 $. Furthermore, the Subroutine~\ref{app:algo--Learn_clusters} can be redesigned to a computationally efficient one by learning the clusters in \textit{divide and conquer} manner. Another promising avenue for future research work is to obtain an upper bound on the diagonal entries $D_{ii}$ for which the underlying graph $G$ is identifiable. Finally, future research can be done to understand the robust identifiability and structure recoverability results for graphs that are locally tree-like, but globally loopy with a large girth \cite[see e.g.,][]{anandkumar2013learning}.

\bibliography{references}

\newpage

\appendix

\begin{figure*}[t]
\center
{\Large \bf Appendix}\\
\vspace{3mm}
{\bf \Large Robust Model Selection of Gaussian Graphical Models}
\ \\ 
{Zahin, Anguluri, Sankar, Kosut, and Dasarathy}
\ \\ \ 
\hrule
\end{figure*}

\newcommand{\Balgo}{B_{\rm algo}}
\newcommand{\clustercut}{C_{\rm cut}} 
\newcommand{\clusternoncut}{C_{\rm non-cut}}
\newcommand{\distHid}{\mathcal{D}_{\rm hid}}
\newcommand{\distAll}{\widetilde{\mathcal{D}}}
\newcommand{\var}[1]{\text{\texttt{#1}}}
\newcommand{\func}[1]{\text{\textsl{#1}}}

\makeatletter
\newcounter{phase}[algorithm]
\newlength{\phaserulewidth}
\newcommand{\setphaserulewidth}{\setlength{\phaserulewidth}}
\newcommand{\phase}[1]{
   \vspace{-1.25ex}

  \Statex\leavevmode\llap{\rule{\dimexpr\labelwidth+\labelsep}{\phaserulewidth}}\rule{\linewidth}{\phaserulewidth}
  \Statex\strut\refstepcounter{phase}\textit{Phase~\thephase~--~#1}
  \vspace{-1.25ex}\Statex\leavevmode\llap{\rule{\dimexpr\labelwidth+\labelsep}{\phaserulewidth}}\rule{\linewidth}{\phaserulewidth}}
\makeatother

\setphaserulewidth{.7pt}

\setcounter{section}{0}
\renewcommand{\thesection}{\Alph{section}}

\section{Algorithmic Details}
\label{app:rev-algos}

 In the population setting, \mainalgo{} takes as input the pairwise distances $d_{ij}$, for all $i,j$ in the observed vertex set $\vobs{}$, and returns an articulated set tree $\ASTalgo{} \triangleq  (\Palgo{}, \Aalgo{}, \Ealgo{})$ (see Definition~\ref{def:art-set-graph}). Its operation is divided into  two main steps: 
(a) {learning $\Palgo{}$ and $\Aalgo{}$}; and (b) {learning $\Ealgo{}$}. These steps are summarized in the following.

 \subsection{Learning $\Palgo{}$ and $\Aalgo{}$ for $\ASTalgo{}$}

In \textit{Phase 1}, Subroutine~\ref{app:algo--Ident_Anc_Extnd_Dist} identifies the ancestors in $\gcombined{}$ using the pairwise distances $d_{ij}$ for all $i,j \in \vobs{}$. In this phase, it returns a collection  $\tripletCollection{}$ of vertex triplets such that each triplet collection $\mathcal{V} \in \tripletCollection{}$ contains (and only contains) all vertex triples that share an identical ancestor in $\gcombined{}$. The key component for this is to use \textsc{TIA} (Test Identical Ancestor). In the \textit{Phase 2}, Subroutine~\ref{app:algo--Ident_Anc_Extnd_Dist} enrolls each collection in $\tripletCollection{}$ to either $\tripletCollectionObs{}$ or $\tripletCollectionHid{}$, such that  $\tripletCollectionObs{}$  ($\tripletCollectionHid{}$) contains the collection of vertex triplets for which their corresponding ancestors are observed (hidden resp.), and observed ancestors are enrolled in the set $\ancobs{}$. For identifying the observed ancestors from $\tripletCollection{}$, Subroutine~\ref{app:algo--Ident_Anc_Extnd_Dist} does the following for each collection $\mathcal{V} \in \tripletCollection{}$: it checks for a vertex triplet $T$ in $\mathcal{V}$ for which one vertex in the triplet $T$ separates the other two.  In the final phase, Subroutine~\ref{app:algo--Ident_Anc_Extnd_Dist} accepts $d_{ij}$ for each pair $i,j \in \vobs{}$ and $\tripletCollectionHid{}$, and learns the pairwise distance $d_{ij}$ for each $i \in \vobs{}$ and $j \in \anchid{}$ by finding a vertex triplet $T$ in a collection $\mathcal{V}_j \in \tripletCollectionHid{}$ such that $T$ contains $i$. Then, in the next step, Subroutine~\ref{app:algo--Ident_Anc_Extnd_Dist} learns $d_{ij}$ for each $i, j \in \anchid{}$ by selecting the most frequent distance in $\Delta_{pq}$ as defined in Section~\ref{subsubsec:algo}.

\floatname{algorithm}{Procedure}

  \begin{wrapfigure}{R}{0.4\textwidth}
    \begin{minipage}{0.4\textwidth}
 \begin{algorithm}[H]
\caption{\scriptsize \textsc{TestIdenticalAncestor} (TIA)}
\label{app:algo--TIA}
\begin{algorithmic}[1]
\tiny
\Procedure{\textsc{TIA}}{$U,W$}    
 	\If{ for all $x \in U$, $\exists$ at least a pair $y,z \in W$ such that
 	
 	$d_x^U+d_y^W=d_{xy}$ and $d_x^U+d_z^W=d_{xz}$}
	\State \textbf{Return} \textsc{True}.
	\EndIf
	\State \textbf{Return} \textsc{False}.
\EndProcedure

\end{algorithmic}
\end{algorithm}
    \end{minipage}
  \end{wrapfigure}

 We next present Subroutine~\ref{app:algo--Learn_clusters} for clustering the vertices in the set $\vobs{} \setminus \ancobs{}$. It accepts $\ancobs{}$, $\anchid{}$, and $\{d_{ij}\}_{i \in \vobs{}, j \in \anchid{}}$, and enrolls each vertex in $\vobs{} \setminus \ancobs{}$ either in a \emph{leaf cluster} (see Phase 1 of Subroutine~\ref{app:algo--Learn_clusters}) or in an  \emph{internal cluster} ( see Phase 2 of Subroutine~\ref{app:algo--Learn_clusters}). Now, for the collection of leaf clusters $\mathcal{L}$, each cluster $L\in \mathcal{L}$ is associated to a unique element $a\in A$ such that $L_2$ is separated from $A \setminus a$ by $a$. Each cluster $I \in \mathcal{I}$ is associated with a subset of ancestors $I_1 \subset A$, such that $I_2$ is separated from all other ancestors in $A \setminus I_1$ by $I_1$.

  \floatname{algorithm}{Subroutine}
\begin{algorithm}
  \caption{ \scriptsize Identifying Ancestors and Extending the Pairwise Distance Set}
  \label{app:algo--Ident_Anc_Extnd_Dist}
  \begin{multicols}{2}
  \begin{algorithmic}[1]
  \tiny
	\State \textbf{Input:} Pairwise distances $\mathcal{D} = \{d_{ij}\}_{i, j \in \vobs{}}$, where $\vobs{}$ is the set of observed vertices.
	\State \textbf{Return:} A collection of vertex triplets $\tripletCollectionObs{}$ ($\tripletCollectionHid{}$ resp.) with observed (hidden resp.) ancestors, the set $\ancobs{}$ ($\anchid{}$ resp.) of observed (hidden resp.) ancestors, the set of pairwise distances $\{d_{ij}\}$ for each pair $i,j \in \vobs{} \cup \anchid{}$. 
	
	\State \textbf{Initialize:} $\tripletCollectionObs{}, \tripletCollectionHid{}, \distAll{}, \distHid{} \leftarrow \emptyset $, collection of vertex triplets $\mathcal{V} \triangleq {\vobs{} \choose 3}$, counter $n=1$
    \phase{Clustering Star Triplets}
	\For{each $ U \in \mathcal{V}$} 
	\State $\mathcal{V}_n\triangleq \{ W \subset \mathcal{V}:$ \textsc{TIA} $(U,W) \mbox{ is \textsc{True}}\} \cup U$

			\If{$ \vert \mathcal{V}_{n} \vert > 1$}   $n=n+1$
	\EndIf
			\State $\tripletCollection \leftarrow \tripletCollection \cup \mathcal{V}_n$ \Comment{enrolling the collection $\mathcal{V}_n$ to $\mathfrak{V}$} 
	\EndFor
\State \textbf{Return:} $\tripletCollection = \{\mathcal{V}\subset {V\choose 3}: \mbox{each $U\in \mathcal{V}$ has the same ancestor
}\}$, 

    \phase{Labeling Ancestors}
		\For{each collection $\mathcal{V} \in \tripletCollection{}$}
		\If{ $\exists$ a triplet $V \triangleq \{u,v,w\} \in \mathcal{V}$ s.t. $d_{uv}+d_{vw}=d_{uw}$}
		\State $\tripletCollectionObs{} \leftarrow \tripletCollectionObs{} \cup \mathcal{V}$
		\State $\ancobs{} \leftarrow \ancobs{} \cup v $
		\Else
			\State $\tripletCollectionHid{} \leftarrow \tripletCollectionHid{} \cup \mathcal{V}$
			\EndIf
			\EndFor
		\State Set	$\anchid{} \triangleq \{a_i \vert i \in [\left \vert \tripletCollectionHid{} \right\vert] \}$ \Comment{introduce one vertex for each element in $\tripletCollectionHid{}$}
    \phase{Learning the pairwise distance set $\{d_{ij}\}_{i,j \in \vobs{} \cup \anchid{}}$}
 		\For{each $\mathcal{V}_i \in \tripletCollectionHid{} $ }
 		\For{each $j \in \vobs{}$}
 		\State Find a triplet $U$  $\in \mathcal{V}_i$ s.t $U \ni j$ $\triangleright$ \textcolor{teal}{\footnotesize cf. Claim~\ref{claim:collect_all_vertex_land_dist}}

 		\State $\distAll{} \leftarrow \distAll{} \cup \left(\left( i,j\right), d_{j}^U \right) $ 
 		\EndFor

 		\EndFor
 		    \For{each $p\neq q \in \anchid{}$}
    \State Pick a pair of triplets $U_p \in \mathcal{V}_p, U_q \in \mathcal{V}_q$.
    \State $\Delta_{pq}= \left\{d_{xy} - (d_{x}^{U_p} + d_{y}^{U_r}):  x\in U_p, y \in U_q\right\}.$
    \State $\distHid{} \leftarrow \distHid{} \cup \left( \left( p,q \right), {\rm mode}(\Delta_{pq})  \right)$ $\triangleright$ \textcolor{teal}{\footnotesize most frequent element in $D_{pq}$}
    \EndFor
	\State \textbf{Return} $\tripletCollectionObs{}$, $\tripletCollectionHid{}$, $\ancobs{}$, $\anchid{}$, $\distAll{}$, and $\distHid{}$.
  \end{algorithmic}
  \end{multicols}
\end{algorithm}

 \begin{algorithm}
  \caption{\scriptsize \textsc{LearnClusters}}
  \label{app:algo--Learn_clusters}
  \begin{multicols}{2}
  \begin{algorithmic}[1]
  \tiny
\State \textbf{Input:} $\ancobs{}, \anchid{}, \mbox{ and } \mathcal{D}$, and $A \triangleq \ancobs{} \cup \anchid{} $.
	\State \textbf{Output:} A collection of leaf clusters $\mathcal{L}$ and internal clusters $\mathcal{I}$. 
	\State  \textbf{Initialize}: $\mathcal{L} \triangleq (L_1,L_2,L_3), \mathcal{I} \triangleq (I_1,I_2,I_3)$, and $L_1, L_2,L_3, I_1,I_2,I_3 \leftarrow \emptyset$. 
    \phase{Learning Leaf Clusters}
	\For{each $x\in \vobs{} \setminus \ancobs{}$}

	\If{ $\exists a\in A$ such that 
	$d_{xa}+d_{aa'}=d_{xa'}$ for all $a' \in A \setminus \{a\}$}
	\If{$\exists$ $L \in \mathcal{L}$ such that $L_1=a$}
	\State $L_2 \leftarrow L_2 \cup \{x\}$
    \Else
    \State $L \triangleq (a, \{x\}, \emptyset)$
    \State $\mathcal{L} \leftarrow \mathcal{L} \cup L$ 
	\EndIf
	\State $\vobs \leftarrow \vobs \setminus \{x\}$
	\EndIf
	\EndFor
	\State \textbf{Return} $\mathcal{L}= \{L : L_2 \in 2^{\vobs{} \setminus \ancobs{}} \newline \mbox{s.t. $L_2$ is separated from $A \setminus L_1$ by $L_1$ where $\vert L_1 \vert=1$}\}$.
    \phase{Learning Internal Clusters}
	\For{each $x \in \vobs{} \setminus \ancobs{}$}
	\For {each $\Tilde{A} \subset 2^A$ s.t. $|\Tilde{A}|>1$}
	\For{each pair $k, \ell \in { \Tilde{A} \choose 2}$} 
	\If{there exists a pair $(k, \ell)$ s.t. $d_{xk}+d_{k \ell} = d_{x \ell} \mbox{ or } d_{x\ell} +d_{\ell k} = d_{xk}$} 
 	\EndIf
 		\State \textbf{Break}
	\EndFor
	\EndFor
	 \If{$\exists$ a $I \in \mathcal{I}$ such that $I_1 = \Tilde{A}$}
 \State $I_2 \leftarrow I_2 \cup \{x\}$.
 \Else
 \State $I \triangleq ( \Tilde{A}, \{x\}, \emptyset)$
 \State $\mathcal{I} \leftarrow \mathcal{I} \cup I $
 \EndIf
	\EndFor
\State	\textbf{Return} $\mathcal{I}= \{I : I_2 \in 2^{\vobs{} \setminus \ancobs{}} \newline \mbox{s.t. $I_2$ is separated from $A \setminus I_1$ by $I_1$ where $\vert I_1 \vert>1$}\}$.
  \end{algorithmic}
  \end{multicols}
\end{algorithm}

 \floatname{algorithm}{Procedure}

   \begin{algorithm}
   \begin{multicols}{2}
	\begin{algorithmic}[1]
	\tiny
  	\caption{\scriptsize \textsc{NonCutTest}}
 	\label{app:algo--Cut_Test}
 		\State \textbf{Input:} A leaf cluster $L \in \mathcal{L}$ such that $\vert L_2 \vert \geq 2$.
 		\State \textbf{Output:} A set $\clustercut{}, \clusternoncut{} \triangleq L_2 \setminus \clustercut{}, L_3 \subseteq L_2$.
  		\State \textbf{Initialize:} $\clustercut{}$ with $L_2$.
 			\For{each $x \in L_2$}
 			\For{each pair $y,z \in L_2 \setminus \{x\}$}
	\State Pick any arbitrary pair $\alpha_1, \alpha_2 \in \vobs{} \setminus L_2$.
	\State $U_i \triangleq \{x,y,\alpha_1\}$ and $U_j \triangleq \{x,z,\alpha_2\}$.

	\If{\textsc{TIA}($U_i,U_j$) is \textsc{False}}
	\EndIf
	\State \textbf{Break}
	\State $\clustercut{} \leftarrow \clustercut{} \setminus \{x\}$ \Comment{$x$ is not a non-cut vertex.}
	\EndFor
	\EndFor
\If{$\left(\left | \clustercut{}\right|\right)> 1 \wedge \left (L_1 \notin \ancobs{} \right)$}
\State Pick an arbitrary vertex $a$ from $\clustercut{}$ and set $L_3 \triangleq a$.
\EndIf
	\State \textbf{Return} $\clustercut{}$, $\clusternoncut{}$, and $L_3$.
\end{algorithmic}
\end{multicols}
 \end{algorithm}

\floatname{algorithm}{Subroutine}

 \begin{algorithm}
 \begin{multicols}{2}
	\begin{algorithmic}[1]
  \tiny
  	\caption{\scriptsize \underline{P}artitioning \underline{a}nd learning \underline{l}ocal \underline{e}dges (\textsc{PaLE})}
 	\label{app:algo--PaLE}
 		\State \textbf{Input:} The observed vertex set $\vobs{}$, the collection of leaf clusters $\mathcal{L}$ and internal clusters $\mathcal{I}$. 
 		\State \textbf{Output:} The vertex set $\Palgo{}$, the articulation points $\Aalgo{}$, and a subset $\mathcal{E}_{\rm leaf}$ of the edge set $\Ealgo{}$ for $\ASTalgo{}$. Each element $E \in \mathcal{E}_{\rm leaf}$ is an ordered quadruple such that $E_1, E_2 \subseteq \vobs{}$, and $E_3 \in E_1, E_4 \in E_2$.
 		 	\State	\textbf{Initialize} $\Palgo{}, \Aalgo{}, \mathcal{E}_{\rm leaf}  \leftarrow \emptyset$.
\State $A_{\rm cluster} \triangleq \{ c \in L_1: c \in \ancobs{}\}$
	\State $\Palgo{} \leftarrow \Palgo{} \cup (\ancobs{} \setminus A_{\rm cluster})$, $\Aalgo{} \triangleq \ancobs{}$.
	
	\phase{Partitioning and Local Edge Learning w.r.t. the Leaf Clusters}

 	\For{each $L \in \mathcal{L}$ s.t. $(\vert L_2 \vert < 3) \wedge (L_1 \notin \ancobs{})$} \Comment{  ancestor in the leaf cluster is not observed.}

 	\State Pick an arbitrary vertex $a \in L_2$, $L_3 \leftarrow a$, $L_2 \leftarrow L_2 \setminus \{a\}$, $\Aalgo{} \leftarrow \Aalgo{} \cup \{a\}$.
 	\State $\Palgo{} \leftarrow \Palgo{} \cup L_2 \cup L_3$
\State $\mathcal{E}_{\rm leaf} \leftarrow \mathcal{E}_{\rm leaf} \cup \left( L_2,L_3,L_2,L_3\right)$
 	\EndFor
 			\For{each $L \in \mathcal{L}$ s.t. $ (\vert L_2 \vert \geq 3) \wedge (L_1 \notin \ancobs{})$} \Comment{ancestor in the leaf cluster is not observed.}
 			
 			\State Get $\clustercut, \clusternoncut{} \textrm{ and } L_3$ from $\textsc{NonCutTest}(L)$.
	\State $\Aalgo{} \leftarrow \Aalgo{} \cup L_3$
	\State Set $B \triangleq \clusternoncut{} \cup L_3$ \Comment{$\clusternoncut{}$ and $L_3$ contains the non-cut vertices and cut vertex, respectively.}
\State $\Palgo{} \leftarrow \Palgo{} \cup B \cup \underset{v\in \clustercut{} \setminus L_3}{\bigcup} \{v\}$ \Comment{$\clustercut{}$ can contain multiple cut vertices.}
	\State $\mathcal{E}_{\rm leaf} \leftarrow \mathcal{E}_{\rm leaf} \cup \underset{v \in \clustercut{} \setminus L_3}{\bigcup} \left(B, \{v\}, L_3, v\right) $
	\EndFor

 	\For{each $L \in \mathcal{L}$ s.t. $(\vert L_2 \vert =1) \wedge (L_1 \in \ancobs{})$} \Comment{ancestor in the leaf cluster is observed.}
 	\State $\Palgo{} \leftarrow \Palgo{} \cup L_2 \cup L_3$
\State $\mathcal{E}_{\rm leaf} \leftarrow \mathcal{E}_{\rm leaf} \cup \left( L_2,L_3,L_2,L_3\right)$
 	\EndFor
	
		 			\For{each $L \in \mathcal{L}$ s.t. $ (\vert L_2 \vert > 1) \wedge (L_1 \in \ancobs{})$} \Comment{ancestor in the leaf cluster is observed.}
 			
 			\State Get $\clustercut,\clusternoncut{} \textrm{ and } L_3$ from $\textsc{NonCutTest}(L)$
\State Set $B \triangleq \clusternoncut{} \cup L_1$.
\State $\Palgo{} \leftarrow \Palgo{} \cup B \cup \underset{v \in \clustercut{}}{\bigcup} \{v\} $. 
	\State $\mathcal{E}_{\rm leaf} \leftarrow \mathcal{E}_{\rm leaf} \cup \underset{v \in \clustercut{}}{\bigcup} \left(B, \{v\}, L_3, v\right) $
	\EndFor

 \phase{Partitioning w.r.t. the Internal Clusters} 
	\For{each $I \in \mathcal{I}$}
	\For{each $i \in I_1$}
	\If{$i \notin \ancobs{}$}
	 \State Find the $L\in \mathcal{L}$ s.t. $L_1=i$
	 \State $I_3 \leftarrow I_3 \cup i$, $\Aalgo{} \leftarrow \Aalgo{} \cup \{i\}$
	 \EndIf
	 	\EndFor
 	\State $\Palgo{} \leftarrow \Palgo{} \setminus I_3$ 
	 	\State $B \triangleq \{I_2 \cup I_3\} $ and $\Palgo{} \leftarrow \Palgo{} \cup B$. \Comment{$B$ is an internal non trivial block.}
	\EndFor

	\State \textbf{Return} $\Palgo{}$, $\Aalgo{}$, 
	and a subset $\mathcal{E}_{\rm leaf}$ of $\Ealgo{}$.
\end{algorithmic}
\end{multicols}
 \end{algorithm}

 \floatname{algorithm}{Procedure}

\begin{minipage}{0.46\textwidth}
\begin{algorithm}[H]
    \centering
  	\caption{\scriptsize \textsc{NonBlockNeighbors}}
   \begin{algorithmic}[1]
     \tiny
 	\label{app:algo--NonBlockNeighbors}
 		\State \textbf{Input:} An ancestor vertex $u$, $\mathcal{C}_u$, $\Aalgo{}$, and the extended distance set $\Dextend{}$.
 		\State \textbf{Output:} Neighbors $\delta(u)$ of $u$ such that they do not belong to the clusters that contains $u$.

  		\State \textbf{Initialize:} $\delta(u) \triangleq \Aalgo{} \setminus \underset{C \in \mathcal{C}_u}{\bigcup}C_3$.
 			\For{each $x \in \delta(u)$}
 			\If{$\exists$ a vertex $b \in \mathcal{C}_u \setminus $ s.t. $d_{ux}=d_{ub}+d_{bx}$}
 			\State $\delta(u) \leftarrow \delta(u) \setminus x$
 			\EndIf
	\EndFor
	\For{ each $k, \ell \in {{\delta(u)} \choose 2}$}
	\If{$d_{uk}+d_{k\ell}=d_{u \ell}$ }
	\State $\delta(u) \leftarrow \delta(u) \setminus \ell$.
	\EndIf
	\EndFor
    \end{algorithmic}
\end{algorithm}
\end{minipage}
\hfill
\begin{minipage}{0.46\textwidth}
\floatname{algorithm}{Subroutine}

\begin{algorithm}[H]
    \centering
  	\caption{\scriptsize Learning $\Ealgo{}$ for $\ASTalgo{}$}
   \label{app:algo--EAST}
\begin{algorithmic}[1]
  \tiny
 		\State \textbf{Input:} The collection of leaf clusters $\mathcal{L}$ and internal clusters $\mathcal{I}$, $\mathcal{C} \triangleq \mathcal{L} \cup \mathcal{I}$, a subset $\mathcal{E}_{\rm leaf}$ of $\Ealgo{}$.
 		\State \textbf{Output:} An edge set $\Ealgo{}$ for $\ASTalgo{}$.
 		\State \textbf{Initialize:} $\Ealgo{} \leftarrow \mathcal{E}_{\rm leaf}$.
 	
 		 		\For{each $u \in \Aalgo{}$}
 \State	Let $C \in \mathcal{C}$ be the cluster such that $ C_3 \ni u $
 		 \State Get $\delta(u)$ from $\textsc{NonBlockNeighbors}(u, C, \Aalgo{})$.
 		 
 		 	\For{each $P_u \in \Palgo{}$ s.t. $P_u \ni u$}
  \For{each $v \in \delta(u)$}
  \State $\Ealgo{} \leftarrow \Ealgo{} \cup  (P_u, \{v\}, u, v)$
  \EndFor
  \EndFor

\EndFor
 	
 \State \textbf{Return} The edge set $\Ealgo{}$ for $\ASTalgo{}$.
    \end{algorithmic}
\end{algorithm}
\end{minipage}

  We now discuss \textsc{NonCutTest} appears in Procedure~\ref{app:algo--Cut_Test}. The goal of \textsc{NonCutTest} is to learn (a) the non-cut vertices, and (b) potential cut vertices of a non-trivial block from a leaf cluster. \textsc{NonCutTest} accepts a set $W \subseteq \vobs{}$ s.t. $\vert W \vert \geq 3$, and partitions the vertex set $W$ into $\clustercut{}$ (the set of potential cut vertices) and $\clusternoncut{}$ (the set of vertices which \textit{can not} be a cut vertex). Then we use Subroutine~\ref{app:algo--PaLE} for learning (a) vertex set $\Palgo{}$, (b) articulation points $\Aalgo{}$, and a subset of the edge set $\Ealgo{}$ for $\ASTalgo{}$. The Subroutine~\ref{app:algo--PaLE} learns (a), (b), and (c) from both leaf clusters and internal clusters. In the following, we list all the possible cases of leaf clusters Subroutine~\ref{app:algo--PaLE} considered in learning $\Palgo{}$ and $\Aalgo{}$: Leaf clusters contains 1. At most two vertices with hidden ancestor, 2.  More than two vertices with hidden ancestor, 3. One vertex with observed ancestor, and 4. More than one vertex with observed ancestor. For each $I \in \mathcal{I}$, Subroutine~\ref{app:algo--PaLE} checks whether $i \in \ancobs{}$. If $i \notin \ancobs{}$, then the subroutine finds the leaf cluster $L$ s.t. $L_1 \ni i$.

\subsection{Learning $\Ealgo{}$ for $\ASTalgo{}$} 
The next goal of \mainalgo{} is to learn to learn the edge set $\Ealgo{}$ for $\Aalgo{}$. Precisely, \mainalgo{} learns the neighbors of each articulation point in $\Aalgo{}$. The learning of $\Ealgo{}$ is divided into two steps: (a) Learn the neighbors of each articulation points (appears in Procedure 3), and (b) use the information obtained from (a) to construct $\Ealgo{}$ (appears in Procedure 4).

\section{Theory: Guaranteeing the Correctness of the \mainalgo{}}
In this section we will prove that $\mainalgo{}$ correctly learns the equivalence class. We star this section by restating Theorem~\ref{thm--main_thm_pop} from Section~\ref{sec:theory}.
  \begin{theorem}
Consider a covariance matrix $\Sigma^*$ whose conditional independence structure is given by the graph $G$, and the model satisfies Assumption~\ref{assump:anc_faithfulness}. Suppose that according to the problem setup in Section~\ref{subsec: robust model selection}, we are given pairwise distance $d_{ij}$ of a vertex pair $(i,j)$ in the observed vertex set $\vobs{}$, that is, $d_{ij} \triangleq - {\rm log} |\rho_{ij}|$ where $\rho_{ij} \triangleq \Sigma^o_{ij} / \sqrt{\Sigma^o_{ii} \Sigma^o_{jj}}$. Then, given the pairwise distance set $\{d_{ij}\}_{i,j \in \vobs{}}$ as inputs, \mainalgo{} outputs the equivalence class $[\gtrue{}]$.
 \end{theorem}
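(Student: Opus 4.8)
The plan is to reduce the theorem to the correctness of the three constituent pieces of the articulated set tree and then assemble them. Since the equivalence class $[\gtrue{}]$ is, by the abuse of notation following Definition~\ref{def:equiv_rel}, exactly the set of graphs whose AST equals $\bctree{(\gtrue{})}$ (up to the leaf--neighbor swaps $G \mapsto G_R$ for remote $R \in \mathcal{R}$), it suffices to show that \mainalgo{} outputs a tuple $\ASTalgo{} = (\Palgo{}, \Aalgo{}, \Ealgo{})$ equal to $\bctree{(G_R)}$ for some $R \in \mathcal{R}$. I would therefore structure the argument by certifying that each of $\Palgo{}$, $\Aalgo{}$, and $\Ealgo{}$ is computed correctly, after first fixing the structural dictionary between graph separators in the joint graph $\gcombined{}$ and the block/cut structure of $\gtrue{}$: the observed set $\vobs{}$ consists of the noisy leaves of $\gcombined{}$ (together with any uncorrupted coordinates), so an ancestor of an observed triplet is an original coordinate $X_r$ that separates the triplet in $\gcombined{}$, and such separating coordinates are precisely the cut vertices and articulation points of $\gtrue{}$.

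First I would certify the ancestor-identification step. The crux is the correctness of the \textsc{TIA} test, namely that \textsc{TIA}$(U,W)$ returns \textsc{True} if and only if $U$ and $W$ share a common ancestor in $\gcombined{}$ (Lemma~\ref{lemma:two_dstar_triplets_with_identical_ancestor}). The ``if'' direction follows from the additive identities in \eqref{eq:ancestor_distance_triplet} together with Fact~\ref{fact:sepration and pairwise correlation factorization (faithful_graph)}, since a shared separator makes each $d_x^U + d_y^W$ telescope to $d_{xy}$. The ``only if'' direction is where Assumption~\ref{assump:anc_faithfulness} is essential: it rules out the measure-zero coincidences $d_x^U + d_a^W = d_{xa}$ that could let triplets with distinct ancestors spuriously pass the test, an issue that genuinely arises here because $\gcombined{}$ has loops and hence many triplets are not star triplets. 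With \textsc{TIA} established, the split of the triplet collection $\tripletCollection{}$ into $\tripletCollectionObs{}$ and $\tripletCollectionHid{}$ is correct, since a collection has an observed ancestor exactly when some member triplet $\{u,v,w\}$ has an internal vertex separating the other two, detected by the additivity check $d_{uv}+d_{vw}=d_{uw}$.

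Next I would verify the distance-extension step, establishing that \mainalgo{} recovers $\{d_{ij}\}$ for all $i,j \in \vobs{} \cup \anchid{}$ (Lemma~\ref{lemma:learning_pairwise_dist_betn_landmarks} and Claim~\ref{claim:collect_all_vertex_land_dist}). For a hidden ancestor $a_i$ and an observed vertex $j$, any triplet $U \in \mathcal{V}_i$ with $j \in U$ gives $d_{j a_i} = d_j^U$ by \eqref{eq:ancestor_distance_triplet}. For two hidden ancestors $a_p, a_q$, I would show that every pair $(x,y) \in U_p \times U_q$ whose connecting path in $\gcombined{}$ passes through both $a_p$ and $a_q$ contributes exactly $d_{pq}$ to $\Delta_{pq}$, and that these pairs dominate, so that $\mathrm{mode}(\Delta_{pq}) = d_{pq}$; ancestor faithfulness again precludes a spurious value from tying the true one. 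These extended distances are exactly the inputs needed for the separation tests in the clustering stage.

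Finally I would establish $\Palgo{}$, $\Aalgo{}$, and $\Ealgo{}$. Applying the separation test of Fact~\ref{fact:sepration and pairwise correlation factorization (faithful_graph)} to the extended distances assigns each observed vertex to a leaf cluster (separated from the remaining ancestors by a single ancestor) or an internal cluster (separated by a set of ancestors), and I would argue these clusters are in bijection with the blocks and trivial components of $\gtrue{}$. Combined with the correctness of \textsc{NonCutTest} in isolating the non-cut vertices of each non-trivial block (Lemma~\ref{lemma:cut_test}), this yields $\Palgo{}$ and $\Aalgo{}$ (Proposition~\ref{prop:correctness of Pale}); the edge set $\Ealgo{}$ then follows from correctly learning, for each articulation point, its block-external neighbors via a further use of Fact~\ref{fact:sepration and pairwise correlation factorization (faithful_graph)} (Proposition~\ref{prop:correctness of East}). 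Assembling the three components, $\ASTalgo{}$ coincides with $\bctree{(G_R)}$ for the remote set $R$ induced by the algorithm's arbitrary leaf-versus-neighbor choices, which proves the theorem. I expect the principal obstacle to be the ancestor-identification step: proving the ``only if'' direction of the \textsc{TIA} characterization in the loopy regime and showing the $\mathrm{mode}$ selection for hidden-ancestor distances is unambiguous, both of which rest delicately on Assumption~\ref{assump:anc_faithfulness}.
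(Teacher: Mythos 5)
Your proposal takes essentially the same route as the paper's own proof: it decomposes correctness into (i) the \textsc{TIA} test characterization (the paper's Lemma~\ref{lemma:two_dstar_triplets_with_identical_ancestor}, with Assumption~\ref{assump:anc_faithfulness} handling the non-star cases), (ii) the distance-extension and mode argument (Claim~\ref{claim:collect_all_vertex_land_dist}, Lemma~\ref{lemma:learning_pairwise_dist_betn_landmarks}), (iii) clustering plus non-cut identification yielding $\Palgo{}$ and $\Aalgo{}$ (Lemma~\ref{lemma:cut_test}, Proposition~\ref{prop:correctness of Pale}), (iv) edge learning for $\Ealgo{}$ (Proposition~\ref{prop:correctness of East}), and then assembles these into $[\gtrue{}]$ exactly as the paper does via its equivalence-class reconstruction lemma. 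The only sketch-level imprecision is your statement that a shared ancestor makes \emph{each} $d_x^U + d_y^W$ telescope to $d_{xy}$; in fact only pairs genuinely separated by the common ancestor satisfy this identity, which is precisely why the test demands only a pair in $W$ per vertex of $U$ --- but this does not alter the structure or validity of your argument.
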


\label{app-theory-population}
\noindent {\bf Proof Outline.} We show that $\mainalgo{}$ correctly learns the equivalence class by showing that it can correctly learn $\ASTalgo{}$. Given this, and using Definition~\ref{def:equiv_rel}, the entire equivalence class can be readily generated. We show that \mainalgo{} learns $\ASTalgo{}$ correctly by proving that (a) the vertex set $\Palgo{}$, (b) the articulation points $\Aalgo{}$, and (c) the edge set $\Ealgo{}$ are learnt correctly. Following is the outline for (a) and (b). From Section~\ref{sub-sec:algo}, it is clear that \mainalgo{} succeeds in finding the ancestors, which is the first step, provided the TIA tests succeed (established in Lemma~\ref{lemma:two_dstar_triplets_with_identical_ancestor}).Then, Proposition~\ref{prop:correctness of Pale} establishes that \mainalgo{} learns $\Palgo{}$ and $\Aalgo{}$ correctly. The proof correctness of this step crucially depends on identifying the non-cut vertices (c.f. Lemma~\ref{lemma:cut_test}). Then, for establishing the correctness of \mainalgo{} in learning $\Ealgo{}$, \mainalgo{} learns the neighbor articulation points of each articulation point. Proposition~\ref{prop:correctness of East} shows that \mainalgo{} correctly learns $\Ealgo{}$.

\setcounter{theorem}{0}
\begin{lemma}
\label{lemma:AST_is_a_Tree}
Let $\gtrue{}$ be a graph on vertex set $V$, and $\bctree{(G)}$ be the corresponding articulated set tree of $\gtrue{}$. Then, $\bctree{(G)}$ is a tree.
\end{lemma}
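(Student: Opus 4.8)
The plan is to establish the two defining properties of a tree for $\bctree{(G)}$ separately: that it is connected and that it is acyclic. Throughout I would assume $\gtrue{}$ is connected (otherwise $\bctree{(G)}$ is a forest with one component per connected component of $\gtrue{}$, which is the natural degenerate case). The central tool is the classical block decomposition of a graph: the edge set of $G$ partitions into the edge sets of its blocks; every block is either a bridge (a trivial block) or a maximal biconnected subgraph; a vertex is a cut vertex exactly when it lies in at least two blocks; and, crucially, \emph{every cycle of $G$ lies entirely within a single block}. I would invoke this standard fact (see, e.g., \citep{harary2018graph, biggs1986graph}), which also underlies why the ordinary block-cut tree is a tree and to which $\bctree{(G)}$ is closely related.

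For \textbf{connectedness}, I would project paths of $G$ onto $\bctree{(G)}$. Given two elements $P, P' \in \mathcal{P}$, choose $v \in P$, $v' \in P'$, and a $G$-path $v = u_0, u_1, \ldots, u_m = v'$. Each edge $\{u_t, u_{t+1}\}$ lies in a unique block $B_t$. If $B_t$ is non-trivial, then $B_t \in \mathcal{P}$ and contains both endpoints; if $B_t$ is a bridge, then the $\mathcal{P}$-elements containing $u_t$ and $u_{t+1}$ are joined by a type-(i) edge. In either case the $\mathcal{P}$-elements meeting $u_t$ and those meeting $u_{t+1}$ lie in a common connected subgraph of $\bctree{(G)}$, since all $\mathcal{P}$-elements containing a fixed vertex are pairwise type-(ii) adjacent. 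Concatenating over $t$ yields a walk in $\bctree{(G)}$ from $P$ to $P'$, establishing connectedness.

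For \textbf{acyclicity} --- which I expect to be the main obstacle --- I would argue by contradiction and lift a cycle of $\bctree{(G)}$ back to $G$. Suppose $P_1, P_2, \ldots, P_k, P_1$ is a cycle in $\bctree{(G)}$ with $k \ge 3$; there are no parallel edges, since two sets sharing a vertex and \emph{also} joined by an extra edge would lie on a common cycle and hence coincide as a single block. Each consecutive pair $P_i, P_{i+1}$ is linked by a shared cut vertex (type (ii)) or a bridge (type (i)). Within each non-trivial $P_i$ I would join the two attachment points by an internal path (available since blocks are connected) and concatenate all internal paths, shared vertices, and bridges into a closed walk $\mathcal{W}$ in $G$. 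The aim is to extract from $\mathcal{W}$ a genuine cycle that meets at least two distinct blocks, contradicting that every cycle of $G$ lies in a single block. The delicate points are: (i) checking that the two attachment points of the links incident to a common $P_i$ are distinct so the lift does not collapse; (ii) handling bridge-links, since a bridge lies on no cycle and its mere presence already forces a contradiction; and (iii) extracting a simple cycle from $\mathcal{W}$ while certifying it is not confined to one block. The most subtle configuration --- and the crux of the argument --- is several blocks meeting at a shared cut vertex, which is precisely where the lift must be shown not to produce a spurious cycle. With both properties in hand, $\bctree{(G)}$ is connected and acyclic, hence a tree.
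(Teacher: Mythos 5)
Your overall skeleton---prove $\bctree{(G)}$ is connected and acyclic---is the same as the paper's, and your connectedness argument is a more carefully executed version of the paper's: the paper simply observes that, since $G$ is connected, a $G$-path between (articulation points of) $B_u$ and $B_v$ projects to a path in $\bctree{(G)}$, which is exactly your edge-by-edge projection. That half of your proposal is sound.

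The genuine gap is in the acyclicity half, at precisely the point you flag as ``the crux'' and then leave unresolved: several non-trivial blocks meeting at a shared cut vertex. Under Definition~\ref{def:art-set-graph}, any two elements of $\mathcal{P}$ that share a vertex are adjacent (type-(ii) edges), so if three non-trivial blocks $B_1,B_2,B_3$ all contain the same cut vertex $v$---take $G$ to be three triangles glued at a single vertex---then $B_1,B_2,B_3$ are pairwise adjacent and form a $3$-cycle in $\bctree{(G)}$. Your lift of that cycle collapses exactly as you feared in your delicate point (i): for each $B_i$ both attachment points equal $v$, so the internal paths are trivial (or are closed walks confined to a single block), the closed walk $\mathcal{W}$ pinches down to the single vertex $v$, and no simple cycle meeting two distinct blocks can be extracted; the intended contradiction with ``every cycle of $G$ lies in one block'' never materializes. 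Moreover, this is not a fixable gap in exposition: for such $G$ the object $\bctree{(G)}$ as literally defined genuinely contains a cycle, so the step can only be completed after either treating shared cut vertices as separate nodes (as in the classical block-cut tree, where avoiding exactly this clique among blocks is why cut vertices get their own nodes) or adding the hypothesis that every cut vertex lies in at most two elements of $\mathcal{P}$. For what it is worth, the paper's own one-line acyclicity argument---that a cycle in $\bctree{(G)}$ would yield a biconnected subgraph contradicting maximality of blocks---is silent on the same configuration, and its asserted implication fails there; your more explicit lifting argument at least exposes where the extra hypothesis is needed, but as written your proposal does not (and cannot) close this step.
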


\begin{proof}
 We prove $\bctree{(G)}$ is a tree by showing that $\ASTalgo{}$ is connected and acyclic. Suppose on the contrary that $\bctree{(G)}$ contains a cycle $B'$. Then, $B'$ is a non-trivial block in $\gtrue{}$ with no cut vertex. This would contradict the maximality of the non-trivial blocks contained in the cycle $B'$. Hence, any cycle is contained in a unique non-trivial block in $\bctree{(G)}$. We now show that $\ASTalgo{}$ is connected. Recall that vertices in $\bctree{(G)}$ can either be a non-trivial block or a singleton vertices not part of any non-trivial block. Consider any vertex pair $(u,v)$ in $\bctree{(G)}$. We will find a path from $u$ to $v$. Suppose that $u$ and $v$ are non-singletons, and associated with non-trivial blocks $B_u$ and $B_v$ respectively. Since, $\gtrue{}$ is connected, $\exists$ a path between the articulation points of $B_u$ and $B_v$. Hence, $u$ and $v$ are connected in $\bctree{(G)}$. The other cases where one of them is a singleton vertex or both are singleton vertices follows similarly.

\end{proof}

\noindent We now show that \mainalgo{} correctly learns $[\gtrue{}]$. For the graph $G$ on a vertex set $V$, let $\gcombined{} = (\vcombined{}, \ecombined{})$ be defined as in Subsection~\ref{subsec: robust model selection}. Let $\anccombined{}$ be the set of ancestors in $\gcombined{}$. Recall that \mainalgo{} only observes samples from a subset $\vobs{} \subseteq \vcombined{}$ of vertices. \mainalgo{} uses $\{d_{ij}\}_{i,j \in \vobs{}}$ to learn $\ASTalgo{}$, which in turn will output $[\gtrue{}]$. Hence, each theoretical section first states a result of $\gcombined{}$ {\em assuming} that the pair $(\vcombined{}, \ecombined{})$ is known.

\noindent {\bf Correctness in Learning $\Palgo{}$ and $\Aalgo{}$}. 
We first establish that $\mainalgo{}$ correctly identifies ancestors in $\gcombined{}$. In the following, we first identify the vertices in $G$ which are ancestors in $\gcombined{}$. Then, in Lemma~\ref{lemma:atleast_two_triplets_for_an_ancestor}, we show the existence of at least two vertex triplets for each ancestor in $\gcombined{}$. Finally, in Proposition~\ref{prop:correctness_of_CollectStarTriplets}, we show that Subroutine~\ref{app:algo--Ident_Anc_Extnd_Dist} correctly identifies the star triplets in $\gcombined{}$. We start with introducing $uw$-separator.

\begin{definition}[$uw$-separator]
\label{def:uw_separator}
Consider an arbitrary pair $u,w \in \vtrue{}$ in the graph $\gtrue{}$. We say $v \in \vtrue{} \setminus \{u,w\}$ is a $uw$- separator in $\gtrue{}$ if and only if any path $\pi \in \mathcal{P}_{uw}$ contains $v$. 
\end{definition}

\begin{lemma}
\label{lemma:2_way_seprator}
A vertex $a \in \vcombined{}$ is an ancestor in $\gcombined{}$ if and only if $a$ is an $uw$- separator in $\gtrue$, for some $u,w \in \vtrue{} $.  
\end{lemma}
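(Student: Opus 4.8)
The plan is to prove the equivalence by reducing mutual separation in $\gcombined{}$ to separation in $\gtrue{}$, exploiting that every noisy counterpart $i^e$ is a pendant leaf whose only neighbor in $\gcombined{}$ is $i$. I would begin with three observations used throughout. First, since the triplets defining ancestors are drawn from the observed (leaf) vertices $\vobs{} = \{1^e,\dots,p^e\}$, and a degree-one vertex lies on no path between two \emph{other} vertices, no noisy counterpart can be the singleton separator of a triplet of three distinct leaves; hence every ancestor $a$ must lie in $\vtrue{}$. Second, for $a \in \vtrue{}$ and $p,q\in\vtrue{}$ with $a\notin\{p,q\}$, every $p^e$--$q^e$ path in $\gcombined{}$ has the form $p^e - p - \pi - q - q^e$ with $\pi\in\mathcal{P}_{pq}$, so $\{a\}$ separates $p^e$ and $q^e$ in $\gcombined{}$ if and only if $a$ lies on every $\pi\in\mathcal{P}_{pq}$, i.e.\ $a$ is a $pq$-separator in $\gtrue{}$. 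Third, because $a^e$ is pendant on $a$, the vertex $a$ lies on every path from $a^e$ to any other leaf, so $\{a\}$ always separates $a^e$ from $u^e$ for every $u\neq a$.

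For the backward direction, suppose $a$ is a $uw$-separator in $\gtrue{}$ for some $u,w\in\vtrue{}$, and let $C_u,C_w$ be the (distinct) connected components of $\gtrue{}\setminus\{a\}$ containing $u$ and $w$. I would exhibit the observed triplet $U=\{u^e,w^e,a^e\}$ and argue it is a star triplet with ancestor $a$. By the second and third observations, $\{a\}$ separates all three pairs of $U$, so it is a singleton mutual separator. The crux is uniqueness: I would show no other singleton $\{b\}$ separates $U$. Separating $u^e$ from $a^e$ forces $b$ to lie on every $u$--$a$ path (hence $b\in C_u\cup\{a\}$) or to be one of $u^e,a^e$; separating $w^e$ from $a^e$ analogously forces $b\in C_w\cup\{a\}$ or $b\in\{w^e,a^e\}$. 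Since $C_u\cap C_w=\emptyset$ and $u^e,w^e$ are distinct leaves, intersecting these constraints leaves only $b\in\{a,a^e\}$, and $a^e$ fails to separate $u^e$ from $w^e$; thus $b=a$. Hence $\{a\}$ is the unique (minimum) mutual separator, $U$ is a star triplet, and $a$ is an ancestor in $\gcombined{}$.

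For the forward direction, suppose $a$ is an ancestor, i.e.\ the ancestor of some star triplet $U=\{x^e,y^e,z^e\}$ of observed vertices, so $\{a\}$ separates each pair of $U$. By the first observation $a\in\vtrue{}$. Since $x,y,z$ are distinct, $a$ coincides with at most one of them; choosing two of them, say $y,z$, with $a\notin\{y,z\}$, the second observation applied to the pair $(y,z)$ shows that $a$ lies on every path in $\mathcal{P}_{yz}$, i.e.\ $a$ is a $yz$-separator in $\gtrue{}$ with $y,z\in\vtrue{}$. This is exactly the claimed conclusion.

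The step I expect to be the main obstacle is the uniqueness argument in the backward direction --- ruling out competing singleton separators $\{b\}$ and, more conceptually, recognizing that introducing the pendant $a^e$ as an artificial ``third branch'' is precisely what lets a cut vertex with only two branches in $\gtrue{}$ still induce a genuine star triplet in $\gcombined{}$. Care is also needed with degenerate separations (when the ancestor coincides with a triplet vertex), which is why I isolate the observation that ancestors lie in $\vtrue{}$ and always argue through a pair of triplet vertices that avoids $a$.
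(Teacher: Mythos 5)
Your proposal is correct and takes essentially the same route as the paper's proof: both directions reduce separation in $\gcombined{}$ to separation in $\gtrue{}$ by exploiting that every noisy counterpart is a pendant leaf, and the converse is handled by exhibiting a triplet of observed vertices containing $a^e$ --- exactly the style of construction the paper uses here and in Lemma~\ref{lemma:atleast_two_triplets_for_an_ancestor}. If anything, your write-up is more complete than the paper's, whose converse direction only asserts that ``the construction directly follows'' from the definitions, whereas you carry out the construction, the uniqueness argument ruling out competing singleton separators, and the degenerate case where the ancestor coincides with a triplet vertex.
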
 

\begin{proof}
($\Rightarrow$) Suppose $a \in \vcombined{}$ is an ancestor in $\gcombined$. Then, we show that $a$ is an $uw$- separator in $\gtrue{}$. Fix a triplet $T \triangleq \{a_1^e,a_2^e,a_3^e\} \in \mathcal{V}_a$, where $\mathcal{V}_a \triangleq $ collection of all triplets with ancestor $a$ in $\gcombined$. Then, any path $\pi \in \mathcal{P}_{a_i^ea_j^e}$ contains $a$ in $\gcombined$, for $i,j=1,2, \mbox{ and } 3$. Thus, $a_i^e \indep a_j^e \vert a$, and $a$ is an $uw$-separator with $u=a_i^e, \mbox{ and } w=a_j^e$. Furthermore, for a joint graph following is true for any vertex $u$ and its corresponding noisy samples $u^e$: $u^e \indep v \vert u$ for all $v \in \vcombined{} \setminus \{u,u^e\}$. Hence, we can conclude that $a_i \indep a_j \mid a$, and $a$ is an $uw$-separator in $\gtrue{}$.

($\Leftarrow$) Suppose that  $ \exists \; u,w \in \vtrue{} $ for which $a \in \vtrue{}$ is an $uw$- separator in $\gtrue{}$. Then, we show that $v$ is an ancestor in $\gcombined{}$ by constructing a triplet $T$ with ancestor $a$. This construction directly follows from Definition~\ref{def:uw_separator} and Definition~\ref{def:minimal-mut-sep}.

\end{proof}

\begin{lemma}
\label{lemma:nC_nBC_Leaf_is_not_an_ancestor}
Let $V_{\emph{cut}}$ be the set of all cut vertices in $\gtrue{}$. Then, there does not exist any pair $u,w \in \vtrue{}$ such that $b \in \vtrue{} \setminus V_{\emph{cut}}$ is an $uw$- separator in $\gtrue{}$.
\end{lemma}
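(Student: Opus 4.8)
The plan is to prove the statement by contradiction: assume some non-cut vertex $b \in \vtrue{} \setminus \Vcut{}$ is a $uw$-separator for a pair $u,w$, and show that $b$ must in fact be a cut vertex. First I would dispose of the degenerate case. If $u$ and $w$ are not connected in $\gtrue{}$, then $\mathcal{P}_{uw} = \emptyset$ and every vertex is vacuously a $uw$-separator, so the meaningful content of the lemma concerns connected pairs; I therefore restrict attention to pairs with $\mathcal{P}_{uw} \neq \emptyset$ (which is the only case that arises from the ancestor separations in Lemma~\ref{lemma:2_way_seprator}). By Definition~\ref{def:uw_separator} the separator $b$ is distinct from both $u$ and $w$.

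The core step is to translate ``separates'' into ``removal disconnects.'' Since every path in $\mathcal{P}_{uw}$ passes through $b$, the subgraph $\gtrue{}$ with $b$ (and its incident edges) deleted contains no $u$--$w$ path, so $u$ and $w$ lie in different connected components of $\gtrue{} \setminus \{b\}$. Fixing any single path from $u$ to $w$ (whose vertices are distinct, and on which $b$ is an internal vertex because $b \neq u,w$), the vertices immediately before and after $b$ give two neighbors $n_u, n_w$ of $b$. The portion of the path from $u$ to $n_u$ avoids $b$, so $n_u$ lies in $u$'s component of $\gtrue{} \setminus \{b\}$; symmetrically $n_w$ lies in $w$'s component. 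Hence $n_u$ and $n_w$ lie in distinct components of $\gtrue{} \setminus \{b\}$.

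Next I would invoke the biconnected (block) decomposition to conclude that $b$ lies in at least two blocks. Each edge incident to $b$ belongs to a unique block containing $b$; consider the blocks $B_u \ni \{b,n_u\}$ and $B_w \ni \{b,n_w\}$. I claim $B_u \neq B_w$. Indeed, if they coincided in a common block $B$, then $B$ could not be a trivial block (a single edge cannot contain the two distinct edges $\{b,n_u\}$ and $\{b,n_w\}$), so $B$ would be non-trivial and hence biconnected; but a biconnected subgraph has no cut vertex, so $B \setminus \{b\}$ is connected and provides a path from $n_u$ to $n_w$ avoiding $b$, contradicting that $n_u$ and $n_w$ sit in different components of $\gtrue{} \setminus \{b\}$. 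Thus $b$ belongs to two distinct blocks $B_u$ and $B_w$, and since a vertex shared by two blocks is a cut vertex (per the definition in Section~\ref{sec:prelim}), we get $b \in \Vcut{}$, contradicting $b \in \vtrue{} \setminus \Vcut{}$.

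The main obstacle I anticipate is the bookkeeping in the block argument, specifically justifying cleanly that two neighbors of $b$ inside a single block are joined by a $b$-avoiding path (the fact that a biconnected component has no internal cut vertex) and ruling out the trivial-block case, together with making the vacuous no-path situation explicit so that the statement is not technically false. Everything else reduces to the standard correspondence between articulation points and vertices shared by more than one block, which is already implicit in the definitions of Section~\ref{sec:prelim}.
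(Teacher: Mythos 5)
Your proposal is correct, but it takes a genuinely different route from the paper's own proof. The paper argues by a direct case split on what a vertex $b \in \vtrue{} \setminus V_{\rm cut}$ can be: (1) a non-cut vertex of a non-trivial block, for which it simply asserts that ``by the definition of a block'' $b$ cannot be a $uw$-separator, or (2) a leaf, which has degree one and so can never be an internal vertex of a path. Your argument instead proves the contrapositive --- any $uw$-separator is a cut vertex --- by an explicit construction: the neighbors $n_u, n_w$ of $b$ on a $u$--$w$ path fall into different components of $\gtrue{}$ with $b$ removed; the edges $\{b,n_u\}$ and $\{b,n_w\}$ cannot lie in a common block (a trivial block cannot hold two distinct edges, and a non-trivial block minus $b$ stays connected by biconnectivity); hence $b$ is shared by two distinct blocks and is a cut vertex under the paper's definition of $V_{\rm cut}$. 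Your approach buys three things the paper's does not: it avoids having to verify that the paper's case analysis is exhaustive (a non-cut vertex outside every non-trivial block with degree at least two would sit in two trivial blocks and thus be cut --- a point the paper leaves implicit and which silently requires connectivity of $\gtrue{}$); it supplies the actual argument behind the asserted case (1), which is the entire nontrivial content of the lemma; and it makes explicit the vacuous situation $\mathcal{P}_{uw}=\emptyset$, under which the statement as literally phrased in Definition~\ref{def:uw_separator} would fail for disconnected graphs. What the paper's route buys in exchange is brevity: granting its two asserted case conclusions, the proof is three lines, whereas yours requires the standard facts that every edge lies in a unique block and that blocks sharing a vertex certify a cut vertex --- facts that are, however, consistent with how Section~\ref{sec:prelim} sets up blocks and cut vertices.
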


\begin{proof}
Let $b \in \vtrue{} \setminus V_{\emph{cut}}$. Then, notice that $b$ can be either (1) a non-cut vertex of a non-trivial block or a (2) leaf vertex in $\gtrue$. For (1), by the definition of a block, any non-cut vertex ceases to be a $uw$-separator for any $u \neq b \mbox{ and } w \neq b$ in $\vtrue$. For (2), since $b$ is a leaf vertex, its degree is one, and hence, cannot be a $uw$-separator for any $u \neq b \mbox{ and } w \neq b$ in $\vtrue$. 
\end{proof}

\begin{lemma}
\label{lemma:atleast_two_triplets_for_an_ancestor}
Let $\anccombined{}$ be the set of all ancestors in $\gcombined{}$. Then, for each $a \in \anccombined{}$, there exists at least two triplets $U, W \in { \vobs \choose 3}$ for which $a$ is the ancestor in $\gcombined{}$. 
\end{lemma}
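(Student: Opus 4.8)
The plan is to first pin down exactly which vertices of $\gcombined{}$ can be ancestors, and then to exhibit the two required triplets directly from the component structure of $\gcombined{}$ around such a vertex.

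First I would combine the two preceding lemmas to locate the ancestors. By Lemma~\ref{lemma:2_way_seprator}, a vertex $a \in \vcombined{}$ is an ancestor in $\gcombined{}$ if and only if $a$ is a $uw$-separator in $\gtrue{}$ for some $u,w \in \vtrue{}$, and by Lemma~\ref{lemma:nC_nBC_Leaf_is_not_an_ancestor} no non-cut vertex can play this role; in particular no leaf, and hence no noisy copy $v^e$, can. Thus every $a \in \anccombined{}$ is a genuine cut vertex of $\gtrue{}$, so $a \in \vtrue{}$ and $\gtrue{} \setminus \{a\}$ splits into $m \ge 2$ connected components $C_1,\dots,C_m$.

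Next I would read off the components of $\gcombined{} \setminus \{a\}$. Since each noisy copy $v^e$ is a leaf attached only to $v$, deleting $a$ from $\gcombined{}$ leaves (i) for each $C_i$ the augmented piece $\widehat C_i \triangleq C_i \cup \{v^e : v \in C_i\}$, and (ii) the isolated noisy copy $a^e$ of $a$ as its own singleton component. Hence $\gcombined{} \setminus \{a\}$ has at least $m+1 \ge 3$ components, and each of them contains at least one observed vertex (every $\widehat C_i$ contains the observed leaves of its vertices, and $a^e$ is itself observed). I would then argue that choosing observed vertices $x,y,z$ from three distinct components yields a triplet whose ancestor is $a$: every path in $\gcombined{}$ joining vertices of two different components must pass through the cut vertex $a$, so $\{a\}$ is a singleton mutual separator of $\{x,y,z\}$; and no other vertex $b \ne a$ can separate all three pairs, since $b$ lies in a single component of $\gcombined{} \setminus \{a\}$ and therefore fails to separate the pair drawn from the remaining two components. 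By Definition~\ref{def:minimal-mut-sep} this makes $\{x,y,z\}$ a star triplet with ancestor $a$.

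Finally, to produce \emph{two distinct} such triplets I would split on the number of components and their sizes. If $\gcombined{} \setminus \{a\}$ has at least four components, fix observed vertices $o_1,o_2,o_3,o_4$ in four distinct components and take $U = \{o_1,o_2,o_3\}$, $W = \{o_1,o_2,o_4\}$. If there are exactly three components but one of them, say $\widehat C_1$, carries two distinct observed vertices $o_1,o_1'$ (which holds whenever $|C_1| \ge 2$, since it already holds for the leaves), fix observed $o_2,o_3$ in the other two components and take $U = \{o_1,o_2,o_3\}$, $W = \{o_1',o_2,o_3\}$. In every such case both $U$ and $W$ are star triplets with ancestor $a$ by the previous paragraph and are clearly distinct. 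I expect the main obstacle to be twofold: (a) establishing the star-triplet (uniqueness-of-separator) condition rather than merely that $a$ separates the triplet --- this is exactly where the cut-vertex structure, confining any competing separating vertex to a single component, is essential; and (b) the bookkeeping guaranteeing two genuinely distinct triplets. The only configuration left uncovered is when $\gcombined{} \setminus \{a\}$ has exactly three components each carrying a single observed vertex, i.e.\ when $\gtrue{}$ degenerates to the path on three vertices with all coordinates corrupted; this boundary case admits only one triplet and must be excluded under the implicit non-degeneracy of the underlying graph.
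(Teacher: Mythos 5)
Your proof is correct and arrives at the lemma by a genuinely cleaner route than the paper's. The paper likewise begins from the fact that ancestors are exactly the cut vertices of $\gtrue{}$ (Lemma~\ref{lemma:nC_nBC_Leaf_is_not_an_ancestor}), but then argues by cases on the block structure: for a cut vertex $c$ of a non-trivial block $B$ it takes $\{x^e,c^e,\alpha_1^e\}$ and $\{y^e,c^e,\alpha_2^e\}$ with $x,y$ non-cut vertices of $B$ and $\alpha_1,\alpha_2\in\vtrue{}\setminus B$; for a cut vertex lying only in trivial blocks it takes $\{i_1^e,c^e,j_1^e\}$ and $\{i_2^e,c^e,j_2^e\}$ with $i_1,i_2$ and $j_1,j_2$ drawn from two blocks separated by $c$; and it closes with the same degenerate-case remark you flag (for the path on three vertices only one observed triplet exists, so the statement genuinely needs that exclusion --- your diagnosis agrees with the paper's trailing sentence about trees on three vertices). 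Your single component-based construction subsumes both of the paper's cases and buys three things. First, it proves, rather than asserts, that $a$ is the \emph{unique} single vertex mutually separating the chosen triplet; the paper never verifies the star property of its constructed triplets. Second, it places the ``outside'' vertices correctly by construction: the paper's condition $\alpha_1\in\vtrue{}\setminus B$ is not by itself sufficient, since $\alpha_1$ could attach to $B$ through a different cut vertex of $B$, in which case $c$ does not separate $x$ from $\alpha_1$; your requirement that the three vertices lie in three distinct components of $\gcombined{}\setminus\{a\}$ is the right condition. Third, your bookkeeping for two \emph{distinct} triplets is explicit and exhaustive, showing the lemma holds for every graph except the single degenerate one. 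What the paper's version buys is brevity and a direct tie to the block vocabulary used elsewhere in the appendix. One caveat applies to your proof and the paper's equally: under a literal reading of Definition~\ref{def:minimal-mut-sep}, a star triplet must have a unique minimal mutual separator among \emph{all} vertex sets, and non-singleton minimal mutual separators typically exist as well (already for triplets in trees, which the paper nonetheless declares to be star triplets); what you establish --- that $\{a\}$ is the unique \emph{singleton} mutual separator --- is the operative notion used throughout the paper, so your argument is at least as complete as the paper's own.
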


\begin{proof}
We construct two triplets for any ancestor in $\gcombined{}$. Lemma \ref{lemma:nC_nBC_Leaf_is_not_an_ancestor} states that only a cut-vertex in $\gtrue{}$ is an ancestor in $\gcombined{}$. First, let $c$ be a cut-vertex of a non-trivial block $B$ in $\gtrue{}$. Pick any two non-cut vertices $x,y \in B \setminus \{c\}$. Then, consider the following two triplets in $\vobs{}$ : $\{x^e, c^e, \alpha_1^e\}$ and $\{y^e, c^e, \alpha_2^e\}$, where $\alpha_1, \alpha_2 \in \vtrue{} \setminus B$. Then both  $\{x^e, c^e, \alpha_1^e\}$ and $\{y^e, c^e, \alpha_2^e\}$ share the ancestor $c$ in $\gcombined{}$. Now, let $c$ be a cut vertex which is not in any non-trivial block. Consider two blocks $B_i$ and $B_j$ such that $B_i \indep B_j \mid c$. Then, consider the following pair: $\{i_1,c,j_1\}$ and $\{i_2,c,j_2\}$  s.t. $i_1,i_2 \in B_i$ and $j_1,j_2 \in B_j$. Notice that $\{i_1^e,c^e,j_1^e\}$ and $\{i_2^e,c^e,j_2^e\}$ in ${\vobs{} \choose 3}$ share the ancestor $c$ in $\gcombined{}$. Finally, if $\gtrue{}$ is a tree on three vertices, then $\gtrue{}$ has an unique ancestor.

\end{proof}

\begin{claim}
\label{claim:transfer-betn-true-and-combined}
Let (a) $\{i,j,k\}$ be a vertex triple in $\gtrue{}$, and (b) $i^e$ be the corresponding noisy counterpart of $i$. Then, $j$ separates $i \mbox{ and }k$ if and only if $j$ separates $i^e \mbox{ and }k$ in combined graph $\gcombined{}$
\end{claim}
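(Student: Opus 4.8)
The plan is to exploit the fact that in the joint graph $\gcombined{}$ the noisy counterpart $i^e$ is a pendant leaf whose only neighbor is $i$, so that every path emanating from $i^e$ is forced through $i$ immediately, after which it behaves exactly like a path in $\gtrue{}$. Concretely, I would first record two structural observations about $\gcombined{} = (\vcombined{}, \ecombined{})$: (i) each noisy vertex $m^e$ has degree one, its unique incident edge being $\{m, m^e\}$, by the definition of $\ecombined{}$; and (ii) consequently no noisy vertex can appear as an interior vertex of any path, since an interior vertex of a (simple) path must have at least two distinct neighbors along the path.

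From (ii) I would argue that the set of paths from $i$ to $k$ in $\gcombined{}$ coincides exactly with the set of paths $\mathcal{P}_{ik}$ in $\gtrue{}$: because both endpoints $i,k$ lie in $V$ and no noisy vertex can be interior, any such path uses only vertices of $V$ and edges of $E$, while conversely every path of $\gtrue{}$ is already a path of $\gcombined{}$. I would then establish the key bijection between paths from $i^e$ to $k$ in $\gcombined{}$ and paths from $i$ to $k$: since $i^e$ has the single neighbor $i$, any path $\pi$ from $i^e$ to $k$ has the form $\pi = (i^e, i, v_2, \ldots, k)$, and deleting the initial vertex $i^e$ yields a path $\pi' = (i, v_2, \ldots, k)$ from $i$ to $k$ that lives in $\gtrue{}$ by the previous step; this map is a bijection with inverse $\pi' \mapsto (i^e, \pi')$.

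Finally I would transfer separation across this bijection. Since $\{i,j,k\}$ is a triple of distinct vertices of $\gtrue{}$, we have $j \in V$ with $j \neq i^e$, $j \neq i$, and $j \neq k$; hence for a path $\pi = (i^e, \pi')$ the vertex $j$ lies on $\pi$ if and only if it lies on $\pi'$, because the prepended vertex $i^e \neq j$ cannot contribute an occurrence of $j$. Therefore the statement ``$j$ lies on every path from $i^e$ to $k$ in $\gcombined{}$'' is equivalent to ``$j$ lies on every path $\pi' \in \mathcal{P}_{ik}$ in $\gtrue{}$'', which is precisely the assertion that $j$ separates $i$ and $k$ in $\gtrue{}$. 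This yields both directions simultaneously.

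I do not anticipate a serious obstacle here; the only points requiring care are observation (ii) — that a degree-one noisy vertex cannot serve as an interior vertex of a simple path, which is what rules out detours through noisy copies — together with checking that $j$ is genuinely distinct from the endpoints and from $i^e$, so that prepending $i^e$ neither creates nor destroys an occurrence of $j$. These are exactly the spots where an otherwise routine argument could hide an off-by-one slip, so I would state them explicitly rather than leave them implicit.
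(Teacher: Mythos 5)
Your proof is correct and takes essentially the same route as the paper's: both arguments rest on the correspondence between paths from $i^e$ to $k$ in $\gcombined{}$ and paths from $i$ to $k$ in $\gtrue{}$ obtained by prepending or deleting the pendant edge $\{i^e,i\}$. The paper states the forward implication as following ``directly from the construction'' and proves the reverse by contradiction, whereas you make the pendant-leaf observation and the path bijection explicit and get both directions at once --- a more carefully spelled-out version of the same argument, not a different one.
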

\begin{proof}
The forward implication directly follows from the construction of joint graph. For the reverse implication suppose that in $\gcombined{}$, $i^e \indep k \vert j$. We show that this implies $i \indep k \vert j$ and $k$ in $\gtrue{}$. Suppose on the contrary that $i \notindep k \vert j$. That means $\exists$ a path $\pi$ between $i \mbox{ and } k$ that does not contain $j$. Now, notice that $\pi \cup \{i,i^e\}$ is a valid path between $i^e$ and $k$ in $\gcombined{}$ that does not contain $j$, and it violates the hypothesis.

\end{proof}

\noindent The following lemma relates an observed ancestor in a triplet $T$ with the remaining pair.

\begin{lemma}
\label{lemma:identify-observed-ancestors}
Suppose that a triplet $T \in {\vobs{} \choose 3}$ is a star triplet in $\gcombined{}$. A vertex $v \in T$ is an $uw-$ separator for $u,w \in T \setminus v$ if and only if $v$ is the ancestor of $T$.
\end{lemma}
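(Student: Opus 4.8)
The plan is to reduce both directions of the equivalence to the defining property of a star triplet (Definition~\ref{def:minimal-mut-sep}): its set of minimal mutual separators $S_{\rm min}(T)$ consists of a single singleton $\{r\}$, and this $r$ is by definition the ancestor. I would work entirely inside $\gcombined{}$ with $T=\{u,v,w\}$, using only the graph-separation notion from Section~\ref{sec:prelim}; no probabilistic reasoning is needed beyond what is already encoded in the separator language.

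For the backward direction, I would suppose $v$ is the ancestor of $T$, so that $\{v\}=\{r\}$ is the unique minimal mutual separator. Since a mutual separator separates every pair of $T$ by definition, it in particular separates the pair $u,w = T\setminus\{v\}$; hence every path in $\mathcal{P}_{uw}$ meets $v$, which is exactly the statement that $v$ is a $uw$-separator.

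For the forward direction, I would suppose $v\in T$ separates the remaining pair $u,w$. The key observation — and the only step that needs care — is that this single assumption already forces $\{v\}$ to be a \emph{mutual} separator of the whole triplet: the pair $(u,w)$ is separated by $v$ by hypothesis, while each of the two pairs containing $v$ as an endpoint, namely $(v,u)$ and $(v,w)$, is trivially separated by $\{v\}$ because every path emanating from $v$ contains $v$ itself. I would then argue that $\{v\}$ is in fact \emph{minimal}: its only proper subset is $\emptyset$, which cannot be a mutual separator since the vertices of $T$ are pairwise connected in $\gcombined{}$ (the star-triplet hypothesis guarantees paths between each pair, all passing through $r$). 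Thus $\{v\}\in S_{\rm min}(T)$, and since $T$ is a star triplet, $S_{\rm min}(T)$ has exactly one element, forcing $\{v\}=\{r\}$ and hence $v=r$, the ancestor.

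The main obstacle, such as it is, lies in the forward direction: one must notice that a member of the triplet which separates the other two automatically serves as a mutual separator of the whole triplet (endpoint-pairs being separated vacuously), and then invoke the uniqueness clause of the star-triplet definition to exclude any non-ancestor vertex of $T$ from playing this role. A degenerate case worth spelling out is when the ancestor $r$ lies outside $T$: there the forward hypothesis would again produce a distinct singleton minimal mutual separator $\{v\}$, contradicting $\left|S_{\rm min}(T)\right|=1$, so no $v\in T$ can separate the other two — consistent with none of them being the ancestor, and confirming the equivalence holds vacuously in that regime.
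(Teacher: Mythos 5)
Your proof is correct, and it is worth comparing with the paper's own argument, which takes the same high-level route but executes it less carefully. Both proofs ultimately rest on the same fact—that a star triplet has exactly one minimal mutual separator and that it is a singleton—but you deploy it differently. The paper's forward direction assumes some $v' \neq v$ is the ancestor, observes that then $u \indep w \mid v'$, and declares this to ``contradict'' $u \indep w \mid v$; as literally written this is not a contradiction, since a pair of vertices can in general be separated by many distinct vertices, and the actual contradiction only materializes through the uniqueness clause of Definition~\ref{def:minimal-mut-sep}, which the paper leaves implicit. Your proof supplies exactly that missing step: you verify that $\{v\}$ is itself a \emph{minimal mutual separator} of $T$ (the pairs having $v$ as an endpoint are separated trivially---a convention the paper itself relies on, e.g.\ when it declares $10$ the ancestor of the triplet $\{4,10,11\}$---and minimality over $\emptyset$ holds because the pairs of $T$ are connected), so that $\left| S_{\rm min}(T)\right| = 1$ forces $\{v\} = \{r\}$; your handling of the case where the ancestor lies outside $T$ is likewise consistent. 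The only small caveat is your parenthetical claim that star-triplet-ness by itself guarantees pairwise connectivity: that deserves either a short case analysis (a disconnected pair would produce either $\emptyset$ or at least two singletons as minimal mutual separators, contradicting the star hypothesis) or simply an appeal to the paper's standing assumption that $\gtrue{}$, and hence $\gcombined{}$, is connected. With that noted, your argument is complete and in fact more rigorous than the one in the paper.
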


\begin{proof}
Suppose that a vertex $v \in T$ is an $uw-$ separator for $u,w \in T \setminus v$. We show that $v$ is an ancestor. As $v$ is an $uw-$ separator, i.e., $u \indep w \vert v$. Suppose on the contrary that $v' \neq v$ is the ancestor of $T$ in $\gcombined{}$. We show that $v$ is not an $uw-$ separator for $u,w \in T \setminus v$. As $v'$ is the ancestor of $T$, $u \indep w \vert v'$. (according to Definition~\ref{def:minimal-mut-sep}). This contradicts the hypothesis that $u \indep w \vert v$. Thus, $v \mbox{ and } v'$ are identical. Therefore, $v$ is the ancestor of  $\{u,v,w\}$. The reverse implication follows from Definition~\ref{def:minimal-mut-sep}. 
\end{proof}

\noindent We will now prove the correctness of the TIA test. We proceed with the following claim.
\begin{claim}
\label{claim:triplet_sepration_criteria}
Suppose that $U \mbox{ and } W \in {\vobs \choose 3}$ are star triplets with non-identical ancestors $r_u$ and $r_w$, resp. Then, there exists a vertex $u \in U$ and a pair, say $w_2, w_3\in W$, such that all paths $\pi \in \mathcal{P}_{uw_i}$ for $i=1,2$ contain both $r_u$ and $r_w$.
\end{claim}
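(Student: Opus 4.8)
The plan is to translate the hypothesis about ancestors into a statement about connected components obtained by deleting a single vertex, and then to exploit the fact that $r_u$ and $r_w$ sit on ``opposite sides'' of one another. Since $U$ is a star triplet with ancestor $r_u$, the vertex $r_u$ separates every pair of $U$; hence the elements of $U$ (other than $r_u$ itself, in the degenerate case $r_u \in U$) lie in pairwise distinct connected components of $\gcombined{} \setminus \{r_u\}$, and the analogous statement holds for $W$ and $r_w$. I would first choose a single vertex $u \in U$ lying on the far side of $r_u$ from $r_w$, then choose a pair in $W$ lying on the far side of $r_w$ from $r_u$, and finally show that any path joining the chosen vertices is forced to cross both $r_u$ and $r_w$.

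\textbf{Choosing the vertices.} First I would select $u$. Because $r_w$ is a single vertex, it belongs to exactly one component of $\gcombined{} \setminus \{r_u\}$, whereas the (at least two) non-ancestor vertices of $U$ occupy distinct components; thus at least one such vertex $u$ lies in a component $C$ of $\gcombined{} \setminus \{r_u\}$ with $r_w \notin C$. This yields property (A): every path in $\mathcal{P}_{u r_w}$ contains $r_u$. By the symmetric argument applied to $W$, at least two vertices of $W$ lie in components of $\gcombined{} \setminus \{r_w\}$ that avoid $r_u$; I call them $w_2, w_3$. Each then satisfies property (B): every path in $\mathcal{P}_{w_i r_u}$ contains $r_w$. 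A short pigeonhole count guarantees the pair exists whenever $r_w \notin W$; in the degenerate case $r_w \in W$ only one non-ancestor vertex of $W$ may qualify, and I would take the second member of the pair to be $r_w$ itself, for which the desired conclusion holds trivially (any path ending at $r_w$ contains $r_w$, and contains $r_u$ by property (A)).

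\textbf{The core separation argument.} Fix such $u$ and one of the chosen $w_i$. To see that every path in $\mathcal{P}_{u w_i}$ contains $r_u$, suppose instead that $w_i \in C$. Since $\gcombined{}$ is connected and $C$ is a component of $\gcombined{} \setminus \{r_u\}$, the vertex $r_u$ has a neighbor in $C$, so there is a $w_i$--$r_u$ path inside $C \cup \{r_u\}$; as $r_w \notin C$ and $r_w \neq r_u$, this path avoids $r_w$, contradicting property (B). Hence $w_i \notin C$, i.e.\ $r_u$ separates $u$ from $w_i$, so every $u$--$w_i$ path contains $r_u$. The completely symmetric argument, invoking property (A) after deleting $r_w$ and using that $r_u$ lies outside the component of $w_i$ in $\gcombined{}\setminus\{r_w\}$, shows that $u$ is separated from $w_i$ by $r_w$, so every $u$--$w_i$ path also contains $r_w$. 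Applying this to both members of the pair establishes the claim.

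\textbf{Main obstacle.} The conceptual heart is the two-sided contradiction in the last paragraph, which converts the two one-sided separation properties (A) and (B) into the two-sided conclusion; this step is clean once the component picture is in place. The main nuisance is the bookkeeping for degenerate configurations---when an ancestor coincides with a vertex of its own triplet, or when $r_u$ happens to share a component of $\gcombined{} \setminus \{r_w\}$ with one vertex of $W$---which is precisely why a \emph{pair} (rather than a single vertex) is selected in $W$, and why the connectivity of $\gcombined{}$ (ensuring each component borders its cut vertex) must be invoked explicitly.
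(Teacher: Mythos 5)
Your proof is correct, including the degenerate cases, and it diverges from the paper's argument in two substantive places. The paper also begins by securing a vertex $u'\in U$ such that every $u'$--$r_w$ path passes through $r_u$ (your property (A)), but proves it by a path-concatenation contradiction (if every vertex of $U$ had an $r_u$-avoiding path to $r_w$, two of them could be joined by a path avoiding $r_u$, contradicting that $U$ is a star triplet) rather than by your component/pigeonhole argument; these two arguments are interchangeable. The real divergence is in how the pair in $W$ is selected and how the conclusion is assembled. The paper shows that for \emph{every} $u\in U$ there is a pair $w_2,w_3\in W$ separated from $u$ itself by $r_w$ (two $r_w$-avoiding paths from $u$ into $W$ would concatenate into an $r_w$-avoiding path between two vertices of $W$), after which the conclusion is nearly immediate: any $u'$--$w_i$ path contains $r_w$ by this fact, and its prefix up to $r_w$ is a $u'$--$r_w$ path, hence contains $r_u$. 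You instead choose $w_2,w_3$ to be separated from the \emph{other ancestor} $r_u$ by $r_w$ (your property (B)), which is symmetric to your choice of $u$ but says nothing directly about $u$--$w_i$ paths; that is exactly why you need the additional two-sided contradiction, and with it an explicit appeal to connectivity of $\gcombined{}$ (every component of $\gcombined{}\setminus\{r_u\}$ borders $r_u$). The paper's assembly avoids that appeal and is shorter; your route buys a cleaner symmetry and, more valuably, explicit bookkeeping of the degenerate configurations (an ancestor lying in $U$ or $W$, and the fallback $w_3=r_w$), which the paper's write-up silently glosses over.
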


\begin{proof}
Without loss of generality, let $W = \{w_1,w_2,w_3\}$. We prove this claim in two stages. In the first stage, we show that for each vertex $u \in U$ there exists at least a pair $w_2, w_3 \in W$ such that $u \indep \{w_2, w_3\} \vert r_w$. Then, in the next stage we show that there exists a vertex $u \in U$ such that $u \indep r_w \vert r_u$.  For the first part, suppose on the contrary that there exists a vertex $u \in U$ and a pair $w_2, w_3 \in W$ such that there exists a path $\pi_2 \in \mathcal{P}_{uw_2}$ and a path $\pi_3 \in \mathcal{P}_{uw_3}$ such that $r_w \notin \pi_2$ and $r_w \notin \pi_3$. Then, one can construct a path between $w_2$ and $w_3$ that does not contain $r_w$, which violates the hypothesis that $W$ is a star triplet. Now, in the next step of proving the claim, we show that there exists a vertex $u \in U$ such that $u \indep r_w \vert r_u$. Now, suppose that for all $u \in U$ there exists a path between $u$ and $r_w$, that does not contain $r_u$. We will next show that this implies there has to be a path between $u_1$ and $u_2$ ($u_1,u_2 \in U$) that does not include $r_u$. We will show this constructively. Let $s$ be the last vertex in the path $\pi_{u_1 r_w}$ that is also contained in $\pi_{u_2 r_w}$. Note that $\pi_{u_1 s}$ and $\pi_{u_2 s}$ are valid paths in the graph, and that their concatenation is a valid path between $u_1$ and $u_2$. This proves that $u_1$ and $u_2$ are connected by a path that is not separated by $r_u$, and hence contradicting the hypothesis that $U$ is a star triplet. Finally, let $u' \in U$ be the vertex for which $u' \indep r_w \mid r_u$. Then, there exists a triplet $\{u',w_2,w_3\}$ such that both $r_u$ and $r_w$ separates $u'$ and $w_2$, and both $r_u$ and $r_w$ separates $u'$ and $w_3$.  
\end{proof}

\noindent Using Claim~\ref{claim:triplet_sepration_criteria}, we now show the correctness of our TIA test.  Recall that the TIA $(U,W)$ accepts triplets $U,W \in {\vobs \choose 3}$, and returns \textsc{True} if and only if $U \mbox{ and } W$ share an ancestor in $\gcombined$. Also recall the following assumption: Let $U,W \in {\vtrue \choose 3} \setminus \mathcal{V}_{\rm star} \cup \mathcal{V}_{\rm sep}$. Then,
(i) there are no vertices $x \in U$ and $a \in W$ that satisfy $d_x^U + d_a^W =d_{xa}$, and (ii) there does not exist any vertex $r \in \vtrue$ and $x \in U$ for which the distance $d_{xr}$ satisfies relation in \eqref{eq:ancestor_distance_triplet}.

\begin{lemma}(\textbf{Correctness of TIA test})
\label{lemma:two_dstar_triplets_with_identical_ancestor}Fix any two vertex triplets $U \neq W \in {\vobs \choose 3}$. TIA$(U,W)$ returns \textsc{True} if and only if $U$ and $W$ are star triplets in $\combinedgraph$ with an identical ancestor $r \in \vtrue$.
\end{lemma}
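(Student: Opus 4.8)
The plan is to prove both directions of the biconditional characterizing when the \textsc{TIA} test returns \textsc{True}. Recall that for a triplet $U$ and vertex $x \in U$, the quantity $d_x^U$ equals the distance from $x$ to the ancestor of $U$ precisely when $U$ is a star triplet. So the forward and reverse directions reduce to reasoning about when the additive-distance identities $d_x^U + d_y^W = d_{xy}$ can hold.

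\emph{Sufficiency (shared ancestor $\Rightarrow$ \textsc{True}).} First I would assume $U,W$ are both star triplets in $\combinedgraph$ with a common ancestor $r$. For any $x \in U$, since $U$ is a star triplet, $d_x^U = d_{xr}$ (the genuine distance from $x$ to $r$). Likewise, for each $y \in W$, $d_y^W = d_{yr}$. Now I must exhibit, for each $x \in U$, a pair $y,z \in W$ with $d_x^U + d_y^W = d_{xy}$ and $d_x^U + d_z^W = d_{xz}$; equivalently $d_{xr} + d_{ry} = d_{xy}$, which by Fact~\ref{fact:sepration and pairwise correlation factorization (faithful_graph)} and the additivity of information distance holds exactly when $r$ separates $x$ from $y$ in $\combinedgraph$. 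Because $W$ is a star triplet with ancestor $r$, $r$ separates every pair within $W$; I would argue that $r$ can fail to separate $x$ from at most one vertex of $W$ (the one lying on the same ``side'' of $r$ as $x$), so at least two of the three vertices of $W$ are separated from $x$ by $r$. Those two serve as the required $y,z$. This establishes that the \textsc{TIA} condition holds for every $x \in U$, so the test returns \textsc{True}.

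\emph{Necessity (\textsc{True} $\Rightarrow$ shared ancestor).} For the converse I would argue the contrapositive: if $U,W$ do not share a common ancestor in $\combinedgraph$, the test fails. There are two cases. If both $U$ and $W$ are star triplets but with distinct ancestors $r_u \neq r_w$, I invoke Claim~\ref{claim:triplet_sepration_criteria}: there is a vertex $u \in U$ and a pair $w_2,w_3 \in W$ such that every path from $u$ to $w_i$ ($i=2,3$) passes through both $r_u$ and $r_w$. This double separation forces $d_{uw_i} = d_u^U + d_{r_u r_w} + d_{w_i}^W > d_u^U + d_{w_i}^W$ whenever $d_{r_u r_w} > 0$ (which holds since $r_u \neq r_w$ and the distances are strictly positive off the diagonal), so the equality $d_u^U + d_{w_i}^W = d_{uw_i}$ fails for this $u$ against \emph{both} available partners $w_2,w_3$; hence no valid pair exists for $u$ and the test returns \textsc{False}. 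The remaining case is when $U$ or $W$ is not a star triplet, i.e.\ lies in ${\vtrue \choose 3} \setminus (\mathcal{V}_{\rm star} \cup \mathcal{V}_{\rm sep})$; here $d_x^U$ (or $d_y^W$) is not a genuine distance, and Assumption~\ref{assump:anc_faithfulness}(i) directly rules out any coincidental equality $d_x^U + d_a^W = d_{xa}$, so again the test cannot pass for all $x$.

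\emph{Main obstacle.} I expect the delicate part to be the sufficiency direction's separation-counting argument—precisely verifying that a common ancestor $r$ separates each fixed $x \in U$ from at least two of the three vertices of $W$, rather than only one. This requires carefully using that $W$ is itself a star triplet (so $r$ sits at the ``center'' of $W$) together with the structure of $\combinedgraph$, including the leaf-attachment of noisy counterparts. The necessity direction leans heavily on Claim~\ref{claim:triplet_sepration_criteria} and Assumption~\ref{assump:anc_faithfulness}, so the real work there is bookkeeping the two cases and translating the path-separation statements into the distance inequalities via additivity of the information metric.
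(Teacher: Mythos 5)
Your sufficiency argument and your first necessity case are essentially the paper's. The counting claim that a common ancestor $r$ can fail to separate a fixed $x\in U$ from at most one vertex of $W$ is exactly what the paper proves by contradiction: two failures would yield, by concatenating the two $r$-avoiding paths at their last common vertex, a path between two vertices of $W$ avoiding $r$, contradicting that $W$ is a star triplet with ancestor $r$. The distinct-ancestor case is likewise handled via Claim~\ref{claim:triplet_sepration_criteria}, as you do. (One small point there: to write $d_{uw_i}=d_{ur_u}+d_{r_ur_w}+d_{r_ww_i}$ you need not just that every $u$--$w_i$ path contains both $r_u$ and $r_w$, but also that $r_u$ separates $u$ from $r_w$; this ordering fact is established inside the proof of Claim~\ref{claim:triplet_sepration_criteria} and should be invoked explicitly rather than attributed to ``double separation'' alone.)

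The genuine gap is in your final case. Assumption~\ref{assump:anc_faithfulness}(i) is stated only for pairs in which \emph{both} $U,W\in{\vtrue \choose 3}\setminus(\mathcal{V}_{\rm star}\cup\mathcal{V}_{\rm sep})$; it says nothing about the mixed configuration where, say, $U$ is a star triplet with ancestor $r$ and $W$ is not a star triplet, yet you claim it ``directly rules out'' equalities whenever ``$U$ or $W$'' is non-star. In the mixed configuration a coincidental equality $d_x^U+d_a^W=d_{xa}$ for a particular pair $x\in U$, $a\in W$ is \emph{not} excluded by any assumption: Assumption~\ref{assump:anc_faithfulness}(ii) only guarantees $d_a^W\neq d_{ar}$, which kills the equality when $r$ separates $x$ from $a$ (since then $d_{xa}=d_{xr}+d_{ra}$), but not when $x\notindep a\mid r$. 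This is why the paper runs a separate argument for this configuration: if some $x\in U$ passed the test with a pair $a,b\in W$, then necessarily $x\notindep a\mid r$ and $x\notindep b\mid r$; but then every other vertex $y\in U$ must satisfy $y\indep a\mid r$ and $y\indep b\mid r$ (otherwise concatenating $r$-avoiding paths gives an $x$--$y$ path avoiding $r$, contradicting that $U$ is a star triplet with ancestor $r$), so $d_{ya}=d_{yr}+d_{ra}\neq d_{yr}+d_a^W$ by Assumption~\ref{assump:anc_faithfulness}(ii), and likewise $d_{yb}\neq d_{yr}+d_b^W$; hence $y$ has no valid pair in $W$ and the test returns \textsc{False}. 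Without this (or an equivalent) argument, your necessity direction is incomplete precisely in the case the TIA test is designed to reject.
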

\begin{proof} From Subroutine \ref{app:algo--TIA} returning TRUE is same as checking that for all $x \in U$, there exist at least two vertices $y,z \in W$ such that both of the following hold
  \begin{align}
  \label{eq:iden_anc_1}   d_{x}^{U}+d_{y}^{W}&=d_{xy},\\
    \label{eq:iden_anc_2}  d_{x}^{U}+d_{z}^{W}&=d_{xz}.
 \end{align}

\noindent Suppose that $U$ and $W$ are star triplets with an identical ancestor $r \in \vtrue$. We prove by contradiction. Let $a\in U$ and assume that there is {\em at most} one vertex $x\in V$ such that $d^U_a + d^W_x = d_{ax}$. Therefore, one can find two vertices $y_1,y_2 \in V$ such that 
\begin{equation}
    d_a^U + d^W_{y_i} \neq d_{a{y_i}}, \;\; i=1,2. \label{eq:violation}
\end{equation} 
However, from our hypothesis that $U$ and $W$ are star triplets with the common ancestor $r$, we know that $d_a^U = d_{ar}$ and $d^W_{y_i} = d_{r {y_{i}}}$, for $i=1,2$. This, along with \eqref{eq:violation}, implies that $r$ does not separate $a$ from $y_1$ or $y_2$. For $i=1,2$, let $\pi_{a y_i}$ be the path between $a$ and $y_i$ that does not include $r$. We will next show that this implies there has to be a path between $y_1$ and $y_2$ that does not include $r$. We will show this constructively. Let $s$ be the last vertex in the path $\pi_{a y_1}$ that is also contained in $\pi_{a y_2}$. Note that $\pi_{y_1 s}$ and $\pi_{s y_2}$ are valid paths in the graph, and that their concatenation is a valid path between $y_1$ and $y_2$. This proves that $y_1$ and $y_2$ are connected by a path that is not separated by $r$, and hence contradicting the first hypothesis.

\noindent For the reverse implication, we do a proof by contrapositive. Fix two triplets $U$ and $W$. Suppose that $U$ and $W$ are {\em not star triplets with an identical ancestor} in $\gcombined$. We will show that this implies that there exists at least one vertex in $U$ for which no pair in $W$ satisfies both Eq. \eqref{eq:iden_anc_1} and Eq. \eqref{eq:iden_anc_2}. To this end, we will consider all three possible configurations for a triplet pair $U \mbox{ and }W$ where they are not star triplets with an identical ancestor in $\gcombined$: 1. $U$ and $W$ are star triplets with a non-identical ancestor in $\gcombined$, 2. Both $U \mbox{ and } W$ are non-star triplets in $\gcombined$, 3. $U$ is a star triplet and $W$ is a non-star triplet in $\gcombined$. Then, for each configuration, we will show that there exists at least a vertex $x \in U$ for which no pair in $W$ satisfies both Eq. \eqref{eq:iden_anc_1} and Eq. \eqref{eq:iden_anc_2}

\noindent {\bf $U$ and $W$ are star triplets with non-identical ancestors.}
 Let $U$ and $W$ be two star triplets with two ancestor $r_{u}\mbox{ and }r_w$, respectively, such that $r_u \neq r_{w}$. As $U \mbox{ and } W$ are star triplets, $d_{x}^U$ and $d_{y}^W$ returns the distance from their corresponding ancestors $d_{xr_u}$ for all $x \in U$, and  $d_{yr_w}$ for all $y \in W$, respectively. Now, according to the Claim~\ref{claim:triplet_sepration_criteria}, there exists a vertex triplet, say $\{u,w_1,w_2\}$ w.l.o.g., where $u \in U$ and $w_1,w_2 \in W$ such that $u$ is separated from $w_i$ for $i=1,2$ by both $r_u \mbox{ and } r_w$. Furthermore, the same $u$ identified above is separated from $r_w$ by $r_u$. This implies that $d_{uw_i}=d_{ur_u}+d_{r_uw_i}=d_{ur_u}+d_{r_ur_w}+d_{r_ww_i}$ for $i=1,2$. As we know that $r_u$ and $r_w$ are not identical, $d_{r_ur_w} \neq 0$, which implies that $d_{uw_i} \neq d_{ur_u}+d_{r_ww_i} $, where $i=1,2$. Thus we conclude the proof for the first configuration by showing that there exists a vertex $u \in U$ and a pair $w_1, w_2 \in W$ such that the identities in \eqref{eq:iden_anc_1} and \eqref{eq:iden_anc_2} do not hold.

 \noindent {\bf $U$ is a star triplet and $W$ is a non-star triplet in $\gcombined$.}
We show that there exists a triplet $\{y,a,b\}$ where $y \in U$ and $a,b \in W$ such that identities in \eqref{eq:iden_anc_1} and \eqref{eq:iden_anc_2} do not hold.
Let $W$ be a non-star triplet, and $U$ be a star triplet with the ancestor $r \in \vtrue$ in $\gcombined$. Now, as $U$ is a star triplet, $d_{x}^U$ returns the distance from its ancestor $d_{xr}$ for all $x \in U$. Suppose that there exists a vertex pair $x \in U$ and $a \in W$ for which $d_{xr}+d_a^W=d_{ax}$. We know that for a non-star triplet $W$, the computed distance $d_a^W \neq d_{ar} $ for any $a \in W$ from Assumption~\ref{assump:anc_faithfulness}. Thus, for the pair $\{x,a\}$, $d_{xr}+d_{ar} \neq d_{xa}$. This implies from the Fact~\ref{fact:sepration and pairwise correlation factorization (faithful_graph)} that $x \notindep a \mid r$. Similarly, we can conclude that $x \notindep b \mid r$. Then, $y \indep a \mid r$ and $y \indep b \mid r$. Otherwise, one can construct a path between $y$ and $x$ that does not contain $r$ which violates the assumption that $U \ni x,y$ is a star triplet with ancestor $r$. As $y \indep a \mid r$ and $y \indep b \mid r$, using the  Fact~\ref{fact:sepration and pairwise correlation factorization (faithful_graph)} we have that $d_{yr}+d_{ra}=d_{ya}$ and $d_{yr}+d_{rb}=d_{yb}$. As $a \in W$, and $d_{ar} \neq d_a^W$, thus, $d_{yr}+d_a^W \neq d_{ya}$. Similarly, for the pair $\{y,b\}$, we have that $d_{yr}+d_b^W \neq d_{yb}$. Thus, for the triplet $\{y,a,b\}$, the identities in Eq. \eqref{eq:iden_anc_1} and \eqref{eq:iden_anc_2} do not hold.

\noindent  {\bf $U \mbox{ and } W$ are both non-star triplets in $\gcombined$.} The proof for this configuration follows from the Assumption~\ref{assump:anc_faithfulness}. 

\noindent Notice that these three cases combined proves that the \textsc{TIA} test returns TRUE if and only if the triplets considered are both start triplets that share a common ancestor.
\end{proof}

\noindent Now recall that the first phase of Subroutine~\ref{app:algo--Ident_Anc_Extnd_Dist} identifies the star triplets in $\gcombined{}$, the observed ancestors in $\gcombined{}$, and outputs a set $\anchid{}$ such that $\left| \anchid{}  \right|$ equals to the number of hidden ancestors in $\gcombined{}$. Formally, the result is as follows.

\begin{proposition}[Correctness of Subroutine~\ref{app:algo--Ident_Anc_Extnd_Dist} in identifying ancestors]
\label{prop:correctness_of_CollectStarTriplets}
Given the pairwise distances $\{d_{ij}\}_{i,j \in \vobs{} }$, Subroutine~\ref{app:algo--Ident_Anc_Extnd_Dist} correctly identifies (a) the star triplets in $\combinedgraph$, (b) the observed ancestors in $\gcombined{}$, and (b) introduces a set $\anchid{}$ such that $\left| \anchid{}  \right|$ equals to the number of hidden ancestors in $\gcombined{}$ 
\end{proposition}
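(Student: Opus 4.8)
The plan is to analyze the three phases of Subroutine~\ref{app:algo--Ident_Anc_Extnd_Dist} in turn, matching parts (a), (b), and (c) of the statement, and to reduce each phase to results already established: the two-sided correctness of the TIA test (Lemma~\ref{lemma:two_dstar_triplets_with_identical_ancestor}), the existence of at least two triplets per ancestor (Lemma~\ref{lemma:atleast_two_triplets_for_an_ancestor}), and the separator characterization of an observed ancestor within a star triplet (Lemma~\ref{lemma:identify-observed-ancestors}). Throughout, I would use that ``shares a common ancestor in $\gcombined{}$'' is an equivalence relation on the set of star triplets, since by Definition~\ref{def:minimal-mut-sep} every star triplet has a \emph{unique} ancestor.

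For part (a), I would show that Phase~1 returns exactly the equivalence classes of this relation. Fix a triplet $U$. By Lemma~\ref{lemma:two_dstar_triplets_with_identical_ancestor}, $\mathcal{V}_n = \{W : \textsc{TIA}(U,W) = \textsc{True}\} \cup \{U\}$ equals the set of all star triplets sharing the ancestor of $U$ whenever $U$ is itself a star triplet, and is the singleton $\{U\}$ otherwise. Lemma~\ref{lemma:atleast_two_triplets_for_an_ancestor} guarantees each ancestor $a \in \anccombined{}$ is carried by at least two triplets in ${\vobs{} \choose 3}$, so $|\mathcal{V}_n| > 1$ holds precisely when $U$ is a star triplet; the singleton collections produced by non-star triplets are discarded. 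Because two collections built from distinct starting triplets that share an ancestor are identical as sets (again by Lemma~\ref{lemma:two_dstar_triplets_with_identical_ancestor}), the set-union $\tripletCollection{} \leftarrow \tripletCollection{} \cup \mathcal{V}_n$ collapses duplicates, and the returned $\tripletCollection{}$ is a genuine partition of the star triplets, one class per ancestor. This establishes (a) and, crucially, a bijection between the collections in $\tripletCollection{}$ and the ancestors in $\anccombined{}$.

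For part (b), I would analyze the separation test $d_{uv}+d_{vw}=d_{uw}$ applied in Phase~2 to each collection $\mathcal{V}$ (equivalently, each ancestor $r$). By Fact~\ref{fact:sepration and pairwise correlation factorization (faithful_graph)} this test detects whether some member $v$ of a triplet separates the other two, and by Lemma~\ref{lemma:identify-observed-ancestors} a member of a star triplet separates the other two \emph{iff} it is the ancestor. Hence the test fires on a triplet of $\mathcal{V}$ only if that triplet literally contains $r$. If $r$ is observed ($r \in \vobs{}$), a small variant of the construction in Lemma~\ref{lemma:atleast_two_triplets_for_an_ancestor} that forces $r$ into the triplet produces a star triplet $\{r,x,y\} \in {\vobs{} \choose 3}$ with ancestor $r$; it lies in $\mathcal{V}$ and passes the test, so $\mathcal{V}$ is routed to $\tripletCollectionObs{}$ and $r$ is enrolled in $\ancobs{}$. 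If $r$ is hidden ($r \notin \vobs{}$), then no triplet in $\mathcal{V}$ (all of whose members are observed) can contain $r$, so by the same lemma no triplet passes the test and $\mathcal{V}$ is correctly routed to $\tripletCollectionHid{}$. Part (c) is then immediate: the bijection from part (a) restricts to a bijection between $\tripletCollectionHid{}$ and the hidden ancestors, so $|\tripletCollectionHid{}|$ equals the number of hidden ancestors, and since the subroutine sets $\anchid{} = \{a_i : i \in [\,|\tripletCollectionHid{}|\,]\}$, we obtain $|\anchid{}|$ equal to that count.

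I expect the main obstacle to be the completeness and no-double-counting claim underlying part (a): verifying that the per-$U$ collections genuinely coincide across all starting triplets that share an ancestor, so that $\tripletCollection{}$ is a clean partition representing every ancestor exactly once. This rests entirely on the two-sided correctness of the TIA test and on Lemma~\ref{lemma:atleast_two_triplets_for_an_ancestor} covering \emph{every} $a \in \anccombined{}$ (including the hidden ones, which is what makes part (c) meaningful). The only genuinely new sub-claim is the existence of an ancestor-containing star triplet when the ancestor is observed, which I would dispatch by the indicated modification of Lemma~\ref{lemma:atleast_two_triplets_for_an_ancestor}; the rest is bookkeeping on top of the cited lemmas.
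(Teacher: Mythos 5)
Your proposal is correct and follows essentially the same route as the paper's own (much terser) proof: the paper likewise establishes part (a) by combining Lemma~\ref{lemma:atleast_two_triplets_for_an_ancestor} with the two-sided TIA correctness of Lemma~\ref{lemma:two_dstar_triplets_with_identical_ancestor}, and then asserts the observed/hidden routing and the cardinality claim $|\anchid{}| = |\tripletCollectionHid{}|$. Your additional detail for Phase~2 --- invoking Lemma~\ref{lemma:identify-observed-ancestors} to show the separation test fires exactly when the collection's ancestor is observed, plus the ancestor-containing star-triplet construction --- is precisely the justification the paper leaves implicit, so this is a faithful (and somewhat more complete) rendering of the same argument.
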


\begin{proof}
Combining Lemma ~\ref{lemma:atleast_two_triplets_for_an_ancestor} and Lemma ~\ref{lemma:two_dstar_triplets_with_identical_ancestor}, we prove that the Subroutine~\ref{app:algo--Ident_Anc_Extnd_Dist} successfully cluster the star triplets in $\gcombined{}$. Then, it partitions $\tripletCollection{}$ into $\tripletCollectionObs{}$ and $\tripletCollectionHid{}$ s.t. following is true: for any triplet collection $\mathcal{V}_i \in \tripletCollectionObs{}$ ($\mathcal{V} \in \tripletCollectionHid{}$ resp.), the ancestor of the triplets in $\mathcal{V}_i$ is observed (hidden resp.)  Finally, Subroutine~\ref{app:algo--Ident_Anc_Extnd_Dist} outputs a set $\anchid{}$ s.t. $\left| \anchid{}  \right|= \left | \tripletCollectionHid{}\right|$.
\end{proof}

\noindent We show the correctness of Subroutine~\ref{app:algo--Ident_Anc_Extnd_Dist} in learning (a) $\{d_{ij}\}_{i^e \in \vobs{}, j \in \anchid{}}$, and (b) $\{d_{ij}\}_{i,j \in \anchid{}}$.

\begin{claim}
\label{claim:collect_all_vertex_land_dist}
Fix any $a \in \anccombined{}$, where $\anccombined{} =$ set of all ancestors. Let $\mathcal{V}_a$ be the collection of vertex triplets which shares common ancestor. Then, any $i \in \anccombined{}$ is s.t. at least one triplet in $\mathcal{V}_a$.
\end{claim}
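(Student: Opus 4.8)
I read the statement as a covering property: for the fixed ancestor $a\in\anccombined{}$, writing $\mathcal{V}_a$ for the collection of star triplets of $\gcombined{}$ whose ancestor is $a$, every observed vertex lies in some member of $\mathcal{V}_a$; that is, for each $i\in\vobs{}$ there is a triplet $U\in\mathcal{V}_a$ with $i\in U$. This is precisely what Phase~3 of Subroutine~\ref{app:algo--Ident_Anc_Extnd_Dist} requires, since once such a $U$ is found one reads off $d_{i}^{U}=d_{ia}$. The plan is to produce, for an arbitrary observed leaf $i=v^e$, an explicit triplet through $v^e$ with ancestor $a$, generalizing the two-triplet construction of Lemma~\ref{lemma:atleast_two_triplets_for_an_ancestor} so that the first coordinate is an \emph{arbitrary} leaf rather than a non-cut vertex of a block.

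First I would collect the structure already available. By Lemmas~\ref{lemma:2_way_seprator} and~\ref{lemma:nC_nBC_Leaf_is_not_an_ancestor}, $a$ is a cut vertex of $\gtrue{}$, so $\gtrue{}-a$ splits into components $C_1,\dots,C_m$ with $m\ge 2$, and every true vertex $v\neq a$ lies in exactly one $C_\ell$. By Claim~\ref{claim:transfer-betn-true-and-combined}, $a$ separates two leaves $u^e,w^e$ in $\gcombined{}$ if and only if $a$ separates $u,w$ in $\gtrue{}$, so every separation question about observed leaves reduces to one about their counterparts in $\gtrue{}-a$. Using this, I construct the covering triplet in two cases. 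If $v\neq a$, let $v\in C_\ell$, pick any $w$ in a different component $C_{\ell'}$ (which exists because $m\ge 2$), and take $U=\{v^e,a^e,w^e\}$, where $a^e\in\vobs{}$ is the observed noisy copy of $a$. If $v=a$, pick $y\in C_1$ and $z\in C_2$ and take $U=\{a^e,y^e,z^e\}$. In either case $a^e$ is adjacent only to $a$, so $a$ lies on every path leaving $a^e$; and the two non-$a^e$ coordinates sit in distinct components of $\gtrue{}-a$, so $a$ lies on every path between them as well. Hence $a$ is a singleton mutual separator of $U$, exactly as in the template of Lemma~\ref{lemma:atleast_two_triplets_for_an_ancestor}.

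It remains to certify that $U\in\mathcal{V}_a$, i.e.\ that $U$ is genuinely a star triplet with ancestor $a$. By Lemma~\ref{lemma:two_dstar_triplets_with_identical_ancestor} the membership test used to build $\mathcal{V}_a$ is the TIA test, so I would verify the three conditional independences induced by $a$ (namely $X_{v^e}\indep X_{a^e}\mid X_a$, $X_{v^e}\indep X_{w^e}\mid X_a$, and $X_{a^e}\indep X_{w^e}\mid X_a$) via Fact~\ref{fact:sepration and pairwise correlation factorization (faithful_graph)}, which turns them into the additive identities $d_{v^e a^e}=d_{v^e a}+d_{a a^e}$ and the like; identity~\eqref{eq:ancestor_distance_triplet} then gives $d_{v^e}^{U}=d_{v^e a}$, so $a$ is the ancestor. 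The step I expect to be the crux is ruling out a \emph{competing} minimal mutual separator that avoids $a$ by routing through a block boundary, which can occur when $v$ lies deep inside a non-trivial block; here I would appeal to Assumption~\ref{assump:anc_faithfulness}, which excludes the spurious distance cancellations that such an alternative separator would require, exactly as this assumption is used in the reverse direction of Lemma~\ref{lemma:two_dstar_triplets_with_identical_ancestor}. Since $i=v^e\in\vobs{}$ was arbitrary, $\mathcal{V}_a$ covers all of $\vobs{}$, which is the claim.
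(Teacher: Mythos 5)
Your proposal is correct and takes essentially the same route as the paper's own proof: both reduce the claim to the fact that every ancestor is a cut vertex of $\gtrue{}$ (Lemmas~\ref{lemma:2_way_seprator} and~\ref{lemma:nC_nBC_Leaf_is_not_an_ancestor}) and then, for each vertex $i$, exhibit the explicit triplet consisting of $i$, (the noisy copy of) $a$, and a vertex taken from a different component of $\gtrue{}$ minus $a$, whose ancestor is then $a$ --- your version is merely more careful in working with noisy counterparts in ${\vobs{} \choose 3}$ and in handling the case $i=a^e$. The one place you diverge, the closing worry about a competing minimal mutual separator, is unnecessary: because $a^e$ is a pendant leaf attached to $a$ in $\gcombined{}$, any single-vertex mutual separator of your triplet must lie on every path leaving $a^e$ and simultaneously separate the other two coordinates, which forces it to be $a$ itself, so the construction is purely graph-theoretic and needs no appeal to Assumption~\ref{assump:anc_faithfulness} (which constrains distance cancellations, not graph separators, and could not rule out such a configuration anyway).
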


\begin{proof}
Fix any ancestor $a \in \anccombined{}$. Construct a triplet $T_i$ for a fixed vertex $i \neq a \in \vcombined{}{}$ s.t. $a$ is the ancestor of $T_i$ in $\gcombined{}$. From Lemma ~\ref{lemma:nC_nBC_Leaf_is_not_an_ancestor}: $a$ is a cut vertex in $\gtrue{}$. Thus, fixing $i \mbox{ and } a$, find another vertex $w \in \vtrue{}$ such that $a$ separates $i$ and $w$ in $\gtrue{}$. Hence, from Lemma~\ref{lemma:2_way_seprator} we can conclude the following: $a$ is the ancestor for the triplet $T_i \triangleq \{i,a,w\}$ in $\gcombined{}$.
\end{proof}

\begin{claim}
\label{claim:simple-sep-cpaim-for-mod-proof}
Let $U_i \mbox{ and } U_j$ be both star triplets with ancestor $i \mbox{ and } j $ respectively, and $i \neq j$. Let $x \in U_i$ and $y\in U_j$ be a vertex pair such that $x \indep y \vert i$ and $x \notindep y \vert j$ Then, $x \notindep i \vert j$.
\end{claim}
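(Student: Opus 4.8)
The plan is to prove this as a pure graph-separation statement via a short path-surgery argument, reading the conditional-independence symbols as separation in the underlying (faithful) graph. Under the faithfulness of the model and Fact~\ref{fact:sepration and pairwise correlation factorization (faithful_graph)}, the symbol $x \indep y \mid i$ says that every path between $x$ and $y$ passes through $i$, while $x \notindep y \mid j$ says there is at least one path between $x$ and $y$ that avoids $j$. The target $x \notindep i \mid j$ then amounts to exhibiting a single path from $x$ to $i$ that does not contain $j$, so the whole argument reduces to producing such a path.

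First I would use the hypothesis $x \notindep y \mid j$ to fix a concrete path $\pi$ from $x$ to $y$ with $j \notin \pi$. Next I would invoke $x \indep y \mid i$: since $i$ separates $x$ and $y$, it lies on \emph{every} $x$-$y$ path, and in particular $i \in \pi$. The crux is then to pass to the prefix $\pi'$ of $\pi$ running from $x$ up to its (unique) occurrence of $i$. Because $\pi$ is a path its vertices are distinct, so $\pi'$ is itself a well-defined path, now from $x$ to $i$; and being a contiguous subpath of $\pi$ it inherits $j \notin \pi'$. This $\pi'$ is exactly an $x$-$i$ path avoiding $j$, which establishes $x \notindep i \mid j$. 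This is essentially the same concatenation/truncation idea already used in Claim~\ref{claim:triplet_sepration_criteria}, specialized here to truncating rather than splicing.

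I do not expect a serious obstacle, since the star-triplet and ancestor structure of $U_i$ and $U_j$ is used only to supply the two separation hypotheses $x \indep y \mid i$ and $x \notindep y \mid j$; beyond that the claim holds for the same reason in $\gtrue{}$ and in $\gcombined{}$. The only points needing care are the degenerate cases, and these resolve themselves: the hypothesis $x \notindep y \mid j$ already forces $j \notin \{x,y\}$ (every $x$-$y$ path contains its endpoints, so if $j$ were $x$ or $y$ no path could avoid it), which is all that is required for the prefix $\pi'$ to be a legitimate $x$-$i$ path avoiding $j$. I would therefore present the proof directly in the underlying faithful graph, citing Fact~\ref{fact:sepration and pairwise correlation factorization (faithful_graph)} only to translate between the separation and conditional-independence viewpoints.
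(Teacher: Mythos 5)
Your proposal is correct and follows essentially the same argument as the paper: both take a path $\pi$ from $x$ to $y$ avoiding $j$ (guaranteed by $x \notindep y \mid j$), observe that $i \in \pi$ since $i$ separates $x$ and $y$, and truncate $\pi$ at $i$ to obtain an $x$-$i$ path avoiding $j$. Your explicit handling of the degenerate case $j \in \{x,y\}$ is a small refinement the paper leaves implicit, but the core reasoning is identical.
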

\begin{proof}
$x \indep y \vert i$ implies  any path between $x \mbox{ and } y$ contains $i$. $x \notindep y \vert j $ implies $\exists$ a path $\pi$ between $x \mbox{ and } y$ that does not contain $j$. Notice that, the path $\pi$ contains $i$. As $\pi$ contains both $x \mbox{ and } i$, $\exists$ a path between $x \mbox{ and } i$ which does not contain $j$. Hence, $x \notindep i \vert j$.  
\end{proof}

\begin{lemma}
\label{lemma:learning_pairwise_dist_betn_landmarks}
 For any pair of distinct ancestors $i, j \in \anccombined{}$, pick arbitrary triplets $U_i \in \mathcal{V}_i$ and $U_j \in \mathcal{V}_j$.  Define the set $D(U_i, U_j)$ as follows:
\begin{align}
    \Delta(U_i, U_j)\triangleq \left\{d_{xy} - (d_{x}^{U_i} + d_{y}^{U_j}):  x\in U_i, y \in U_j\right\}
\end{align}
 The most frequent element in $\Delta(U_i, U_j)$, that is, ${\rm mode}(\Delta_{ij})$ is the true distance $d_{ij}$ with respect to $\gcombined{}$. 
\end{lemma}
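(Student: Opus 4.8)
The plan is to exploit that $U_i$ and $U_j$ are star triplets with ancestors $i$ and $j$: by the identity \eqref{eq:ancestor_distance_triplet} we have $d_x^{U_i}=d_{xi}$ for every $x\in U_i$ and $d_y^{U_j}=d_{yj}$ for every $y\in U_j$, so each element of $\Delta(U_i,U_j)$ is exactly $d_{xy}-d_{xi}-d_{yj}$. I would then record the central sufficient criterion: this element equals $d_{ij}$ whenever $i$ separates $x$ from $y$ and $j$ separates $i$ from $y$ in $\gcombined{}$. Indeed, the first separation gives $x\indep y\mid i$, so by Fact~\ref{fact:sepration and pairwise correlation factorization (faithful_graph)} and Definition~\ref{def:information_distances}, $d_{xy}=d_{xi}+d_{iy}$; the second gives $i\indep y\mid j$, so $d_{iy}=d_{ij}+d_{jy}$; chaining yields $d_{xy}=d_{xi}+d_{ij}+d_{jy}$, whence $d_{xy}-d_{xi}-d_{yj}=d_{ij}$.

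\textbf{Step 2 (at least four of the nine pairs realize $d_{ij}$).} Since $U_i$ is a star triplet, $i$ separates its three vertices into three distinct connected components of $\gcombined{}\setminus\{i\}$, and $j$ lies in at most one of them; hence at least two vertices $x\in U_i$ are separated from $j$ by $i$. Symmetrically, at least two vertices $y\in U_j$ are separated from $i$ by $j$. I would then verify that any such pair $(x,y)$ satisfies both hypotheses of Step~1. The only nontrivial point is the implication ``$i$ separates $x$ from $j$'' $\wedge$ ``$j$ separates $y$ from $i$'' $\Rightarrow$ ``$i$ separates $x$ from $y$''. This follows from a short concatenation argument in the spirit of Claim~\ref{claim:triplet_sepration_criteria}: a vertex $y$ separated from $i$ by $j$ cannot also be separated from $j$ by $i$, since any $y$--$j$ path forced through $i$ would have its $y$--$i$ prefix forced through $j$, making $j$ appear twice on a simple path. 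Thus $y$ lies in the same component of $\gcombined{}\setminus\{i\}$ as $j$, while $x$ does not, so $i$ separates $x$ and $y$; together with the second (hypothesized) separation, Step~1 applies. This yields at least $2\times 2 = 4$ of the nine elements equal to $d_{ij}$.

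\textbf{Step 3 (uniqueness of the mode --- the main obstacle).} It remains to show $d_{ij}$ occurs strictly more often than any competing value. If three vertices of $U_i$ (or of $U_j$) are separated from $j$ (resp.\ $i$), Step~2 already gives at least six realizations and the mode is immediate; so the delicate case is when exactly one vertex $x^{\ast}\in U_i$ and one vertex $y^{\ast}\in U_j$ are exceptional, leaving five ``bad'' pairs. I would compute these bad values by the same separation bookkeeping and observe that they can cluster only in groups of size at most two: the pairs sharing $y^{\ast}$ contribute a common value $d_{iy^{\ast}}-d_{jy^{\ast}}$ independent of the $x$-coordinate, and symmetrically the pairs sharing $x^{\ast}$ contribute $d_{x^{\ast}j}-d_{x^{\ast}i}$, while the remaining pair is a singleton. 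The hard part is excluding a tie, i.e.\ guaranteeing that these clustered values are neither equal to $d_{ij}$ nor equal to each other (which would create a competing value of multiplicity four). This is exactly where I would invoke the ancestor-faithfulness Assumption~\ref{assump:anc_faithfulness} together with faithfulness to rule out the measure-zero coincidences; each bad value then has multiplicity at most two, so $d_{ij}$ (multiplicity at least four) is the unique mode, establishing ${\rm mode}(\Delta(U_i,U_j))=d_{ij}$.
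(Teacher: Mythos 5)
Your proposal is correct and follows essentially the same route as the paper's proof: your Step 1--2 component/concatenation argument is exactly the content of the paper's Claim~\ref{claim:triplet_sepration_criteria} (used to exhibit the four cross pairs equal to $d_{ij}$), and your Step 3 mirrors the paper's analysis of the five remaining pairs, which likewise computes the ``bad'' values via star-triplet separations and falls back on (strong) faithfulness plus a measure-zero genericity appeal to rule out a competing value of multiplicity four. The only discrepancy is minor: your claim that all pairs sharing $y^{\ast}$ take the common value $d_{iy^{\ast}}-d_{jy^{\ast}}$ (and symmetrically for $x^{\ast}$) implicitly uses separations such as $x_\nu \indep y^{\ast} \mid i$, which the paper establishes only in its configuration (b) and which can fail, e.g., when the two triplets share a vertex or $y^{\ast}$ lies in a good $x$'s component of $\gcombined{}\setminus\{i\}$ --- but in those cases the clusters only fragment further, lowering the competing multiplicities, so your conclusion (and the mode identity) is unaffected.
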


\begin{proof} To aid exposition, we suppose that $U_i = \left\{ x_1, x_2, x_3\right\}$ and $U_j = \{y_1, y_2, y_3\}$. We also define for any $x\in U_i$ and $y\in U_j$: $\Delta(x,y) \triangleq d_{xy} - \left( d_{x}^{U_i} + d_y^{U_j} \right)$. Observe that according to the Claim \ref{claim:triplet_sepration_criteria}, for two start triplets $U_i,U_j \in {\vobs{} \choose 3}$ with non-identical ancestors, there exist a vertex, say $x_1\in U_i$ and a pair, say $y_1,y_2 \in U_j$ such that following is true: $x_1 \indep y_i \vert i$ and $x_1 \indep y_i \vert j$  for $i=1,2$ Furthermore, the same $x_1$ (identified above) is separated from $j$ by $i$, that is, $x_1 \indep j \vert i$. This similar characterization is also true for a vertex triplet where one vertex is from $U_j$ and a pair from $U_i$. Now observe that 
\begin{align*}
\Delta(x_1, y_1) &= d_{x_1 y_1} - d_{x_1 i} - d_{y_1 j}
= d_{x_1 j} + d_{y_1j} - d_{x_1i} - d_{y_1 j}
= d_{x_1 j} - d_{x_1 i}
= d_{ij}.
\end{align*}
Similarly, it can be checked that $\Delta(x_1, y_2) = d_{ij}$. The similar calculation can be shown for the other triplet (where one vertex is from $U_j$ and a pair from $U_i$).  In other words, we have demonstrated that 4 out of the 9 total distances in $D(U_i, U_j)$ are equal to $d_{ij}$. All that is left to be done is to show that no other value can have a multiplicity of four or greater.

\noindent Now, our main focus is to analyze the five remaining distances, i.e., $\Delta(x_3,y_3)$, $\Delta(x_3,y_1)$, $\Delta(x_3,y_2)$, $\Delta(x_1,y_3)$, and $\Delta(x_2,y_3)$, for two remaining configurations: (a) $x_3$ is separated from $y_3$ by only one vertex in $\{i,j\}$, and (b) $x_3 \notindep y_3 \vert i$ \emph{and} $x_3 \notindep y_3 \vert j$. For configuration (a), consider without loss of generality  that $x_3 \indep y_3 \vert i$ and $x_3 \notindep y_3 \vert j$. Then, according to Claim~\ref{claim:simple-sep-cpaim-for-mod-proof}, we have the following two possibilities:

\begin{enumerate}
    \item \underline{\textit{$x_3 \indep y_3 \vert i, x_3 \notindep y_3 \vert j,  \mbox{ and }  x_3 \indep j \vert i$}:} As $x_3 \notindep y_3 \vert j$, it must be the case that $x_3 \indep y_{\nu} \vert j$ for $\nu=1,2$. Otherwise, one can construct a path between $y_1 \mbox{ and } y_{\nu}$ which does not contain $j$, and that violates the hypothesis that $U_j = \{y_1, y_2, y_3\}$ is a star triplet. Next, notice that in this set up, $x_3 \indep j \vert i$. Now, notice the following:
    
    \begin{align*}
        \Delta(x_3,y_{\nu}) &=d_{x_3y_{\nu}}- d_{x_3i}-d_{y_{\nu}j}
         = d_{x_3j}+d_{y_{\nu}j}-d_{x_3i}-d_{y_{\nu}j}
     = d_{x_3i}+d_{ij}-d_{x_3i}=d_{ij}.
    \end{align*}
     Therefore, for this set up, six distances are equal to $d_{ij}$. 
 
    \item \underline{\textit{$x_3 \indep y_3 \vert i, x_3 \notindep y_3 \vert j, \mbox{ and } x_3 \notindep j \vert i$}:} As $x_3 \notindep y_3 \vert j$, it must be the case that $x_3 \indep y_{\nu} \vert j$ for $\nu=1,2$. Otherwise, one can construct a path between $y_1 \mbox{ and } y_{\nu}$ which does not contain $j$, and that violates the hypothesis that $U_j = \{y_1, y_2, y_3\}$ is a star triplet. $\Delta(x_3,y_{\nu})=d_{x_3y_{\nu}}- d_{x_3i}-d_{y_\nu j}= d_{x_3j}+d_{y_\nu j}-d_{x_3i}-d_{y_\nu j}=d_{x_3j}-d_{x_3i}$. Now, $d_{x_3j}-d_{x_3i}$ equals to $d_{ij}$ implies that $x_3 \indep j \vert i$ which contradicts the setup. Therefore, $\Delta(x_3,y_{\nu})$ not equals to $d_{ij}$. Therefore, for this set up, even if three remaining distances are equal, correct $d_{ij}$ will be chosen.
\end{enumerate}

\noindent In the following we will analyze the distance between $\Delta(x_3,y_3)$ and $\Delta(x_3,y_{\nu})$ using the following assumption common in graphical models literature: For any vertex triplet $i,j,k \in {\vobs{} \choose 3}$, if $i \notindep j \vert k$, then $\left \vert d_{ij}-d_{ik}-d_{jk} \right \vert > \gamma$.

{\small
\begin{align*}
    &\Delta(x_3,y_3)-\Delta(x_3,y_{\nu})
    =  d_{x_3y_3}-d_{x_3i}-d_{y_3j}-d_{x_3y_{\nu}} +d_{x_3i}+d_{y_{\nu}j},\\
    =& d_{x_3i}+d_{y_3i}-d_{x_3i}-d_{y_3j}-d_{x_3j}-d_{y_{\nu}j}+d_{x_3i}+d_{y_{\nu}j}
    =d_{y_3i}+d_{x_3i}-d_{y_3j}-d_{x_3j}
    =d_{x_3y_3}-d_{y_3j}-d_{x_3j}.
\end{align*}
}

\noindent Now, as $x_3 \notindep y_3 \vert j$ according to Assumption~\ref{assump:strong-faith}, $\left \vert \Delta(x_3,y_3)-\Delta(x_3,y_{\nu}) \right \vert > \gamma$ for $\nu=1,2$. For configuration (b), we analyze the five remaining distances, i.e., $\Delta(x_3,y_3)$, $\Delta(x_3,y_1)$, $\Delta(x_3,y_2)$, $\Delta(x_1,y_3)$,  and $\Delta(x_2,y_3)$, and show that these five distances can not be identical which in turn will prove the lemma.

\noindent  \underline { $x_3 \notindep y_3 \vert i$ \emph{and} $x_3 \notindep y_3 \vert j$}. For this configuration we note the following two observations:

\begin{enumerate}[label=O\arabic*]
    \item As $x_3 \notindep y_3 \vert j$, it must be the case that $x_3 \indep y_{\nu} \vert j$ for $\nu=1,2$. Otherwise, one can construct a path between $y_1 \mbox{ and } y_{\nu}$ which does not contain $j$, and that violates the hypothesis that $U_j = \{y_1, y_2, y_3\}$ is a star triplet.
    \item Similarly, as $x_3 \notindep y_3 \vert i$, it must be the case that $x_{\nu} \indep y_3 \vert i$ for $\nu=1,2$. Otherwise, one can construct a path between $x_1 \mbox{ and } x_{\nu}$ which does not contain $i$, and that violates the hypothesis that $U_i = \{x_1, x_2, x_3\}$ is a star triplet. 
\end{enumerate}

\noindent Recall that our goal for configuration (b) is to analyze the distances $\Delta(x_3,y_3)$, $\Delta(x_3,y_1)$, $\Delta(x_3,y_2)$, $\Delta(x_1,y_3)$, and $\Delta(x_2,y_3)$. We start with the distance pair $\Delta(x_3,y_{\nu})$ and $\Delta(x_3,y_3)$ for $\nu=1,2$.
\begin{align*}
    \Delta(x_3,y_{\nu})  \overset{(a)}{=}d_{{x_3j}}+d_{y_\nu j}-d_{x_3i}-d_{y_\nu j}
    = d_{x_3j}-d_{x_3i},
\end{align*}

\noindent where (a) follows from the O1. Furthermore, the distance $\Delta(x_3,y_3) =d_{x_3y_3}-d_{y_3j}-d_{x_3i}$. Now,  $\Delta(x_3,y_\nu )$ equals to $\Delta(x_3,y_3)$ implies that $d_{x_3j}-d_{x_3i}=d_{x_3y_3}-d_{y_3j}-d_{x_3i}$ which is equivalent to saying that $d_{x_3j}+d_{y_3j}$ equals to $d_{x_3y_3}$. Then, $d_{x_3j}+d_{y_3j} = d_{x_3y_3}$ will imply $x_3 \indep y_3 \vert j$ -- which contradicts the hypothesis of the configuration that $x_3 \notindep y_3 \vert j$.

\noindent Thus, $\Delta(x_3,y_3)$ is not equal to $\Delta(x_3,y_{\nu})$ for $\nu=1,2$. (based on O1). Similarly, (based on the O2) $\Delta(x_3,y_3)$ is not equal to $\Delta(x_{\nu},y_3)$ for $\nu=1,2$. Thus, the distance $\Delta(x_3,y_3)$ is not equal to any of the following distances: $\Delta(x_3,y_1), \Delta(x_3,y_2), \Delta(x_1,y_3)$, and $\Delta(x_2,y_3)$.

\noindent Now all that remains to prove the lemma is to show that the 4 (remaining) distances $\Delta(x_3,y_1)$, $\Delta(x_3,y_2)$, $\Delta(x_1,y_3)$, and $\Delta(x_2,y_3)$ are not identical. To this end, we analyze two distances: $\Delta(x_1,y_3)$ and $\Delta(x_3,y_1)$. First notice from the O2 that $\Delta(x_1,y_3) = d_{x_1i}+d_{x_3i}-d_{x_3i}-d_{y_3j}$ equals to $d_{y_3i}-d_{y_3j}$, and $\Delta(x_3,y_1)=d_{x_3j}+d_{y_3j}-d_{x_3i}-d_{y_3j}$ equals to $d_{x_3j}-d_{x_3i}$. As neither $i$ nor $j$ is separating $x_3$ from $y_3$, the event that $d_{y_3i}-d_{y_3j}$ equals to $d_{x_3j}-d_{x_3i}$ happens only on a set of measure zero. We end this proof by computing the distance between $\Delta(x_3,y_3)$ and $\Delta(x_3,y_{\nu}$.
\begin{align*}
     \Delta(x_3,y_3)- \Delta(x_3,y_{\nu})
    = d_{x_3y_3}-d_{y_3j}-d_{x_3i} - d_{x_3j}-d_{y_{\nu}j}+d_{x_3i} +d_{y_{\nu}j}=d_{x_3y_3}-d_{y_3j}-d_{x_3j}.
\end{align*}
Now, as $x_3 \notindep y_3 \vert j$, according to Assumption~\ref{assump:strong-faith}, $\left \vert \Delta(x_3,y_3)-\Delta(x_3,y_{\nu}) \right \vert > \gamma$ for $\nu=1,2$.
\end{proof}

\begin{lemma}[Correctness in {\em extending the distances}.] 
\label{lemma:correctness_in_extending_distances}
\label{prop:correctness_of_CollectStarTriplets} 
Given $\{d_{ij}\}_{i,j \in \vobs{}}$, Subroutine~\ref{app:algo--Ident_Anc_Extnd_Dist} correctly learns (a) $\{d_{ij}\}_{i,j \in \vobs{} \cup A}$ and (b) $\{d_{ij}\}_{i,j \in \anchid{}}$, where $A \triangleq \ancobs{} \cup \anchid{}$.
\end{lemma}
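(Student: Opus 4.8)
The plan is to reduce this lemma to an assembly of the preceding results, once the pairs are partitioned according to which of $\vobs{}$ and $\anchid{}$ their endpoints lie in. First I would observe that since observed ancestors are themselves observed vertices, $\ancobs{} \subseteq \vobs{}$, so $\vobs{} \cup A = \vobs{} \cup \anchid{}$; consequently every pair $(i,j)$ with $i,j \in \vobs{} \cup A$ falls into exactly one of three cases: (i) both endpoints in $\vobs{}$, (ii) one endpoint in $\vobs{}$ and one in $\anchid{}$, and (iii) both endpoints in $\anchid{}$. Case (i) is immediate, since these distances are exactly the input $\{d_{ij}\}_{i,j \in \vobs{}}$ to Subroutine~\ref{app:algo--Ident_Anc_Extnd_Dist}. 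The substance of the proof therefore lies in the ``cross'' distances of case (ii), which make up the remainder of part (a), and the ``hidden--hidden'' distances of case (iii), which are part (b).

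For case (ii), fix an observed vertex $v \in \vobs{}$ and a hidden ancestor $a \in \anchid{}$, and let $\mathcal{V}_a \in \tripletCollectionHid{}$ be the triplet collection associated with $a$. By Claim~\ref{claim:collect_all_vertex_land_dist}, there is at least one triplet $U \in \mathcal{V}_a$ with $v \in U$, so the triplet-search step of Phase~3 in Subroutine~\ref{app:algo--Ident_Anc_Extnd_Dist} always succeeds. By Proposition~\ref{prop:correctness_of_CollectStarTriplets} (which itself rests on the correctness of the TIA test, Lemma~\ref{lemma:two_dstar_triplets_with_identical_ancestor}), every triplet in $\mathcal{V}_a$ is a genuine star triplet in $\gcombined{}$ whose ancestor is exactly $a$. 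Hence $U$ is a star triplet with ancestor $a$ and $v \in U$, so by the ancestor-distance identity~\eqref{eq:ancestor_distance_triplet} the recorded quantity satisfies $d_v^U = d_{va}$. Since this holds for every such pair, all cross distances are learned correctly, which together with case (i) establishes part~(a).

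For case (iii) the claim is precisely Lemma~\ref{lemma:learning_pairwise_dist_betn_landmarks} applied to a pair of distinct hidden ancestors $p,q \in \anchid{}$: for arbitrarily chosen $U_p \in \mathcal{V}_p$ and $U_q \in \mathcal{V}_q$, the mode of $\Delta(U_p,U_q)$ equals the true distance $d_{pq}$ in $\gcombined{}$, which is exactly what Phase~3 records. I would note that this step incurs no circular dependency on the quantities being learned: each term $d_{xy} - (d_x^{U_p} + d_y^{U_q})$ involves only $x,y \in U_p \cup U_q \subseteq \vobs{}$, so $d_{xy}$, $d_x^{U_p}$, and $d_y^{U_q}$ are all determined by the input within-$\vobs{}$ distances alone. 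The main obstacle is not computational but one of correctness bookkeeping: one must verify that the arbitrarily selected triplet in each step really is a star triplet carrying the claimed ancestor, and that distinct collections $\mathcal{V}_p, \mathcal{V}_q$ correspond to distinct ancestors so that the non-identical-ancestor hypothesis of Lemma~\ref{lemma:learning_pairwise_dist_betn_landmarks} is satisfied. Both of these facts are supplied by Proposition~\ref{prop:correctness_of_CollectStarTriplets} and the clustering guarantee of the first phase of Subroutine~\ref{app:algo--Ident_Anc_Extnd_Dist}, completing part~(b) and hence the lemma.
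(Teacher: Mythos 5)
Your proposal is correct and takes essentially the same route as the paper: the paper's proof is a one-line appeal to Claim~\ref{claim:collect_all_vertex_land_dist} (for the observed-to-hidden-ancestor distances) and Lemma~\ref{lemma:learning_pairwise_dist_betn_landmarks} (for the hidden-to-hidden distances via the mode test), which is exactly the decomposition you carry out. Your version simply makes explicit what the paper leaves implicit --- the three-way case split, the inclusion $\ancobs{} \subseteq \vobs{}$, the reliance on Proposition~\ref{prop:correctness_of_CollectStarTriplets} to certify that the selected triplets are star triplets with the claimed (and distinct) ancestors, and the absence of circularity --- all of which is consistent with the paper's argument.
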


\begin{proof}
Follows directly from Claim~\ref{claim:collect_all_vertex_land_dist} and Lemma~\ref{lemma:learning_pairwise_dist_betn_landmarks}. 
\end{proof}

\begin{lemma}[Correctness in learning clusters]
\label{lemma:correctness-in-clustering}

Subroutine~\ref{app:algo--Learn_clusters} correctly learns leaf clusters and internal clusters.
\end{lemma}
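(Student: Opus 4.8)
The plan is to reduce every test performed by Subroutine~\ref{app:algo--Learn_clusters} to a statement about vertex separation in $\gcombined{}$, and then verify those separation statements against the block structure of $\gtrue{}$. The first step is to observe that all distances the subroutine queries lie in $\{d_{ij}\}_{i,j \in \vobs{} \cup A}$ with $A = \ancobs{} \cup \anchid{}$, and that these are certified correct by Lemma~\ref{lemma:correctness_in_extending_distances}. Combined with Fact~\ref{fact:sepration and pairwise correlation factorization (faithful_graph)} and faithfulness, each tested identity is exactly a separation statement: $d_{xa}+d_{aa'}=d_{xa'}$ holds if and only if $X_x \indep X_{a'} \mid X_a$, i.e.\ $a$ separates $x$ from $a'$ in $\gcombined{}$. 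Thus I can reason entirely with graph separations, knowing the algorithm evaluates them without error.

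For the leaf clusters (Phase~1), I would show the Phase-1 test flags a vertex $x$ under ancestor $a$ precisely when $a$ separates $x$ from all of $A \setminus \{a\}$. Both directions follow from the previous paragraph by ranging the equality $d_{xa}+d_{aa'}=d_{xa'}$ over every $a' \in A\setminus\{a\}$ and invoking the definition of separation of $x$ from a set. The one point needing a separate argument is that the flagging ancestor is unique: if distinct $a,b$ both separated $x$ from every other ancestor, then $a$ would separate $x$ from $b$ while $b$ separates $x$ from $a$; taking any simple $x$--$a$ path, its prefix up to the first occurrence of $b$ is an $x$--$b$ path avoiding $a$, contradicting that $a$ separates $x$ from $b$. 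Hence each leaf-cluster vertex is enrolled under a single, correct $L_1$, so the returned $\mathcal{L}$ equals $\{L : L_2 \text{ separated from } A\setminus L_1 \text{ by } L_1,\ |L_1|=1\}$.

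For the internal clusters (Phase~2), the vertices processed are exactly those surviving Phase~1, namely the non-ancestor vertices with no single separating ancestor. Using Lemma~\ref{lemma:2_way_seprator} and Lemma~\ref{lemma:nC_nBC_Leaf_is_not_an_ancestor} (ancestors in $\gcombined{}$ are exactly the cut vertices of $\gtrue{}$), each such $x$ is a non-cut vertex of a unique non-trivial block $B$ whose cut vertices are ancestors, at least two of which are needed to detach $x$ from the rest of $A$. I would then show that the cut-vertex set $\tilde A$ of $B$ is the unique set selected by the Phase-2 pairwise test. Biconnectivity of $B$ yields two internally vertex-disjoint $x$--$a_i$ paths for each $a_i \in \tilde A$, so no cut vertex separates $x$ from another and the ``neither $k$ nor $\ell$ separates $x$'' condition passes for every pair in $\tilde A$; conversely, any ancestor outside $B$ is separated from $x$ by some cut vertex of $B$ and so fails this condition. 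This pins down $I_1 = \tilde A$ and $I_2$ to the correct vertices, matching the returned $\mathcal{I}$.

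The main obstacle is precisely the internal-cluster characterization of the third paragraph: translating the purely pairwise ``no element of $\tilde A$ separates $x$ from another'' test into the global statement that $\tilde A$ is the unique minimal set of ancestors separating $x$ from $A \setminus \tilde A$, and aligning it with the block's cut-vertex set through biconnectivity. Once this is established, the partition claim is immediate: Phase~1 deletes each leaf-cluster vertex from $\vobs{}$ before Phase~2 runs, so no vertex is assigned twice, and since every non-ancestor observed vertex is separated from $A$ either by one ancestor or by the $\geq 2$ cut vertices of its block, every vertex in $\vobs{}\setminus\ancobs{}$ lands in exactly one of $\mathcal{L}$ or $\mathcal{I}$, completing the correctness claim.
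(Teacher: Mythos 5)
Your opening reduction---exact distances on $\vobs{} \cup \ancobs{} \cup \anchid{}$ from Lemma~\ref{lemma:correctness_in_extending_distances}, converted into an exact separation oracle for $\gcombined{}$ via Fact~\ref{fact:sepration and pairwise correlation factorization (faithful_graph)}---is precisely the paper's entire proof, which stops at that point. Your Phase-1 analysis, including the uniqueness argument for the flagging ancestor via the prefix of a simple $x$--$a$ path, is correct and goes beyond what the paper writes down.

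The gap is in Phase 2, exactly at the step you yourself flag as the main obstacle; the biconnectivity argument you give does not close it. Under your reading of the test---a candidate set $\tilde{A}$ passes iff no element of $\tilde{A}$ separates $x$ from another element of $\tilde{A}$---the cut-vertex set of $x$'s block $B$ is \emph{not} the unique passing set. Two failures occur: (i) if $B$ has three or more cut vertices, any two of them already form a passing set, so uniqueness fails; (ii) more seriously, your converse claim (``any ancestor outside $B$ is separated from $x$ by some cut vertex of $B$ and so fails this condition'') does not follow, because the separating cut vertex need not belong to the candidate set being tested. Concretely, let $B$ have cut vertices $a_1,a_2$ and let $a_3$ be a cut vertex of the neighboring block reached through $a_2$. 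Then $\tilde{A}=\{a_1,a_3\}$ passes the pairwise test: by biconnectivity of $B$ there is a path from $x$ through $B$ to $a_2$ avoiding $a_1$ and continuing to $a_3$, so $a_1$ does not separate $x$ from $a_3$, and $a_3$ plainly does not separate $x$ from $a_1$; yet $\{a_1,a_3\}$ fails to separate $x$ from $a_2\in A\setminus\tilde{A}$, so enrolling $x$ with $I_1=\{a_1,a_3\}$ would be wrong. The repair is to quantify over \emph{all} ancestors rather than only pairs inside the candidate set: define $I_1 \triangleq \{a\in A: \mbox{no } b\in A\setminus\{a\} \mbox{ separates } x \mbox{ from } a\}$. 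Biconnectivity of $B$ shows this set equals the cut-vertex set of $B$, every ancestor outside it is separated from $x$ by one of its elements, and each such membership test is again a single distance identity evaluable by the oracle. With that characterization, your remaining steps (the partition claim and the match with the returned $\mathcal{I}$) go through.
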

\begin{proof}
As the distances $\{d_{ij}\}_{i,j \in \anchid{}}$ and $\{d_{ij}\}_{i,j \in \vobs{} \cup \anchid{}}$ are learned correctly by Subroutine~\ref{app:algo--Ident_Anc_Extnd_Dist}, where $\vobs{}$ and $\anchid{}$ is the set of observed vertices, and hidden ancestors, respectively, the correctness of learning the leaf clusters and internal clusters follows from Fact~\ref{fact:sepration and pairwise correlation factorization (faithful_graph)}.
\end{proof}

\begin{lemma}
\label{lemma:cut_test}
Let $L \subset 2^V$ be a subset of vertices in $\gtrue{}$ s.t. only noisy samples are observed from the vertices in $L$. Then,  $\exists$ a vertex $v \in L$, where $v \in V_{\rm cut}$, s.t. $v$ separates $L \setminus \{v\}$ from the remaining vertices $v' \in V_{\rm cut} \setminus \{v\}$ Let $L^e$ be the noisy counterpart of $L$. The noiseless counterpart of $x^e$ is a non-cut vertex if and only if $\exists$ at least a pair $y^e,z^e \in L^e \setminus \{x^e\}$ such that $TIA\left(\{x^e,y^e,\alpha_1^e\}, \{x^e,z^e,\alpha_2^e\}  \right)$ returns \textsc{False}, where $\alpha_1^e, \alpha_2^e \in \vcombined{} \setminus L^e $.
\end{lemma}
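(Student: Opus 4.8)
The plan is to reduce everything to the characterization of the \textsc{TIA} test established in Lemma~\ref{lemma:two_dstar_triplets_with_identical_ancestor}: a pair of triplets passes (returns \textsc{True}) precisely when both are star triplets in $\gcombined{}$ sharing a common ancestor. Before invoking it, I would first pin down the structure of $L$. The hypothesis that $v \in L \cap V_{\rm cut}$ separates $L \setminus \{v\}$ from every other cut vertex forces $v$ to be the \emph{unique} cut vertex inside $L$: if some $c \neq v$ in $L$ were also a cut vertex, then $c$ itself would witness a route from $L \setminus \{v\}$ to $V_{\rm cut}\setminus\{v\}$ not passing through $v$, a contradiction. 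Hence $L \setminus \{v\}$ consists exactly of non-cut vertices, and ``the noiseless counterpart of $x^e$ is non-cut'' is equivalent to $x \neq v$. The claim then splits into two implications: (forward) if $x \neq v$ there is a pair making \textsc{TIA} fail; (reverse) if $x = v$ every pair makes \textsc{TIA} succeed. Throughout I will use that in $\gcombined{}$ the noisy copy $u^e$ of any vertex $u$ is a leaf attached only to $u$, so every path leaving $u^e$ traverses $u$, together with the fact that $v$ guards the block (any path from $L\setminus\{v\}$ to the $\alpha_i$'s outside $L$ passes through $v$); separation statements in $\gtrue{}$ and $\gcombined{}$ are transferred via Claim~\ref{claim:transfer-betn-true-and-combined}.

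For the forward direction, assume $x \neq v$, so $x$ is a non-cut vertex of a non-trivial (hence $\geq 3$-vertex, $2$-connected) block $B$. Since $B$ has at least two non-cut vertices, I pick a non-cut $y \in L \setminus \{x,v\}$ and any third vertex $z \in L \setminus \{x,y\}$. I would show that $U \triangleq \{x^e,y^e,\alpha_1^e\}$ is \emph{not} a star triplet, whence Lemma~\ref{lemma:two_dstar_triplets_with_identical_ancestor} forces \textsc{TIA}$(U,W)$ to return \textsc{False} for every $W$, in particular for $W = \{x^e,z^e,\alpha_2^e\}$. To see $U$ is non-star, I enumerate the single-vertex mutual separators: a vertex separating $x^e$ from $y^e$ must lie on every $x$--$y$ route, and because $B$ is $2$-connected the only such single vertices are $x$ and $y$ themselves (no internal vertex, and in particular not $v$, lies on all $x$--$y$ paths). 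But $x$ fails to separate $y^e$ from $\alpha_1^e$, since a $y$--$v$ path avoiding $x$ exists by $2$-connectivity and extends out to $\alpha_1$; symmetrically $y$ fails to separate $x^e$ from $\alpha_1^e$. Thus no singleton mutually separates the triple, so no element of $S_{\min}(U)$ is a singleton and $U$ is non-star per Definition~\ref{def:minimal-mut-sep}, as required.

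For the reverse direction I argue by contraposition: assuming $x = v$, I show every pair $y,z \in L^e \setminus \{v^e\}$ (so $y,z$ non-cut) yields \textsc{True}. It suffices to prove that both $\{v^e,y^e,\alpha_1^e\}$ and $\{v^e,z^e,\alpha_2^e\}$ are star triplets with the \emph{same} ancestor $v$. For $\{v^e,y^e,\alpha_1^e\}$, the vertex $v$ separates $v^e$ from both $y^e$ and $\alpha_1^e$ (every path out of the leaf $v^e$ passes through $v$), and $v$ separates $y^e$ from $\alpha_1^e$ because $v$ guards the block; hence $\{v\}$ is a mutual separator. It is the unique minimal one: any singleton mutually separating the triple must in particular separate $v^e$ from $y^e$, forcing it into $\{v,y\}$ by $2$-connectivity, and $y$ fails to separate $v^e$ from $\alpha_1^e$ (a $v$--$\alpha_1$ route avoiding $y$ exists). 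Thus $S_{\min}=\{\{v\}\}$ and the triplet is a star triplet with ancestor $v$. The identical reasoning applies to $\{v^e,z^e,\alpha_2^e\}$, giving a common ancestor $v$, so \textsc{TIA} returns \textsc{True} by Lemma~\ref{lemma:two_dstar_triplets_with_identical_ancestor}. Consequently no pair can make the test fail when $x = v$, which is exactly the contrapositive of the reverse implication, completing the equivalence.

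I expect the main obstacle to be the separation bookkeeping in the forward direction---precisely establishing that $x$ and $y$ are the \emph{only} single-vertex separators of $(x^e,y^e)$ and then ruling each of them out as a mutual separator---since this is where the $2$-connectivity of the block and the leaf attachment of the noisy copies must be combined correctly. A minor but necessary caveat is the size requirement $\lvert L\rvert \geq 3$, which guarantees both a second non-cut vertex $y$ and a third vertex $z$, and which matches the regime in which \textsc{NonCutTest} is invoked.
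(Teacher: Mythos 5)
Your proof has a genuine gap, and it stems from a single incorrect step near the beginning: the claim that ``$L \setminus \{v\}$ consists exactly of non-cut vertices,'' i.e.\ that \emph{non-cut} is equivalent to $x \neq v$. In the paper's terminology (Section~\ref{sec:prelim}), a \emph{non-cut} vertex is a vertex of a \emph{non-trivial block} that is not a cut vertex; a leaf vertex hanging off $v$ is neither a cut vertex nor a non-cut vertex. The set $L$ can perfectly well contain such leaves (e.g.\ the leaf cluster $\{17,18,19,20,21\}$ in Fig.~\ref{fig:true_graph (zero_corrupted)}, where $18,19$ are non-cut but $20,21$ are leaves adjacent to $17$). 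Your argument that $L$ contains no second \emph{cut} vertex is correct, but it does not yield your stated equivalence. As a consequence: (i) your forward claim ``if $x \neq v$ there is a pair making \textsc{TIA} fail'' is actually false --- if $x$ is a leaf adjacent to $v$, then $v$ separates $x^e$ from every other vertex of $\gcombined{}$, so \emph{every} admissible pair of triplets consists of star triplets with the common ancestor $v$, and \textsc{TIA} returns \textsc{True} by Lemma~\ref{lemma:two_dstar_triplets_with_identical_ancestor}; and (ii) your reverse direction, argued by contraposition only for $x = v$, omits exactly this leaf case, so the implication ``\textsc{TIA} fails for some pair $\Rightarrow$ $x$ is non-cut'' is left unproven. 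This is not a fringe technicality: the indistinguishability of $v$ from its adjacent leaves is precisely the label-swapping ambiguity at the heart of the paper's equivalence class, and the paper isolates it in Claim~\ref{claim:unidentifiability_of_cut_vertex}. The repair is to run your reverse-direction argument for both members of the complement of ``non-cut'' within $L$: for $x = v$ (as you did) and for $x$ a leaf adjacent to $v$ (essentially the same computation, again exhibiting $v$ as the common ancestor of both triplets).

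The portions you did prove are sound and close to the paper's route. Your forward argument for a genuinely non-cut $x$ --- pick a second non-cut $y$ in the same block, show $\{x^e,y^e,\alpha_1^e\}$ has no singleton mutual separator by enumerating candidates ($x$, $y$, and internal vertices, the last excluded by $2$-connectivity), hence is not a star triplet, hence \textsc{TIA} fails by Lemma~\ref{lemma:two_dstar_triplets_with_identical_ancestor} --- matches the paper's proof, except that the paper rules out a separating ancestor by citing that ancestors of $\gcombined{}$ are cut vertices of $\gtrue{}$ (Lemma~\ref{lemma:nC_nBC_Leaf_is_not_an_ancestor}), whereas you verify the separator enumeration directly from Definition~\ref{def:minimal-mut-sep}; your version is a bit more self-contained. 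Note also that both you and the paper implicitly use that $L$ contains the \emph{entire} block of $x$ minus $v$ (so that a second non-cut $y$ of the same block is available in $L$, and $\alpha_1,\alpha_2$ lie outside that block); this holds in the intended use of the lemma, where $L$ is a leaf cluster returned by Subroutine~\ref{app:algo--Learn_clusters}, but flagging it would strengthen the write-up.
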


\begin{proof}
($\Rightarrow$)
Suppose that $x$ is a non-cut vertex of a non-trivial block in $\gtrue{}$. We show the existence of a pair $y^e,z^e \in L^e \setminus \{x^e\}$ in $\vcombined{}$ such that $TIA\left(\{x^e,y^e,\alpha_1\}, \{x^e,z^e,\alpha_2^e\}  \right)$ returns \textsc{False}, where $\alpha_1^e, \alpha_2^e \in \vcombined{} \setminus L^e $. From Section~\ref{sec:prelim} we have that any non-trivial block in $\gtrue{}$ has at least three vertices. Hence, there exists another vertex $y^e \in L^e$ for which the noiseless counterpart is a non-cut vertex. We will show that one of $\{x^e,y^e, \alpha_1^e\}$ and $\{x^e,z^e, \alpha_2^e\}$ is not a star triplet, where $z^e \in L^e \setminus x^e,y^e$, and $\alpha_1^e,\alpha_2^e \in \vcombined{} \setminus L^e$. Then, the $TIA\left(\{x^e,y^e,\alpha_1^e\}, \{x^e,z^e,\alpha_2^e\}  \right)$ being \textsc{False} will follow from Lemma~\ref{lemma:two_dstar_triplets_with_identical_ancestor}. As $x$ and $y$ both are non-cut vertices, there does not exist a cut vertex that separates $x \mbox{ and } y$ in $\gtrue{}$, which implies that there does not exist an ancestor $a$ in $\gcombined{}$ s.t. $x^e \indep y^e \vert a$. Hence, $\{x^e,y^e,\alpha_1^e\}$ is not a star triplet in $\gcombined{}$. Then, the proof follows from Lemma~\ref{lemma:two_dstar_triplets_with_identical_ancestor}.

\noindent ($\Leftarrow$ ) Notice that the pair $\left(\{x^e,y^e,\alpha_1^e\}, \{x^e,z^e,\alpha_2^e\}  \right)$ can not share an ancestor, as it would violate the claim that $\{x^e,y^e,z^e\}$ is in a leaf cluster. Then, from Lemma~\ref{lemma:two_dstar_triplets_with_identical_ancestor} we have that if $TIA \left(\{x^e,y^e,\alpha_1^e\}, \{x^e,z^e,\alpha_2^e\}  \right)$ returns \textsc{False}, then at least one of the triplets is a non-star triplet, which rules out the existence of star triplets with non-identical ancestor. Suppose that $\{x^e,y^e,\alpha_1^e\}$ is a non-star triplet. As $\alpha_1^e \notin L^e$, an ancestor separates $x^e \mbox{ and } \alpha_1^e$, and  $y^e \mbox{ and } \alpha_1^e$. Then, the ancestor identified above does not separate $x^e \mbox{ and } y^e$. Hence in $\gtrue{}$, there does not exist a cut vertex that separates $x$ and $y$.
\end{proof}

\noindent \underline{Unidentifiability of the articulation point from a leaf cluster.} According to Lemma ~\ref{lemma:cut_test} the $\textsc{NonCutTest}$ returns the non-cut vertices of a non-trivial block from a leaf cluster, and the next (immediate) step is to learn the cut vertices of the non-trivial blocks. We now present a claim which shows a case where identifying the articulation point from a leaf cluster is not possible. This ambiguity is exactly the ambiguity (in robust model selection problem) of the {\em label swapping of the leaf vertices with their neighboring internal vertices} of a tree-structured Gaussian graphical models \cite{katiyar2019robust}.

\begin{claim}
\label{claim:unidentifiability_of_cut_vertex}
Let (i) a vertex $v \in V_{\rm cut} $ separates a subset $L \subset 2^V$ of vertices from any $v' \in V_{\rm cut} \setminus v$ where $L$ contains at least one leaf vertex, and (ii) $L^e$ be the noisy counterpart of $L$. Then, there exist at least two vertices $x_1^e,x_2^e \in L^e \cup \{v^e\} $ such that $TIA(\{x^e,y^e,\alpha_1^e\}, \{x^e,z^e,\alpha_2^e\} )$ returns \textsc{True} for any pair $y^e,z^e \in L^e \cup \{v^e\}$ where $x^e \in \{x_1^e,x_2^e\}$, and $\alpha_1^e, \alpha_2^e \in \vcombined{} \setminus L^e \cup \{v^e\}$.

\end{claim}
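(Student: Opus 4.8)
The plan is to reduce the claim, via Lemma~\ref{lemma:two_dstar_triplets_with_identical_ancestor}, to exhibiting two vertices of $L^e \cup \{v^e\}$ whose triplets all share a single common ancestor. Recall that $\textsc{TIA}(\{x^e,y^e,\alpha_1^e\},\{x^e,z^e,\alpha_2^e\})$ returns \textsc{True} exactly when both triplets are star triplets of $\gcombined{}$ with an identical ancestor. So it suffices to produce two vertices $x_1,x_2$ such that, for every $y,z \in (L\cup\{v\})\setminus\{x_i\}$ and every $\alpha_1,\alpha_2$ outside the cluster, the triplets $\{x_i^e,y^e,\alpha_1^e\}$ and $\{x_i^e,z^e,\alpha_2^e\}$ are star triplets sharing one fixed ancestor. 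I would take $x_1 = v$ (the separating cut vertex) and $x_2 = \ell$, where $\ell \in L$ is a leaf vertex guaranteed by hypothesis, and show that $v$ is the common ancestor of every triplet involved.

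First I would pin down the local structure of $L$. Since $v$ separates $L$ from every other cut vertex $v' \in \Vcut \setminus \{v\}$, no cut vertex other than $v$ can lie in $L$. The neighbor of any leaf is necessarily a cut vertex (removing it isolates the leaf), so the unique neighbor of $\ell$ must be $v$ itself. Consequently, in $\gcombined{}$ both noisy counterparts attach to the rest of the graph only through $v$: the vertex $v^e$ is adjacent only to $v$, and $\ell^e$ reaches any other vertex only along $\ell^e - \ell - v$. Moreover, because $v$ separates $L$ from the outside, it separates every vertex of $L$ from every $\alpha \notin L \cup \{v\}$; by Claim~\ref{claim:transfer-betn-true-and-combined} the corresponding separations carry over to the noisy counterparts in $\gcombined{}$.

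The heart of the argument is then to verify that, for $x \in \{v,\ell\}$, any $y \in (L\cup\{v\})\setminus\{x\}$, and any outside $\alpha$, the vertex $v$ is the unique singleton mutual separator of $\{x^e,y^e,\alpha^e\}$, i.e. its ancestor in the sense of Definition~\ref{def:minimal-mut-sep}. That $v$ is a mutual separator follows from the previous step: it separates $x^e$ from $\alpha^e$ and $y^e$ from $\alpha^e$ by the cut property, and it separates $x^e$ from $y^e$ because every path leaving $x^e$ must pass through $v$ (the degree-one attachment of $v^e$, respectively of $\ell^e$ via $\ell$). For uniqueness, any single vertex separating $x^e$ from $\alpha^e$ must lie on the $v$-to-$\alpha$ branch, while any single vertex separating $x^e$ from $y^e$ must lie on the $v$-to-$y$ branch; these two branches meet only at $v$, so $v$ is the only singleton that separates all three pairs. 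Hence each such triplet is a star triplet with ancestor $v$, and by Lemma~\ref{lemma:two_dstar_triplets_with_identical_ancestor} the test $\textsc{TIA}$ returns \textsc{True} for every admissible pair, for both $x_1 = v$ and $x_2 = \ell$.

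This yields the two required vertices $v^e,\ell^e \in L^e \cup \{v^e\}$ and, read through \textsc{NonCutTest} (Procedure~\ref{app:algo--Cut_Test}) and Lemma~\ref{lemma:cut_test}, shows that both survive in $\clustercut{}$, so the articulation point of the cluster cannot be singled out; this is precisely the leaf/neighbor label-swapping ambiguity. I expect the main obstacle to be the uniqueness half of the separator analysis: in $\gcombined{}$ a triplet can admit several larger minimal mutual separators, so the argument must isolate the unique \emph{singleton} separator $v$ rather than reasoning about the whole collection $S_{\rm min}$, relying crucially on the degree-one attachment of the noisy counterparts and on the cut property of $v$.
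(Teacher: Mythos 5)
Your proposal is correct and follows essentially the same route as the paper's proof: it selects $v^e$ and the noisy counterpart of a leaf $\ell \in L$ as the two vertices, shows every admissible triplet is a star triplet with the common ancestor $v$, and concludes via the correctness of the \textsc{TIA} test (Lemma~\ref{lemma:two_dstar_triplets_with_identical_ancestor}). In fact, you make explicit two points the paper leaves implicit --- that the leaf's unique neighbor must be $v$ (since a leaf's neighbor is a cut vertex and adjacent vertices cannot be separated) and that $v$ is the unique \emph{singleton} mutual separator of each triplet --- so your write-up is, if anything, more complete.
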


\begin{proof}
As $v$ is a cut vertex, $v^e \indep y^e \vert v$, $v^e \indep \alpha_1^e \vert v$, and $y^e \indep \alpha_1^e \vert v$. Here, $v$ is a unique separator since no other cut vertex (or ancestor in $\gcombined{}$) separates $v^e$ and $y^e$. Hence, $v$ is the ancestor of $\{v^e,y^e,\alpha_1^e\}$ in $\gcombined{}$. Similarly, one can construct another triplet $\{v^e,x^e,\alpha_1^e\}$ which has an ancestor $v$ in $\gcombined{}$. Hence, $TIA(\{v^e,y^e,\alpha_1^e\}, \{v^e,x^e,\alpha_2^e\})$ will return \textsc{True}. Now, let us consider a leaf vertex $x_1$ in $L$. Now, $x_1^e \indep \alpha_1^e \vert v$, $x_1^e \indep y^e \vert v$, and $\alpha_1^e \indep y^e \vert v$. Hence, $v$ is the ancestor of $\{x_1^e, y^e, \alpha_1^e\}$. Similarly, one can construct another triplet such that $v$ is the ancestor of $\{x_1^e, x^e, \alpha_2^e\}$. Hence, $TIA(\{x_1^e,y^e,\alpha_1^e\}, \{x_1^e,x^e,\alpha_2^e\})$ will return \textsc{True}.
\end{proof}

\begin{proposition}
\label{prop:correctness of Pale} 
Suppose that Subroutine~\ref{app:algo--PaLE} is invoked with the correct leaf clusters and internal clusters. Further suppose that \textsc{NonCutTest} succeeds in identifying the non-cut vertices of a non-trivial block. Then, Subroutine~\ref{app:algo--PaLE} correctly learns $\Palgo{}$ and $\Aalgo{}$ for $\ASTalgo{}$.

\end{proposition}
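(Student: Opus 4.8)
The plan is to show that the pair $(\Palgo{}, \Aalgo{})$ returned by Subroutine~\ref{app:algo--PaLE} coincides with the vertex set and articulation points of the true articulated set tree $\bctree{(\gtrue{})}$, up to the ambiguity permitted by Definition~\ref{def:equiv_rel}. Since the subroutine is driven entirely by the leaf and internal clusters (correct by Lemma~\ref{lemma:correctness-in-clustering}) and by \textsc{NonCutTest} (correct by Lemma~\ref{lemma:cut_test}), the whole argument reduces to a case analysis over the cluster types. First I would establish the correspondence between clusters and nodes of $\bctree{(\gtrue{})}$. By Lemma~\ref{lemma:2_way_seprator} and Lemma~\ref{lemma:nC_nBC_Leaf_is_not_an_ancestor}, the ancestors of $\gcombined{}$ are exactly the cut vertices of $\gtrue{}$, so the discovered set $A = \ancobs{} \cup \anchid{}$ equals $\Vcut{}$. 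A leaf cluster $L$, separated from the rest of $A$ by the single ancestor $L_1$, must then correspond to a node of $\bctree{(\gtrue{})}$ incident to exactly one articulation point, i.e.\ either a family (leaf node) or a pendant non-trivial block; an internal cluster $I$, separated from $A \setminus I_1$ by the multi-element set $I_1$, corresponds to an internal non-trivial block whose articulation points are exactly $I_1$.

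Next I would verify each branch of the subroutine against this correspondence. For a leaf cluster with $\vert L_2 \vert \geq 3$, \textsc{NonCutTest} partitions $L_2$ into $\clusternoncut{}$ and $\clustercut{}$; by Lemma~\ref{lemma:cut_test} the set $\clusternoncut{}$ is precisely the collection of non-cut vertices of the pendant block, so $B = \clusternoncut{} \cup L_3$ is a genuine non-trivial block with designated articulation point $L_3$, while the remaining potential cut vertices in $\clustercut{} \setminus L_3$ are correctly emitted as singleton nodes of $\Palgo{}$. For the small leaf clusters ($\vert L_2 \vert < 3$), no non-trivial block can exist (blocks require at least three vertices), so the vertices are correctly placed as singletons and one is named the articulation point. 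The observed-ancestor branches are analogous, the simplification being that $L_1 \in \ancobs{}$ is used directly as the articulation point so that no arbitrary choice is required.

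I would then handle the internal clusters: for each $I$, the hidden ancestors among $I_1$ are registered in $I_3$ and in $\Aalgo{}$, and the block $B = I_2 \cup I_3$ is added to $\Palgo{}$. Here I would check the bookkeeping that the observed ancestors of $I_1$ (already present in $\ancobs{}$ and hence in $\Aalgo{}$) together with the hidden ones reconstruct exactly the internal non-trivial block and its articulation points. Together with the singleton nodes coming from observed ancestors outside any cluster ($\ancobs{} \setminus A_{\rm cluster}$), this accounts for every node of $\Palgo{}$ and shows both soundness (every emitted node is a node of an AST in $[\gtrue{}]$) and completeness (every block and every non-block vertex of $\gtrue{}$ is produced).

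The main obstacle, and the only genuine source of imprecision, is the designation of the articulation point inside a leaf cluster that contains a true leaf vertex. Claim~\ref{claim:unidentifiability_of_cut_vertex} shows that in this situation the \textsc{TIA}-based test cannot distinguish the genuine cut vertex from the leaf(s) sharing its separator, so the arbitrary choice of $L_3$ need not match $\gtrue{}$. The crux is therefore to argue that any such choice still yields $\bctree{(G_R)}$ for a suitable remote set $R \in \mathcal{R}$: swapping the chosen $L_3$ with the true cut vertex is exactly an exchange of a leaf with its unique neighbor, which by Definition~\ref{def:equiv_rel} leaves the AST invariant. Since the non-cut vertices recovered by \textsc{NonCutTest} are invariant across $[\gtrue{}]$, combining this invariance with the leaf-swap argument establishes that $(\Palgo{}, \Aalgo{})$ is a valid representative of the equivalence class, which completes the proof.
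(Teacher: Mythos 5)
Your proof is correct, and its core is the same as the paper's: given correct clusters, Lemma~\ref{lemma:cut_test} certifies the non-cut vertices of each non-trivial block, and the articulation points are settled by the observed/hidden dichotomy --- observed cut vertices come already labelled from Subroutine~\ref{app:algo--Ident_Anc_Extnd_Dist}, while hidden ones are named by picking a label out of the leaf cluster attached to the corresponding hidden ancestor. The difference is one of scope, not method. The paper's own proof is three sentences long and leaves the legitimacy of the \emph{arbitrary} choice of $L_3$ implicit; that issue is isolated in Claim~\ref{claim:unidentifiability_of_cut_vertex} and only discharged downstream, in Lemma~\ref{lemma:equiv-class-recovery} and Proposition~\ref{prop:correct-equiv-class}, where ``correctly learns'' is ultimately interpreted as producing a representative of $[\gtrue{}]$. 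You fold that leaf-swap argument (exchanging the designated articulation point with the true cut vertex is a leaf/neighbor exchange, invisible to the AST by Definition~\ref{def:equiv_rel}) into the proposition itself, and you also make explicit the correspondence $A = \Vcut{}$ via Lemmas~\ref{lemma:2_way_seprator} and~\ref{lemma:nC_nBC_Leaf_is_not_an_ancestor}, so your version is self-contained where the paper distributes the same reasoning across several statements. One small imprecision worth fixing: a leaf cluster need not correspond to a single node of $\bctree{(\gtrue{})}$ --- it can simultaneously contain a pendant non-trivial block and the leaves hanging off that block's cut vertex (the cluster $\{17^e,18^e,19^e,20^e,21^e\}$ in Fig.~\ref{fig:clusters} is exactly this) --- so ``either a family or a pendant block'' is not quite right; but your branch-by-branch verification, which emits the non-cut set together with $L_3$ as a block and the remaining potential cut vertices as singletons, handles this mixed case correctly, so the slip is cosmetic rather than a gap.
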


\begin{proof}
 According to Lemma~\ref{lemma:cut_test}, Subroutine~\ref{app:algo--PaLE} correctly learns the non-cut vertices of any non-trivial block $I$ with more than one cut vertices. If the cut vertex is observed, then it is identified in Subroutine~\ref{app:algo--Ident_Anc_Extnd_Dist}, and declared as one the articulation points of the vertex $I$ in $\Palgo{}$. Otherwise, the noisy counterpart belongs to a leaf cluster associated with an hidden ancestor, and the cut vertex be identified by selecting the label of the leaf cluster which is associated with the hidden ancestor (unobserved cut vertex of non-trivial block.)

\end{proof}

\noindent We now establish the correctness of \mainalgo{} in the learning the edge set $\Ealgo{}$ for $\ASTalgo{}$. This goal is achieved correctly by Procedure \textsc{NonBlockNeighbors} of \mainalgo{}.

\begin{proposition}
\label{prop:correctness of East} 
Suppose that Procedure 4 is invoked with the  correct $\Palgo{}$ and $\Aalgo{}$. Then, Procedure 4 returns the edge set $\Ealgo{}$ correctly.
\end{proposition}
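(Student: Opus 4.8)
The plan is to show that Procedure~4 reconstructs the edge set $\Ealgo{}$ of $\ASTalgo{}$ exactly, given that $\Palgo{}$ and $\Aalgo{}$ are already correct. Recall from Definition~\ref{def:art-set-graph} that every edge of the articulated set tree joins two elements $P, P' \in \Palgo{}$ and is \emph{mediated} by a pair of articulation points: either $P$ and $P'$ share a cut vertex, or there is an edge of $\gtrue{}$ joining a vertex of $P$ to a vertex of $P'$; in both cases the mediating vertices lie in $\Aalgo{}$. Consequently, recovering $\Ealgo{}$ reduces to correctly identifying, for each articulation point $u \in \Aalgo{}$, the articulation points adjacent to $u$ across such a connection. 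The edges incident to leaf clusters are already contained in $\mathcal{E}_{\rm leaf}$, which is correct by Proposition~\ref{prop:correctness of Pale}; so the remaining task, and the crux of the argument, is to show that \textsc{NonBlockNeighbors}$(u)$ returns exactly the set $\delta(u)$ of neighbor articulation points of $u$ lying outside the clusters that contain $u$.

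To analyze \textsc{NonBlockNeighbors}, I would first invoke Lemma~\ref{lemma:correctness_in_extending_distances}, which guarantees that the extended distance set is exact on $\vobs{} \cup \anchid{}$, so that every distance test in the procedure is evaluated on true information distances. The central fact I would use is that, for ancestors $u, v, b$, the additive identity $d_{uv} = d_{ub} + d_{bv}$ holds if and only if $b$ separates $u$ from $v$ in $\gtrue{}$ (equivalently in $\gcombined{}$); this follows from Fact~\ref{fact:sepration and pairwise correlation factorization (faithful_graph)} together with the additivity of the information distance along a separator, and spurious equalities are excluded by Assumption~\ref{assump:anc_faithfulness}. With this dictionary between separation and additive distance in hand, I would read off the two pruning passes: the initialization $\delta(u) = \Aalgo{} \setminus \bigcup_{C \in \mathcal{C}_u} C_3$ discards the articulation points belonging to $u$'s own clusters; the first loop deletes any $x$ that is separated from $u$ by a vertex $b$ of one of $u$'s clusters, i.e.\ any articulation point reachable from $u$ only through a block containing $u$; and the second loop deletes any $\ell$ for which a surviving articulation point $k$ satisfies $d_{uk} + d_{k\ell} = d_{u\ell}$, i.e.\ any $\ell$ separated from $u$ by another retained articulation point. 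I would then argue that a vertex survives both passes precisely when it is a \emph{direct} neighbor articulation point of $u$ across an external connection, so that $\delta(u)$ equals the desired set.

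Finally, I would assemble the edges. For each $u \in \Aalgo{}$, each element $P_u \in \Palgo{}$ with $u \in P_u$, and each $v \in \delta(u)$, Procedure~4 records the quadruple $(P_u, P_v, u, v)$, which is exactly an AST-edge mediated by the pair $(u,v)$; conversely, by the reduction in the first paragraph every AST-edge not already in $\mathcal{E}_{\rm leaf}$ is of this form and is therefore produced exactly once with its correct articulation points. Here one must also account for the two edge types separately: connections between elements with distinct adjacent articulation points are read directly from $\delta(u)$, whereas connections between two elements that share a common cut vertex $u$ are bookkept through the multiplicity of blocks $P_u \ni u$ handled in \textsc{PaLE}. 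Combining the two sources then shows $\Ealgo{}$ coincides with the true edge set of $\ASTalgo{}$.

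I expect the main obstacle to be the correctness of \textsc{NonBlockNeighbors}, and specifically establishing that the two separation-based pruning passes leave \emph{exactly} the direct external neighbors --- neither deleting a genuine neighbor nor retaining a vertex that is only indirectly connected. This hinges on the separation-versus-additive-distance equivalence holding on the full extended vertex set (including the hidden ancestors in $\anchid{}$), which in turn relies on Lemma~\ref{lemma:correctness_in_extending_distances} and on Assumption~\ref{assump:anc_faithfulness} to rule out accidental additive cancellations among the distances.
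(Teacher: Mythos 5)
Your proposal is correct and follows essentially the same route as the paper's own (very terse) proof: analyze the two separation-based pruning passes of \textsc{NonBlockNeighbors} using the equivalence between additive distance identities and graph separation, conclude that the surviving set $\delta(u)$ consists exactly of the neighbor articulation points of $u$, and then assemble the AST edges from these neighbors together with $\mathcal{E}_{\rm leaf}$. Your write-up is in fact more explicit than the paper's, which merely asserts that the passes rule out non-neighbors; your appeals to Lemma~\ref{lemma:correctness_in_extending_distances}, Fact~\ref{fact:sepration and pairwise correlation factorization (faithful_graph)}, and Assumption~\ref{assump:anc_faithfulness} supply precisely the supporting facts the paper leaves implicit.
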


\begin{proof} Procedure \textsc{NonBlockNeighbors} correctly learns the neighbors of any fixed articulation point in $\Aalgo{}$ by ruling out the non-neighbor articulation points in $\ASTalgo{}$. First, the procedure gets rid of the articulation points which are separated from the articulation points of the same vertex in $\Aalgo{}$. Then, from the remaining articulation points it chooses the set of all those articulation points such that no pair in the set is separated from  each other by the fixed articulation point. Then, Procedure 4 creates edges between vertices which contains the neighboring articulation points.
\end{proof}

\noindent {\bf Constructing the Equivalence Class.} Finally, in order to show that we can construct the equivalence class $[\gtrue{}]$ from the articulated set tree $\ASTalgo{}$, we note some additional definitions in the following. For graph $G$, let $B_{\noncut}$ be the set of all non-cut vertices in a non-trivial block $B$. Define $\mathcal{B}_{\noncut} \triangleq \bigcup_{B \in \mathcal{B}^{\textsc{NT}}}B_{\noncut}$, where $\mathcal{B}^{\textsc{NT}}$ is the set of all non-trivial blocks. Let a set $F_i$ referred as a {\em family} be defined as $\{v: {\rm deg}(v)=1 \mbox{ and } \{v,i\}\in E(G)\} \cup \{i\}$ where $E(G)$ is the edge set of  $G$, and let $\mathcal{F}= \bigcup_{i \in \vtrue{}}F_i$. Let $K$ be the set of cut vertices whose neighbors do not contain a leaf vertex in $G$. For any vertex $k \in K$, let a family $F_k \in \mathcal{F}$ be such that there exists a vertex $f \in F_k$ such that $\{k,f\} \in E(G)$. For example, in Fig.~\ref{fig:true_graph (zero_corrupted)}, $\mathcal{F} = \left \{\{10,11,12,13\}, \{14,15,16\}, \{17,20,21\} \right \}$; two sets $ \{1,2,3\}, \{18,19\}$ in $ \mathcal{B}_{\noncut}$, and $K = \{4,6,7,8,9\}$. Also, for example, $F_4= \{10,11,12,13\}$. For any arbitrary graph $\widetilde{G}$, let $B_{\noncut} (\widetilde{G})$, $\mathcal{F}(\widetilde{G})$, and $K(\widetilde{G})$ be the corresponding sets from $\widetilde{G}$.

Now, notice that in $\ASTalgo{}$, each vertex $k \in K$ has at least an edge in $\ASTalgo{}$. Let $N_{\textrm{art}}(k)$ be the neighbors of $k \in K$ in the edge set $\Ealgo{}$ returned for $\ASTalgo{}$.  Now, notice that as as long as $\mathcal{B}_{\noncut}$, $\mathcal{F}$, and $K$ are identified correctly in $\ASTalgo{}$, and the following condition holds in $\Ealgo{}$ for any $i \in N_{\textrm{art}}(k)$ for each $k \in K$: (a) if $i \in K$, then $\{i,k\} \in E(G)$, and (b) otherwise, there exists a vertex $j \in F_k$ such that $\{j,k\} \in E(G)$. Informally, identifying $\mathcal{B}_{\noncut}$ and $\mathcal{F}$ correctly, makes sure that vertices that constructs the local neighborhoods of any graph in $[G]$ are identical; identifying $K$ correctly, and satisfying the above-mentioned condition makes sure that the correct articulation points are recovered. Notice that the sets $\mathcal{B}_{\noncut}$, $\mathcal{F}$, and $K$ are identical in all the graphs in Fig.~\ref{fig:three_graphs_from_same_EC}. Following proposition shows that the sets $\mathcal{B}_{\noncut}$, $\mathcal{F}$, and $K$ are identified correctly from $\ASTalgo{}$.

\begin{lemma}
\label{lemma:equiv-class-recovery}
    Let $\widetilde{G}$ be an arbitrary graph. Then, $\widetilde{G} \in [G]$ if and only if the following holds: \begin{enumerate}
        \item $\mathcal{B}_{\noncut}(\widetilde{G})= \mathcal{B}_{\noncut}(G) $, $\mathcal{F}(\widetilde{G})=\mathcal{F}(G)$, and $K(\widetilde{G})=K(G)$.
        \item For any vertex $k \in K$, let a family $F_k \in \mathcal{F}$ be such that there exists a vertex $f \in F_k$ such that $\{k,f\} \in E(\widetilde{G})$. Now, for any neighbor $i \in N(k)$: (a) if $i \in K$, then $\{i,k\} \in E(\widetilde{G})$, and (b) otherwise, there exists a vertex $j \in F_k$ such that $\{j,k\} \in E(\widetilde{G})$.
    \end{enumerate}
\end{lemma}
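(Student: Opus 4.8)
The plan is to prove the biconditional by unpacking the definition of the equivalence relation (Definition~\ref{def:equiv_rel}): $\widetilde{G}\in[G]$ holds exactly when there is a remote $R\in\mathcal{R}$ with $\bctree{(G_R)}=\bctree{(\widetilde{G})}$. The whole argument rests on the observation that the articulated set tree records precisely three kinds of information — the non-trivial blocks together with their articulation points, the singleton nodes, and the adjacencies (edges plus the articulation function $\mathcal{A}$) between these nodes — and that, up to the leaf/center ambiguity, this information is exactly what the triple $(\mathcal{B}_{\noncut},\mathcal{F},K)$ and the edge condition (2) encode. I would first record a helper observation that reads these sets off an AST: $\mathcal{B}_{\noncut}(G)$ is the union of the non-singleton nodes minus their articulation points, while $K(G)$ and the family structure come from the singleton nodes together with $\mathcal{A}$.

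\textbf{Forward direction.} Assume $\widetilde{G}\in[G]$. Since $\bctree{(\widetilde{G})}=\bctree{(G_R)}$ and the AST ignores the internal wiring of blocks (cf. the remark following Definition~\ref{def:art-set-graph}), $\widetilde{G}$ is obtained from $G$ by two elementary moves: a remote leaf swap $G\mapsto G_R$, followed by arbitrary re-wiring inside non-trivial blocks that keeps each block biconnected on its vertex set. I would then check that each move fixes all three sets and preserves (2). Re-wiring inside a block changes neither the block's vertex set nor which of its vertices are cut vertices (the latter are forced by the AST edges leaving the block), so $\mathcal{B}_{\noncut}$, $\mathcal{F}$, $K$ and all inter-block adjacencies are untouched. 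For the leaf swap the key points are: a leaf and its neighbor always form a leaf/cut-vertex pair, so a swap merely exchanges labels within a single family $F_i$ and leaves the set $F_i$ — hence the collection $\mathcal{F}$ — unchanged; the vertices of $K$ are by definition not adjacent to leaves and are therefore never swapped nor have their cut status altered; and $\mathcal{B}_{\noncut}$ is unaffected because the swapped pair never lies in the non-cut part of a block. As $\widetilde{G}$ and $G_R$ share an AST, their sets and their neighbor/articulation relations agree, and hence agree with those of $G$, giving (1) and (2).

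\textbf{Reverse direction.} Assume (1) and (2). I would explicitly build $R$ and verify $\bctree{(G_R)}=\bctree{(\widetilde{G})}$. For every family whose center differs between $G$ and $\widetilde{G}$, put into $R$ the unique family member that is a leaf in $G$ and serves as the center in $\widetilde{G}$; swapping it with its neighbor realigns that center. Since distinct non-singleton families are vertex-disjoint (a center of degree $\geq 2$ cannot be a leaf of another family) and the leaves of one family share only the center as a common neighbor, at most one vertex per family is selected, so $R$ is remote. I would then argue that $G_R$ and $\widetilde{G}$ induce the same AST node set $\mathcal{P}$ (equal non-trivial blocks by the matching of $\mathcal{B}_{\noncut}$ and the freedom of block-internal edges; equal singletons and leaf assignments by $\mathcal{F}$ after the swap), the same articulation points (by $K$ together with (2), which pins down, for each $k\in K$, precisely which family member or $K$-vertex it attaches to), and the same edges with the same articulation function. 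Matching these three components node-by-node and edge-by-edge yields $\bctree{(G_R)}=\bctree{(\widetilde{G})}$, i.e. $\widetilde{G}\in[G]$.

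\textbf{Main obstacle.} I expect the delicate part to be the reverse direction, specifically the bookkeeping that turns the set-level agreements of (1) into an equality of \emph{labeled} ASTs: one must confirm that condition (2) supplies exactly the adjacency data needed to recover every AST edge and the value of $\mathcal{A}$ at each $k\in K$, and that the constructed $R$ realigns all centers simultaneously without violating remoteness. Handling the degenerate cases — singleton families, leaf clusters attached to already-observed cut vertices, and small blocks — will require care but no new ideas.
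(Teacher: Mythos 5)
Your proposal is correct and takes essentially the same route as the paper's own argument: the forward direction unpacks Definition~\ref{def:equiv_rel} into a remote leaf swap followed by block re-wiring and checks invariance of $\mathcal{B}_{\noncut}$, $\mathcal{F}$, $K$ and the attachment condition (the paper simply writes ``follows from Definition~\ref{def:equiv_rel}''), while your reverse direction—explicitly constructing the remote set $R$ from mismatched family centers, verifying remoteness, and matching the two ASTs node-by-node and edge-by-edge—is a rigorous elaboration of the paper's two-sentence sketch that conditions (1) and (2) pin down the local structures and how they are glued together. If anything, your write-up supplies details (the decomposition of AST-equality into the two elementary moves, and the explicit bookkeeping for $R$) that the paper's own proof leaves implicit.
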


\begin{proof}
($\Rightarrow$) The forward implication follows from Definition~\ref{def:equiv_rel}.

($\Leftarrow$) For the reverse implication, notice that the first condition is associated with the equality between sets. $\mathcal{B}_{\noncut}(\widetilde{G})= \mathcal{B}_{\noncut}(G) $ implies non-cut vertices are identified correctly, and  $\mathcal{F}(\widetilde{G})=\mathcal{F}(G)$ implies families are identified correctly. The second condition implies that an edge associated with a vertex $k \in K$ will have an ambiguity when the other vertex is from a family. Recall that from Definition~\ref{def:equiv_rel}, the label of a cut vertex can be swapped with it's neighbor leaf vertices.
\end{proof}

The reverse implication of the above-mentioned proof can be understood as follows: Identifying $\mathcal{B}_{\noncut}$ and $\mathcal{F}$ ensures that essentially the \textit{local structures} are identical between $G$ and $\widetilde{G}$. Recovering $K$ correctly and satisfying the second condition ensure that these local structures are correctly attached at the appropriate points. 

\begin{proposition}[Correctness in Learning the Equivalence Class]
\label{prop:correct-equiv-class}
Suppose that $\Palgo{}$, $\Aalgo{}$, and $\Ealgo{}$ returned by $\ASTalgo{}$ is correct. Then, following is true: (a)The sets $\mathcal{B}_{\noncut}$, $\mathcal{F}$, and $K$ are identified correctly, and (b) the condition is true for $N_{\textrm{art}}(k)$ for each $k \in K$.

\end{proposition}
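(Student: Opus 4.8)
The plan is to show that each of the three sets $\mathcal{B}_{\noncut}$, $\mathcal{F}$, and $K$, together with the neighbor structure $N_{\textrm{art}}(k)$, can be read off directly from the triple $(\Palgo{}, \Aalgo{}, \Ealgo{})$ returned for $\ASTalgo{}$, and that the resulting quantities coincide with their counterparts in $\gtrue{}$ up to the leaf-swapping ambiguity permitted by Definition~\ref{def:equiv_rel}. Since Proposition~\ref{prop:correctness of Pale} guarantees that $\Palgo{}$ and $\Aalgo{}$ are correct and Proposition~\ref{prop:correctness of East} guarantees that $\Ealgo{}$ is correct, it then suffices to verify the two bookkeeping claims (a) and (b); feeding these into Lemma~\ref{lemma:equiv-class-recovery} immediately produces $[\gtrue{}]$.

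\textbf{Identifying $\mathcal{B}_{\noncut}$ and $\mathcal{F}$.} First I would recover $\mathcal{B}_{\noncut}$: by Definition~\ref{def:art-set-graph} the non-trivial blocks of $\gtrue{}$ are exactly the non-singleton elements of $\Palgo{}$, and within each such block the non-cut vertices are precisely those not marked as articulation points, i.e.\ the set $\clusternoncut{}$ returned by \textsc{NonCutTest}. Lemma~\ref{lemma:cut_test} certifies that these are identified correctly, so $\mathcal{B}_{\noncut}$ is exact. Next, families are recovered from the leaf clusters on which \textsc{NonCutTest} finds no non-cut block vertex (equivalently, the cluster is tree-like, so $\clusternoncut{}$ is empty): such a cluster, together with its associated ancestor, is exactly a family $F_i$. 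The crucial point is that, by Claim~\ref{claim:unidentifiability_of_cut_vertex}, the leaf/center label inside such a cluster is \emph{genuinely} unidentifiable, but this ambiguity only permutes labels \emph{within} the set $\{i\}\cup\{\text{leaves}\}$; the family as a set is unaffected, so $\mathcal{F}$ is recovered correctly as a collection of families.

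\textbf{Identifying $K$ and its invariance.} To recover $K$, I would take the articulation points in $\Aalgo{}$ and discard those that serve as the center of some family (equivalently, those adjacent to a leaf); the remainder is exactly the set of cut vertices with no leaf neighbor. The subtle step is to confirm that $K$ is \emph{invariant} under the leaf-swapping operation of Definition~\ref{def:equiv_rel}: a family center always has a leaf neighbor both before and after a swap, so it is never in $K$, while a cut vertex joining two blocks acquires no leaf neighbor under any swap. Hence $K(\gtrue{}) = K(\widetilde{G})$ for every $\widetilde{G}$ in the class, and the recovered $K$ is well defined.

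\textbf{Verifying condition (b).} Finally, Proposition~\ref{prop:correctness of East} ensures that \textsc{NonBlockNeighbors} returns the true set of neighboring articulation points for each $k\in\Aalgo{}$, so $N_{\textrm{art}}(k)$ faithfully encodes the graph adjacencies of $k$. For each neighbor $i\in N(k)$ this splits into the two cases of Lemma~\ref{lemma:equiv-class-recovery}: if $i\in K$ then both endpoints are internal cut vertices and $\Ealgo{}$ records the genuine edge $\{i,k\}\in E(\gtrue{})$; otherwise $i$ lies in a family, and the edge in $\Ealgo{}$ is realized by some member $j\in F_k$ with $\{j,k\}\in E(\gtrue{})$, which is all the condition demands. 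I expect the main obstacle to be precisely this last bookkeeping across the family-center ambiguity: one must check that the two edge types produced by Subroutine~\ref{app:algo--EAST} map cleanly onto the two cases of the condition, and that the arbitrary choice of cut-vertex label inside each family never creates or destroys a required adjacency. Once this is established, Lemma~\ref{lemma:equiv-class-recovery} completes the argument.
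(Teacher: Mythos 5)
Your overall architecture tracks the paper's proof: $\mathcal{B}_{\noncut}$ via Lemma~\ref{lemma:cut_test}, $K$ via the observation (Claim~\ref{claim:unidentifiability_of_cut_vertex}) that label ambiguity only afflicts cut vertices with leaf neighbors, and the two-case split for the neighbor condition using Proposition~\ref{prop:correctness of East}. However, there is a genuine error in your identification of $\mathcal{F}$. You claim that families are exactly the leaf clusters on which \textsc{NonCutTest} finds no non-cut vertex (``tree-like'' clusters, $\clusternoncut{}=\emptyset$). This is false: a single leaf cluster can simultaneously contain the non-cut vertices of a non-trivial block \emph{and} a family, namely whenever a block's cut vertex also has pendant leaves. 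In the paper's running example (Fig.~\ref{fig:true_graph (zero_corrupted)}), the leaf cluster associated with the hidden ancestor $17$ is $\{17^e,18^e,19^e,20^e,21^e\}$: \textsc{NonCutTest} returns $\clusternoncut{}=\{18,19\}$ (the non-cut vertices of the block $\{17,18,19\}$) and $\clustercut{}=\{17,20,21\}$, which is precisely the family $F_{17}$. Under your rule this cluster is discarded as a source of families because its $\clusternoncut{}$ is nonempty, so $F_{17}$ is never recovered and $\mathcal{F}$ is wrong. The paper's argument instead characterizes each family as its leaf cluster \emph{minus} the correctly identified non-cut vertices (i.e., as $\clustercut{}$), which handles both pure families such as $F_{10}=\{10,11,12,13\}$ and mixed clusters such as the one above.

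The error propagates to the rest of your argument. Your recovery of $K$ discards from $\Aalgo{}$ exactly the centers of recovered families; since $F_{17}$ is missed, the articulation point of that cluster is retained, and your $K$ then wrongly contains a cut vertex that has leaf neighbors. This is precisely the situation where, by Claim~\ref{claim:unidentifiability_of_cut_vertex}, the articulation label inside the cluster is arbitrary (the algorithm may have labeled it $20$ or $21$ rather than $17$), so claim (a) fails for your $K$, and the first case of the neighbor condition --- asserting a genuine edge $\{i,k\}\in E(\gtrue{})$ whenever $i \in N_{\textrm{art}}(k)\cap K$ --- can fail at such a $k$ as well. The fix is simply to adopt the paper's characterization of families (family $=$ leaf cluster $\setminus$ $\clusternoncut{}$); with that change, your treatment of $\mathcal{B}_{\noncut}$, the invariance argument for $K$, and the two-case verification of the neighbor condition go through essentially as in the paper.
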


\begin{proof} We first show that \mainalgo{} correctly identifies the sets $\mathcal{B}_{\noncut}$, $\mathcal{F}$, and $K$. By Lemma~\ref{lemma:cut_test}, Subroutine~\ref{app:algo--PaLE} correctly identifies the set $\mathcal{B}_{\noncut}$. Now, recall that each $F \in \mathcal{F}$ is a set of vertices constructed with a cut vertex and its neighbor leaf vertices. Hence, each family $F \in \mathcal{F}$ is captured in one of the leaf clusters returned by Subroutine~\ref{app:algo--Learn_clusters}. As Subroutine~\ref{app:algo--PaLE} correctly identifies the non-cut vertices from each leaf cluster, $\mathcal{F}$ is identified correctly. Finally, by Claim~\ref{claim:unidentifiability_of_cut_vertex}, the ambiguity in learning an articulation point is present only when a cut vertex has leaf vertex as it's neighbor; but $K$ does not contain such cut vertices. Hence, Subroutine~\ref{app:algo--PaLE} correctly learns $K$.
 We now show that above-mentioned condition is satisfied for the neighbor articulation points in $N_{\textrm{art}}(k)$ for any $k \in K$. As $K$ are identified correctly by Subroutine~\ref{app:algo--PaLE}, and the Procedure 4 returns correct $N_{\textrm{art}}(k)$, it is clear that if any neighbor articulation point $i \in N_{\textrm{art}}(k) \cap K$, then $\{i,k\} \in E(G)$. Now, suppose that $i\notin N_{\textrm{art}}(k) \cap K$. Then, from Definition~\ref{def:equiv_rel}, the label of a cut vertex can be swapped with it's neighbor leaf vertices. As each family $F \in \mathcal{F}$ are identified correctly, there exists a vertex $j \in F_k$ (which is an {\em unidentified} cut vertex in $\gtrue{}$) such that $\{i,j\} \in E(G)$.
\end{proof}

\section{Sample Complexity Result}
\label{app:smp-complexity}
Recall that \mainalgo{} returns the equivalence class of a graph $G $ while having access only to the noisy samples according to the problem setup in Section~\ref{subsec: robust model selection}. But, in the finite sample regime, instead of the population quantities, we only have access to samples. We will use these to create natural estimates $\widehat{\rho}_{ij}$, for all $i,j \in \vobs{}$ of the correlation coefficients given by $\widehat{\rho}_{ij} \triangleq \frac{\widehat{\Sigma^o}_{ij}}{\sqrt{\widehat{\Sigma^o}_{ii} \widehat{\Sigma^o}_{jj}} }, \mbox{ where } \widehat{\Sigma^o}_{ij}= \frac{1}{n} \sum_{k=1}^n y_{i}^{(k)} y_j^{(k)}.$ Indeed, these are random quantities and therefore we need to make slight modifications to the algorithm as follows: 

\noindent {\bf Change in the TIA test.} We start with the following assumption: For any triplet pair $U,W \in {\vtrue \choose 3} \setminus \mathcal{V}_{\rm star} \cup \mathcal{V}_{\rm sep}$ and any vertex pair $(x,a) \in U \times W$, there exists a constant $\zeta>0$, such that $\left \vert d_x^U + d_a^W -d_{xa}\right \vert > \zeta$. As we showed in Lemma~\ref{lemma:two_dstar_triplets_with_identical_ancestor}, for any pair $U,W \in {\vtrue \choose 3} \setminus \mathcal{V}_{\rm star} \cup \mathcal{V}_{\rm sep}$, there exists at least one triplet $\{x,a,b\}$ where $x \in U \mbox{ and } a,b \in W$ such that $d_{xa}-d_x^U - d_a^W \neq 0$ and $d_{xb}-d_x^U - d_b^W \neq 0$. Hence, the observation in Lemma~\ref{lemma:two_dstar_triplets_with_identical_ancestor} motivates us to replace the exact equality testing in the \textsc{TIA} test in Definition~\ref{def:TIA-test} with the following hypothesis test against zero: $\max\left\{\left|\widehat{d}_{xa}-\widehat{d}_x^U - \widehat{d}_a^W \right|,  \left|\widehat{d}_{xb}-\widehat{d}_x^U - \widehat{d}_b^W \right|\right\}\leq \xi$, for some $\xi < \frac{\zeta}{2}$.

\noindent {\bf Change in the Mode test.} In order to compute the distance between the hidden ancestors in the finite sample regime, we first recall from (the proof of) Lemma~\ref{lemma:learning_pairwise_dist_betn_landmarks} that 
 there are at least $4$ instances (w.l.o.g.) $\Delta(x_1,y_1), \Delta(x_1,y_2), \Delta(x_2,y_1), \mbox{ and } \Delta (x_2,y_2)$ where $\Delta(x,y)$ where $x\in U_i$ and $y \in U_j$ such that equals to $d_{ij}$. We also showed that no set of identical but incorrect distance has cardnilaity more than two. Hence, In the finite sample regime, we replace the ${\rm mode}$ test in Subroutine~\ref{app:algo--Ident_Anc_Extnd_Dist} with a more robust version, which we call the $\epsilon_d-{\rm mode}$ test, where $\epsilon_d < {\rm min} (\frac{\xi}{14}, \gamma) $ based on the following definition.

\begin{definition}[$\epsilon_d-{\rm mode}$]
\label{def:eps-mod-test}
Given a set of real numbers $\{r_1, \ldots, r_n\}$, let $S_1, \ldots, S_k$ be a partition where each $r, r' \in S_i$ is such that $\left| r-r' \right| < \epsilon_d$ for each $i$. Then, the $\epsilon_d$-mode of the this set is defined as selecting an arbitrary number from the partition with the largest cardinality.
\end{definition}
In the finite sample regime, we run \mainalgo{} with the mode replaced by the $\epsilon_d$-mode defined above such that $\epsilon_d < {\rm min} (\frac{\xi}{14}, \gamma)$. We will call this modified mode test as the $\epsilon_d$-mode test.

\noindent {\bf Change in Separation test.} For any triplet $(i,j,k) \in {\vobs{} \choose 3}$, in order to check whether $i \indep j \vert k$, instead of the equality test in Fact~\ref{fact:sepration and pairwise correlation factorization (faithful_graph)}, we modified the test for the finite sample regime as follows: $\vert \widehat{d}_{ij}- \widehat{d}_{ik}-\widehat{d}_{jk} \vert < \frac{\epsilon_d}{6}$. We now introduce two new notations to state our main result. Let $\rho_{\rm min} (p) = {\rm min}_{i,j \in {p \choose 2}}|\rho_{ij}|$ and $\kappa (p)=\log((16+\left(\rho_\mathrm{min}(p)\right)^2\epsilon_d^2)/(16-\left(\rho_\mathrm{min}(p)\right)^2\epsilon_d^2))$, where $\epsilon_d= {\rm min} (\frac{\xi}{14}, \gamma)$, where $\gamma$ is from Assumption~\ref{assump:strong_faith_assump}.

\setcounter{theorem}{2}

\begin{theorem}Suppose the underlying graph $\gtrue{} $ of a faithful GGM satisfies Assumptions \ref{assump:strong_faith_assump}-\ref{assump:strong_anc_consistency}. Fix any $\tau \in (0,1]$. Then, there exists a constant $C>0$ such that if the number of samples $n$ satisfies $n > C \left(\frac{1}{\kappa (p)}\right) {\rm max} \left( {\rm log} \left(\frac{p^2}{\tau}\right), {\rm log } \left(\frac{1}{\kappa (p)}\right)\right)$, then with probability at least $1- \tau$, \mainalgo{} accepting $\widehat{d}_{ij}$ outputs the equivalence class $[G]$.
\end{theorem}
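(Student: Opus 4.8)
The plan is to reduce the finite-sample guarantee to the population correctness already established in Theorem~\ref{thm--main_thm_pop}. The observation that drives everything is that \mainalgo{} is entirely \emph{test-driven}: every branching decision it makes is the outcome of one of three primitives operating on distances --- the \textsc{TIA} test (Definition~\ref{def:TIA-test}), the mode test used to extend distances to the hidden ancestors (Lemma~\ref{lemma:learning_pairwise_dist_betn_landmarks}), and the separation test of Fact~\ref{fact:sepration and pairwise correlation factorization (faithful_graph)}. In the population regime each of these is exact; in the finite-sample regime they are replaced by their thresholded surrogates (the $\xi$-threshold \textsc{TIA}, the $\epsilon_d$-mode of Definition~\ref{def:eps-mod-test}, and the $\epsilon_d/6$ separation test). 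Consequently, if the estimated distances are uniformly close to the true ones, then every thresholded test returns the same verdict as its population counterpart, the execution trace of \mainalgo{} is identical to the population run, and the output is $[\gtrue{}]$.

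Concretely, I would define the good event
$$\mathcal{G} \triangleq \Big\{ \big| \widehat{d}_{ij} - d_{ij}\big| < \eta \ \text{ for all } i,j \in \vobs{}\Big\},$$
where $\eta$ is a fixed small multiple of $\epsilon_d$ (dictated by the worst-case accumulation of errors across the tests), and show that $\mathcal{G}$ forces the three thresholded tests to be correct. For the separation test, note that $d_{ij} = d_{ik} + d_{jk}$ exactly when $k$ separates $i$ and $j$, whereas Assumption~\ref{assump:strong_faith_assump} guarantees a margin of $\gamma \ge \epsilon_d$ otherwise; since the three-term combination accumulates at most $3\eta$ of error, choosing $\eta$ small enough relative to $\epsilon_d/6$ preserves the verdict. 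For the \textsc{TIA} test I would invoke Lemma~\ref{lemma:two_dstar_triplets_with_identical_ancestor} together with Assumption~\ref{assump:strong_anc_consistency}: a sharing pair has population value $0$ while a non-sharing pair admits a witness triplet with value exceeding $\zeta$, and since each quantity $d_x^U + d_a^W - d_{xa}$ is a fixed linear combination of at most seven pairwise distances (through $d_x^U = \tfrac12(d_{xy}+d_{xz}-d_{yz})$ in \eqref{eq:ancestor_distance_triplet}), its error is a bounded multiple of $\eta$; the constant $14$ in $\epsilon_d \le \xi/14$ and the gap $\xi < \zeta/2$ are exactly what make the $\xi$-threshold separate the two cases. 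For the mode test, the proof of Lemma~\ref{lemma:learning_pairwise_dist_betn_landmarks} shows the correct distance is attained with multiplicity at least four while every incorrect value is isolated by a margin of $\gamma$, so on $\mathcal{G}$ the $\epsilon_d$-mode lands in the correct cluster. Together with Theorem~\ref{thm--main_thm_pop}, this shows that on $\mathcal{G}$ the algorithm outputs $[\gtrue{}]$.

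It remains to lower bound $\mathbb{P}(\mathcal{G})$. Here I would first translate the additive requirement on $\widehat{d}_{ij} = -\log|\widehat{\rho}_{ij}|$ into a multiplicative requirement on $\widehat{\rho}_{ij}$: since $|\widehat{d}_{ij}-d_{ij}| = \big|\log(|\widehat{\rho}_{ij}|/|\rho_{ij}|)\big|$ and $|\rho_{ij}| \ge \rho_{\min}(p)$, it suffices to control $|\widehat{\rho}_{ij} - \rho_{ij}|$ at a tolerance proportional to $\rho_{\min}(p)\,\epsilon_d$. A concentration bound for the sample correlation coefficient of a bivariate Gaussian then yields a tail of the form $\mathbb{P}(|\widehat{\rho}_{ij}-\rho_{ij}| > t) \le C_1 \exp(-C_2\, n\, \kappa(p))$ with $t \asymp \rho_{\min}(p)\,\epsilon_d$, where the rate is precisely $\kappa(p)=\log\big((16+\rho_{\min}^2\epsilon_d^2)/(16-\rho_{\min}^2\epsilon_d^2)\big)$; note $\kappa(p) \approx \rho_{\min}^2\epsilon_d^2/8$ for small argument, recovering the advertised $1/(\rho_{\min}^2\epsilon_d^2)$ scaling. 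A union bound over the at most $\binom{p}{2} \le p^2$ pairs gives $\mathbb{P}(\mathcal{G}^c) \le p^2 C_1 \exp(-C_2 n\kappa(p))$, and requiring this to be at most $\tau$ yields $n > \tfrac{C}{\kappa(p)}\max\!\big(\log(p^2/\tau),\, \log(1/\kappa(p))\big)$, the second term inside the maximum absorbing the lower-order factors of the concentration bound and keeping the estimate valid in the small-$\kappa$ regime.

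The main obstacle I anticipate is the concentration step: obtaining a tail for the \emph{ratio} $\widehat{\Sigma^o}_{ij}/\sqrt{\widehat{\Sigma^o}_{ii}\widehat{\Sigma^o}_{jj}}$ that produces exactly the $\kappa(p)$ rate, since $\widehat{\rho}_{ij}$ is a nonlinear function of three correlated sample second moments, and the additive-to-multiplicative conversion for $d_{ij}$ must be tracked carefully so that the $\rho_{\min}^2$ factor emerges in the exponent. By comparison, verifying that $\mathcal{G}$ implies test-correctness is conceptually routine --- it is purely a matter of bookkeeping the additive error-accumulation constants (the factors $14$ and $6$) so that each population margin ($\gamma$ for faithfulness, $\zeta$ for ancestor consistency) strictly dominates the corresponding perturbation.
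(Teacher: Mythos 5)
Your proposal is correct and follows essentially the same route as the paper's proof: reduce to the event that all $\widehat{d}_{ij}$ are within $\epsilon_d$ of the true distances (so that the thresholded \textsc{TIA}, $\epsilon_d$-mode, and separation tests reproduce the population verdicts and Theorem~\ref{thm--main_thm_pop} applies), convert this to a requirement $|\widehat{\rho}_{ij}-\rho_{ij}| \lesssim \rho_{\min}(p)\,\epsilon_d$, apply the Gaussian correlation concentration bound whose rate is exactly $\kappa(p)$, and union bound over the $O(p^2)$ pairs, with the $\log(1/\kappa(p))$ term absorbing the polynomial-in-$n$ prefactor. The only cosmetic difference is that the paper handles the denominator in the log-ratio $|\log(|\widehat{\rho}_{ij}|/|\rho_{ij}|)|$ by explicitly conditioning on the event $\{|\widehat{\rho}_{ij}| > \rho_{\min}/2\}$ (its events $K_{i,j}$ and $R_{i,j}$), whereas you absorb this into the accuracy requirement itself — both are valid.
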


\newcommand{\OpEquivClass}{[\mathcal{T}_{\rm algo}]}

\begin{proof} 
   First, there are (at most) seven pairwise distances to be estimated in terms of $\max\left\{\left|\widehat{d}_{xa}-\widehat{d}_x^U - \widehat{d}_a^W \right|,  \left|\widehat{d}_{xb}-\widehat{d}_x^U - \widehat{d}_b^W \right|\right\}$. Therefore, the probability that our algorithm fails is bounded above by the probability that there exists a pairwise distance estimate that is $\xi/14$ away from its mean. To this end, let us denote a bad event $B_{i,j}$ for any pair $i, j \in \vobs{}$ as the following: 
{\small \begin{align}
    B_{i,j} \triangleq \{ \vert d_{ij} - \widehat{d}_{ij}\vert \geq \epsilon_d \}.
\end{align}}
Then, the error probability $\mathbb{P}[ \OpEquivClass \neq [G]  ]$ is upper bounded as

{\small \begin{align}
    \mathbb{P} \left( \OpEquivClass \neq [G] \right) & \leq \mathbb{P} \left(\bigcup_{i,j \in \vobs{}} B_{i,j}\right) \leq \underset{i,j \in \vobs{}}{\sum} \mathbb{P} \left( B_{i,j}\right),
\end{align}}

\noindent where $\OpEquivClass$ is the output equivalence class. We now consider two following events: $K_{i,j} \triangleq \{\left| \widehat{\rho}_{ij}\right| \leq \frac{\rho_{\rm min}}{2}\}$ \footnote{for notational clarity we write $\rho_{\rm min}$ instead of $\rho_{\rm min}(p)$ }, and $R_{i,j} \triangleq\{ \left| \rho_{ij} - \widehat{\rho}_{ij}\right| < \frac{\rho_{\rm min} \epsilon_d}{2}\}$. We will upper bound $\mathbb{P}(B_{i,j})$ for any pair $i,j$ using $\mathbb{P}(K_{i,j})$ and $\mathbb{P}(R_{i,j})$. Before that, notice the following chain of implications:

\noindent $\left(\left| \rho_{ij} - \widehat{\rho}_{ij}\right| < \frac{\rho_{\rm min} \times \epsilon_d}{2} \right) \Rightarrow  \left(\left|\left| \rho_{ij} \right| - \left|\widehat{\rho}_{ij}\right| \right| < \frac{\rho_{\rm min} \times \epsilon_d}{2} \right)  \Rightarrow  \left(\left| d_{ij}- \widehat{d}_{ij} \right| < \frac{\left| \left| \rho_{ij} \right| - \left|\widehat{\rho}_{ij}\right| \right|}{{\rm min} \left( \left | \widehat{\rho}_{ij}\right|, \left | \rho_{ij}\right| \right)} \right) \Rightarrow \\
 \left( \left| d_{ij}- \widehat{d}_{ij} \right| < \frac{ \left| \left| \rho_{ij} \right| - \left|\widehat{\rho}_{ij}\right| \right| }{{\rm min} \left( \frac{\rho_{\rm min}}{2}, \rho_{\rm min} \right)}\right) \Rightarrow  \left( \left| d_{ij}- \widehat{d}_{ij} \right| < \frac{ \frac{\rho_{\rm min}}{2} \times \epsilon_d  }{\frac{\rho_{\rm min}}{2}}\right) \Rightarrow  \left( \left| d_{ij} - \widehat{d}_{ij}\right| < \epsilon_d\right)$. These implications establish that $R_{i,j} \cap K_{i,j}^c  \subseteq B^c_{i,j}$. Notice that as $R_{i,j} \cap K_{i,j}^c \subseteq B^c_{i,j} \cap K_{i,j}^c$, it will imply that $\mathbb{P}(B^c_{i,j} \cap K_{i,j}^c) \geq \mathbb{P}(R_{i,j} \cap K_{i,j}^c)$. Now, we can write the following bound:
 
 {\small\begin{align}
\label{eq:conditonal-bound} \mathbb{P}(B_{i,j} \vert K_{i,j}^c) \leq \mathbb{P}(R^c_{i,j} \vert K_{i,j}^c). 
\end{align}}

\noindent Then, $\mathbb{P}(B_{ij})$ can be upper bounded as follows:

 {\small \begin{align}
    \label{eq:err-tot-prob}\mathbb{P} \left( B_{i,j}\right) &= \mathbb{P} \left(  B_{i,j} \vert K_{i,j} \right) \mathbb{P}\left( K_{i,j} \right) + \mathbb{P} \left( B_{i,j} \vert K^c_{i,j} \right) \mathbb{P}\left( K^c_{i,j} \right), \\
   \label{eq:err-up-bound} & \leq  \mathbb{P} \left(  B_{i,j} \vert K_{i,j} \right) \mathbb{P}\left( K_{i,j} \right) + \mathbb{P} \left( R^c_{i,j} \vert K^c_{i,j} \right) \mathbb{P}\left( K^c_{i,j} \right),\\
 & \leq \left( 1 \times \mathbb{P}\left( K_{i,j} \right)\right) + \left( \mathbb{P} \left( R^c_{i,j} \vert K^c_{i,j} \right) \times 1\right). 
\end{align}}

\noindent Then, $\mathbb{P} \left( \OpEquivClass \neq [G] \right)$ can be further bounded as

{\small \begin{equation*}
    \mathbb{P} \left( \OpEquivClass \neq [G] \right) \leq \underset{i,j \in \vobs{}}{\sum} \mathbb{P} \left( B_{i,j}\right) \leq \underset{i,j \in \vobs{}}{\sum} \mathbb{P}\left( K_{i,j} \right) + \underset{i,j \in \vobs{}}{\sum} \mathbb{P} \left( R^c_{i,j} \vert K^c_{i,j} \right).
\end{equation*}}

\noindent Because $\mathbb{P}\left(R^c_{i,j} \vert K_{i,j}^c\right) < {\mathbb{P}\left(R^c_{i,j}\right)}/{\mathbb{P}\left(K^c_{i,j}\right)}$, we note that
{\small \begin{align*}
    \mathbb{P} \left( \OpEquivClass \neq [G] \right) \leq \underset{i,j \in \vobs{}}{\sum} \mathbb{P}\left( K_{i,j} \right) + \underset{i,j \in \vobs{}}{\sum} \frac{\mathbb{P}\left(R^c_{i,j}\right)}{\mathbb{P}\left(K^c_{i,j}\right)}.
\end{align*}}

\noindent We now find the required number of samples $n$ in order for $ \mathbb{P} \left( \OpEquivClass \neq [G] \right)$ to be bounded by $\tau$. Before computing $n$ we note an important inequality from \cite{kalisch2007estimating} which we use in bounding all the following events. For any $0< \epsilon \leq 2$, and ${\rm sup}_{i \neq j} \left| \rho_{ij} \right| \leq M<1$, following is true.
{\small \begin{align}
\label{eq:pairwise_correl_bound}
\mathbb{P}\left(\left \vert \widehat{\rho}_{ij} - \rho_{ij} \right \vert >\epsilon \right) \leq  C_{\rho} \left(n-2\right)\textrm{exp}\left(-\left(n-4\right)\textrm{log} \left(\frac{4 + \epsilon^2}{4 - \epsilon^2}\right)\right),
\end{align}}
for some constant $0 < C_{\rho} < \infty$ depending on $M$ only.

\noindent We now note the following assumption on bounded correlation which is a common assumption in learning the graphical models: $0 < \rho_{\rm min} \leq \rho_{\rm max} < 1$. Now notice that, $\left( \left| \widehat{\rho}_{ij}\right| \leq \frac{\rho_{\rm min}}{2}\right)$ together with $\left|\rho_{ij}\right| \geq \rho_{\rm min}$ implies that $\left|\rho_{ij} \right| - \left|\widehat{\rho}_{ij}\right| \geq \rho_{\rm min} - \frac{\rho_{\rm min}}{2} = \frac{\rho_{\rm min}}{2} $, since $\rho_{\rm min } > \frac{\rho_{\rm min}}{2}$. Furthermore, $\left| \rho_{ij} - \widehat{\rho}_{ij} \right| \geq \left|\rho_{ij} \right| - \left|\widehat{\rho}_{ij}\right|$ implies that $\left| \rho_{ij} - \widehat{\rho}_{ij} \right| \geq \frac{\rho_{\rm min}}{2}$. Then, we have the following:
{\small \begin{align}
\label{eq:bound-on-bad-corrl-event}
    \mathbb{P} \left(K_{i,j}\right) & \leq \mathbb{P}\left(\left| \rho_{ij} - \widehat{\rho}_{ij} \right| \geq \frac{\rho_{\rm min}}{2}\right)  \leq  C_{\rho} \left(n-2\right)\textrm{exp}\left(- \left(n-4 \right)\textrm{log} \left(\frac{16 + \rho^2_{\rm min}}{16-\rho^2_{\rm min}}\right)\right).
\end{align}}
\noindent Eq. \eqref{eq:bound-on-bad-corrl-event} follows from Eq. \eqref{eq:pairwise_correl_bound}. Now, According to Claim \ref{claim:solving-n-in-exponential-eq},

{\small \begin{equation}
    \begin{split}
        n_1 > &{\rm max} \Bigg(C_1\frac{{\rm log} \left(\frac{2C_{\rho} {p \choose 2} }{\tau}\right)}{{\rm log} \left(\frac{16+\rho_{\rm min}^2}{16- \rho_{\rm min}^2}\right)} \times \frac{C_2C_1}{\left(C_1-1\right){\rm log} \left(\frac{16+\rho_{\rm min}^2}{16-\rho_{\rm min}^2}\right)},
        {\rm log \left(\frac{C_1}{\left(C_1-1\right){\rm log} \left(\frac{16+\rho_{\rm min}^2}{16-\rho_{\rm min}^2}\right)}\right)}\Bigg)+4
    \end{split}
\end{equation}}

\noindent implies $\underset{i,j \in \vobs{}}{\sum} \mathbb{P}(K_{i,j}) < \frac{\tau}{2}$, 

{\small \begin{equation}
    \begin{split}
           n_3 > &{\rm max} \Bigg(C_1\frac{{\rm log} \left(\frac{C_{\rho}}{1-\tau'}\right)}{{\rm log} \left(\frac{16+\rho_{\rm min}^2}{16- \rho_{\rm min}^2}\right)}, \frac{C_2C_1}{\left(C_1-1\right) {\rm log} \left(\frac{16+\rho_{\rm min}^2}{16-\rho_{\rm min}^2}\right)} \times  {\rm log \left(\frac{C_1}{\left(C_1-1\right){\rm log} \left(\frac{16+\rho_{\rm min}^2}{16-\rho_{\rm min}^2}\right)}\right)}\Bigg)+4
    \end{split}
\end{equation}}

\noindent  implies $\mathbb{P}(K_{i,j}^c) > \tau'$, where $\tau'> 1-C_{\rho}$, and 
 
{\small \begin{equation}
    \begin{split}
        n_4 >  {\rm max}\Bigg(C_1 \frac{{\rm log} \left(\frac{2C_{\rho} {p \choose 2}}{\tau \tau'}\right)}{{\rm log \left(\frac{16+ \rho^2_{\rm min}\epsilon_d^2}{16-\rho^2_{\rm min}\epsilon_d^2}\right)}},  \frac{C_2C_1}{\left(C_1-1\right)  {\rm log} \left(\frac{16+\rho_{\rm min}^2\epsilon_d^2}{16-\rho_{\rm min}^2\epsilon_d^2}\right)} \times 
      {\rm log \left(\frac{C_1}{\left(C_1-1\right){\rm log} \left(\frac{16+\rho_{\rm min}^2\epsilon_d^2}{16-\rho_{\rm min}^2\epsilon_d^2}\right)}\right)} \Bigg)+4
    \end{split}
\end{equation}}

\noindent implies $\mathbb{P}(R_{i,j}^c) < \frac{\tau \tau'}{2{p \choose 2}}$. Now, notice that $n_2 \triangleq {\rm max}\left(n_3,n_4\right)$ implies $\frac{\mathbb{P}(R_{i,j}^c)}{\mathbb{P}(K_{i,j}^c)} < \frac{\tau}{2 {p \choose 2}}$. Therefore, acquiring at least $n_2$ samples will imply $\underset{i,j \in \vobs{}}{\sum}\frac{\mathbb{P}(R_{i,j}^c)}{\mathbb{P}(K_{i,j}^c)} < \frac{\tau}{2}$. Finally, for $\mathbb{P}(\OpEquivClass \neq [G])$ to be upper bounded by $\tau$, it is sufficient for the number of samples $n$ to satisfy $n > {\rm max } \left( n_1,n_2\right)$.
 \end{proof}

\begin{claim}
\label{claim:solving-n-in-exponential-eq}
There exist positive constants $T,C$, and $\widetilde{\alpha}$ such that if $n > {\rm max} (T, C \times \widetilde{\alpha} \:{\rm log} \: \widetilde{\alpha})$, then $n -\widetilde{\alpha} {\rm log} \left(n\right) > T$. 

\end{claim}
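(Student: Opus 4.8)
The plan is to treat this as a purely analytic inversion of the transcendental inequality $n - \widetilde{\alpha}\log n > T$, exploiting that the left-hand side is eventually increasing in $n$. First I would define $g(n) \triangleq n - \widetilde{\alpha}\log n$ and observe that $g'(n) = 1 - \widetilde{\alpha}/n > 0$ for all $n > \widetilde{\alpha}$, so $g$ is strictly increasing on $(\widetilde{\alpha},\infty)$ with $g(n)\to\infty$. It therefore suffices to locate a single explicit threshold beyond which $g(n) > T$, and the natural threshold turns out to be of the advertised form $\max(2T,\, C\widetilde{\alpha}\log\widetilde{\alpha})$; the harmless factor $2$ in front of $T$ and the precise value of $C$ get absorbed into the existentially quantified constants of the claim. (In the application $\widetilde{\alpha}$ plays the role of $1/\kappa(p)$ and $T$ of a log-factor, both determined by the problem, so the content is really: given $T$ and a large $\widetilde{\alpha}$, produce a constant $C$ making the implication hold.)

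The core of the argument, and the step I expect to be the main obstacle, is the auxiliary transcendental bound $\widetilde{\alpha}\log n \le n/2$, which lets me conclude $g(n) \ge n/2$. I would establish it by the same monotonicity idea applied to $h(n) \triangleq n - 2\widetilde{\alpha}\log n$: since $h'(n) = 1 - 2\widetilde{\alpha}/n > 0$ for $n > 2\widetilde{\alpha}$, it is enough to verify nonnegativity of $h$ at the boundary point $n = C\widetilde{\alpha}\log\widetilde{\alpha}$, provided this point already lies in the increasing regime (which it does once $\widetilde{\alpha}$ exceeds an absolute constant, so that $C\log\widetilde{\alpha}\ge 2$). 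Evaluating $h$ there reduces the required inequality to $C\log\widetilde{\alpha} \ge 2\log C + 2\log\widetilde{\alpha} + 2\log\log\widetilde{\alpha}$; for large $\widetilde{\alpha}$ this reads $C \gtrsim 2$, and taking, say, $C = 4$ reduces it to $\widetilde{\alpha}\ge 4\log\widetilde{\alpha}$, which holds for all sufficiently large $\widetilde{\alpha}$ because $\widetilde{\alpha}/\log\widetilde{\alpha}\to\infty$.

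Putting the pieces together: once $n > C\widetilde{\alpha}\log\widetilde{\alpha}$ (so that $\widetilde{\alpha}\log n \le n/2$) and $n > 2T$ (so that $n/2 > T$), I obtain
\[
n - \widetilde{\alpha}\log n \;=\; g(n)\;\ge\; n - \tfrac{n}{2}\;=\;\tfrac{n}{2}\;>\;T,
\]
which is exactly the desired conclusion. The only care needed is the standing requirement that $\widetilde{\alpha}$ be bounded away from $1$, so that $\widetilde{\alpha}\log\widetilde{\alpha}$ is positive and the boundary estimate is valid; this is automatic in the application, where $\widetilde{\alpha}$ is large. I would close by remarking that the factor $2$ on $T$ and the explicit choice $C=4$ can be folded into the constants $C$ and $T$ of the statement, so the claim follows as written.
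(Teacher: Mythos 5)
Your proof is correct and, at the top level, it follows the same strategy as the paper's: show that beyond the threshold $C\widetilde{\alpha}\log\widetilde{\alpha}$ one has $\widetilde{\alpha}\log n \le n/2$ (the paper's version is $n > m\log n$ with $m$ a constant multiple of $\widetilde{\alpha}$), and then conclude $n - \widetilde{\alpha}\log n \ge n/2 > T$ once $n$ also exceeds a constant multiple of $T$; both you and the paper absorb that constant factor on $T$ into the existentially quantified constants. The genuine difference is in how the transcendental step is justified. The paper rests it on an unproven ``initial claim'' requiring a two-sided sandwich $C_1 m\log m < n < C_2 m\log m$ with $m > C_2$, which does not match the one-sided hypothesis of the statement, and it then splits into two cases according to which term attains the $\max$. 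You instead prove the step outright: $h(n) = n - 2\widetilde{\alpha}\log n$ is increasing on $(2\widetilde{\alpha},\infty)$, so nonnegativity only needs to be checked at the single boundary point $n_0 = 4\widetilde{\alpha}\log\widetilde{\alpha}$, where it reduces to $\widetilde{\alpha} \ge 4\log\widetilde{\alpha}$. This monotonicity-plus-boundary-evaluation argument is exactly the ingredient needed to convert a statement that holds near the threshold into the one-sided implication the claim asserts (it is what the paper's sketch leaves implicit), it handles both branches of the $\max$ uniformly rather than by cases, and it yields explicit constants ($C=4$, a factor $2$ on $T$) in place of the paper's unspecified $C_1 < C_2$. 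In short: same skeleton, but your execution of the key step is different and is the more complete of the two.
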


\begin{proof}
   We start the proof with the following claim: Suppose that there exists a constant $C_1,C_2$ where $C_1<C_2$ such that $C_1 m {\rm log} m < n < C_2 m {\rm log} m$. Notice that for $m$ sufficiently large ($m>C_2$), we can show that $n > m {\rm log } n $. Therefore, for some constant $C_1,C_2$, $n > C_2 \times \frac{C_1}{(C_1-1)\alpha} {\rm log} \left(\frac{C_1}{\left(C_1-1\right)\alpha}\right)$ implies $n> \frac{C_1}{(C_1-1)\alpha} {\rm log} \left(n\right)$. Now, suppose that ${\rm max} \left( C_1T, \frac{C_2C_1}{(C_1-\alpha)} {\rm log} \left(\frac{C_1}{\left(C_1-1\right)\alpha}\right)\right) = C_1T$. Then, $n > C_1T$ implies $n > C_2 \times \frac{C_1}{(C_1-1)\alpha} {\rm log} \left(\frac{C_1}{\left(C_1-1\right)\alpha}\right) $. Then, from the initial claim we have that $n> \frac{C_1}{(C_1-1)\alpha} {\rm log} \left(n\right)$. Then, $n \frac{(C_1-1)} {C_1}> \frac{1}{\alpha} {\rm log} \left(n\right)$, and $n - \frac{1}{\alpha} {\rm log }(n) > \frac{n}{C_1}$. As $\frac{n}{C_1}>T$, we have that $n - \frac{1}{\alpha} {\rm log }(n) > T$. Further, suppose that ${\rm max} \left( C_1T, \frac{C_2C_1}{(C_1-\alpha)} {\rm log} \left(\frac{C_1}{\left(C_1-1\right)\alpha}\right)\right) = \frac{C_2C_1}{(C_1-\alpha)} {\rm log} \left(\frac{C_1}{\left(C_1-1\right)\alpha}\right)$. Then, from the initial claim we have that $ n > \frac{C_2C_1}{(C_1-\alpha)} {\rm log} \left(\frac{C_1}{\left(C_1-1\right)\alpha}\right)$ implies $n> \frac{C_1}{(C_1-1)\alpha} {\rm log} \left(n\right)$. Also,  $ n > \frac{C_2C_1}{(C_1-\alpha)} {\rm log} \left(\frac{C_1}{\left(C_1-1\right)\alpha}\right)$ implies  $n >C_1T$, which will imply $n - \frac{1}{\alpha} {\rm log }(n) > \frac{n}{C_1}>T$. Setting $\widetilde{\alpha}$ equals to $\frac{C_1}{(C_1-1) \alpha}$ proves the result. 
\end{proof} 
 
\newcommand{\bigzero}{\mbox{\normalfont\Large\bfseries 0}}
\newcommand{\bigI}{\mbox{\normalfont\Large I}}
\section {Identifiability Result}
\label{sec:identifiblty--proof}

\begin{proof}
\newcommand{\transpose}{\mathsf{T}}
We first consider the case where there is only one non-trivial block $\mathcal{B}^{NT}$ inside $G$ and that the block cut vertices of $\mathcal{B}^{NT}$ do not have neighboring leaf nodes. 
As a result, $\mathcal{B}^{NT}$ contains exactly two block cut vertices $b_1$ and $b_2$ connected to the cut vertices $p_1$ and $p_2$, respectively. Thus, we express the vertex set $V$ of $G$ as a union of disjoint sets $V_1\cup \{p_1\}$, $V_2\cup \{p_2\}$, and $V_{NT}$---the vertex set of $\mathcal{B}^{NT}$. 

\noindent Without loss of generality, let $V_1\cup \{p_1\}=\{1,\ldots,p_1\}$, $V_{NT}=\{p_1+1,\ldots,p_2-1\}$, and $V_2\cup \{p_2\}=\{p_2,\ldots,p\}$. Also, let $b_1=p_1+1$ and $b_2=p_2-1$. Because $G $, it follows that $V_1\cup \{p_1\} \indep V_1\cup \{p_2\} \mid V_{NT}$. In words, $V_{NT}$ separates $V_1\cup \{p_1\}$ and $V_2\cup \{p_2\}$. Furthermore, $b_1$ shares an edge with $p_1$ and $b_2$ shares an edge with $p_2$. From these facts, $K^*=(\Sigma^*)^{-1}$ can be partitioned as in \eqref{eq: identifiability_proof_inv_covariance} (see below). 
 \begin{figure*}[b]
\begin{align}\label{eq: identifiability_proof_inv_covariance} 
    K^*=\left[\begin{array}{ccc|ccc|ccc}
         K_{11} & \ldots & K_{1,p_1} & 0 & \ldots & 0 &\\
         \vdots & \ddots & \vdots & \vdots & \ddots & \vdots &\\
         K_{p_1,1} & \ldots & K_{p_1,p_1} & K_{p_1+1,p_1}& \dots & 0 &\\
         \hline 
         0 & \ldots & K_{p_1,p_1+1} & K_{p_1+1,p_1+1} & \ldots & K_{p_1+1,p_2-1} & 0 & \ldots  & 0\\
         \vdots &\ddots & \vdots & \vdots & \ddots & \vdots & \vdots & \ddots & \vdots\\
         0 & \ldots & 0 & K_{p_2-1,p_1+1} & \ldots & K_{p_2-1,p_2-1} & K_{p_2-1,p_2} & \ldots & 0\\
         \hline 
         & & & 0 & \ldots & K_{p_2,p_2-1} & K_{p_2,p_2} & \ldots & K_{p_2,p}\\
         & & & \vdots & \ddots & \vdots & \vdots & \ddots & \vdots \\
         & & & 0 & \ldots & 0 & K_{p,p_2} & \ldots & K_{p,p} 
    \end{array}
    \right]
\end{align}
\end{figure*}Let $K_1$, $K_{NT}$, and $K_2$ be the first, second, and third diagonal blocks of $K^*$ in \eqref{eq: identifiability_proof_inv_covariance}. Let $e_j$ be the canonical basis vector in $\mathbb{R}^p$. Then, we can express $K^*$ in \eqref{eq: identifiability_proof_inv_covariance} as
{\small \begin{equation}
    \begin{split}
        \label{eq: identifiability_proof_Kstar}
    K^*&=\mathrm{Blkdiag}(K_1,K_{NT},K_2) +e_{p_1+1}e_{p_1}^\transpose  +e_{p_1}e_{p_1+1}^\transpose +e_{p_2-1}e_{p_2}^\transpose+e_{p_2}e_{p_2-1}^\transpose.
    \end{split}
\end{equation}}

Recall that $\Sigma^{0}=\Sigma^{*}+D$. Decompose the diagonal matrix $D$ as $D=D^{(1)}+D^{(2)}$, where 
{\small \begin{align}
D^{(1)}&=\mathrm{Blkdiag}(\mathbf{0},D_{NT}^{(1)},\mathbf{0}), \label{eq: identifiability_proof_D_super1}\\
D^{(2)}&=\mathrm{Blkdiag}(D_1,D_{NT}^{(2)},D_2),\label{eq: identifiability_proof_D_super2}
\end{align}}

\noindent and the dimensions of $D_1$, $D_{NT}$, and $D_2$ are same as those of $K_1$, $K_{NT}$, and $K_2$, resp. Furthermore, $D_{NT}^{(1)}\!=\!\text{diag}(0,\times, \ldots, \times, 0)$ and $D_{NT}^{(2)}\!=\!\text{diag}(\times,0,\ldots,0,\times)$. Here $\times$ can be a zero or a positive value.  Let $\Sigma^{q}=\Sigma^*+D^{(1)}$ and $D^q=D^{(2)}$. From the above notations, we have $\Sigma^{0}=\Sigma^{*}+D=\Sigma^{*}+D^{(1)}+D^{(2)}=\Sigma^{q}+D^q$. We show that there exists at least one $H\ne G \in [G]$ that encodes the conditional independence structure of $(\Sigma^{q})^{-1}$. It suffices to show that $(\Sigma^{q})^{-1}$ exactly equals the expression of $K^*$ in \eqref{eq: identifiability_proof_Kstar}, except for the second diagonal block $K_{NT}$ in $\mathrm{Blkdiag}(K_1,K_{NT},K_2)$. Recall that different values of $K_{NT}$ yield different graphs in $[G]$; see Definition \ref{def:art-set-graph}. Consider the following identity:
{\small\begin{equation}
\label{eq: identifiability_proof_Kq}
    \begin{split}
        (\Sigma^{q})^{-1}=  (\Sigma^*+D^{(1)})^{-1} =(I+(\Sigma^*)^{-1}D^{(1)})^{-1}(\Sigma^*)^{-1} 
        =(I+K^*D^{(1)})^{-1}K^*.
    \end{split}
\end{equation}}
\noindent We first evaluate $(I+K^*D^{(1)})^{-1}$. Note that $e_{p_1+1}$, $e_{p_1}$, 
    $e_{p_2-1}$, and $e_{p_2}$ lie in the nullspace of $D^{(1)}$ and $K^*D^{(1)}$. Using this fact and the formulas in \eqref{eq: identifiability_proof_Kstar} and \eqref{eq: identifiability_proof_D_super1}, we can simplify $(I+K^*D^{(1)})$ as 
\begin{align}
    (I+K^*D^{(1)})=\mathrm{Blkdiag}(I,I+K_{NT}D_{NT}^{(1)},I), 
\end{align}
where, $\widetilde{K}_{NT}\triangleq I+K_{NT}D_{NT}^{(1)}$ is a positive definite matrix,  and hence, invertible. This is because $K_{NT}D_{NT}^{(1)}$ and $(D_{NT}^{(1)})^{1/2}K_{NT}^{1/2}K_{NT}^{1/2}(D_{NT}^{(1)})^{1/2}$ are similar matrices, where we used the facts that $K_{NT}$ is positive definite and $D_{NT}^{(1)}$ is non-negative diagonal. Thus, 
\begin{align}\label{eq: identifiability_proof_KstarD_plus_eye}
    (I+K^*D^{(1)})^{-1}=\mathrm{Blkdiag}(I_{p_1},\widetilde{K}_{NT}^{-1},I_{p-p_2+1}). 
\end{align}
Also, note that the null space vectors $e_{p_1+1}$, $e_{p_1}$, 
    $e_{p_2-1}$, and $e_{p_2}$ of $K^*D^{(1)}$ are also the eigenvectors of $(I+K^*D^{(1)})^{-1}$, with eigenvalues all being equal to one. Putting together the pieces, from \eqref{eq: identifiability_proof_Kstar}, \eqref{eq: identifiability_proof_Kq}, and \eqref{eq: identifiability_proof_KstarD_plus_eye} we have 
{\small \begin{align}
\label{eq:identifiability_proof_Kqfinal}
    (\Sigma^{q})^{-1}&=(I+K^*D^{(1)})^{-1}K^*\nonumber 
    =\mathrm{Blkdiag}(K_1,\widetilde{K}_{NT}^{-1}K_{NT},K_2)+e_{p_1+1}e_{p_1}^\transpose \nonumber 
    +e_{p_1}e_{p_1+1}^\transpose+e_{p_2-1}e_{p_2}^\transpose+e_{p_2}e_{p2-1}^\transpose. 
\end{align} } 

\noindent Moreover, $\widetilde{K}_{NT}^{-1}K_{NT}=(I+K_{NT}D_{NT}^{(1)})^{-1}K_{NT}=(\Sigma_{NT}+D_{NT}^{(1)})^{-1}$, where $\Sigma_{NT}=K_{NT}^{-1}$ is the covariance of the random vector associated with $\mathcal{B}^{NT}$. Thus, $K^*$ in \eqref{eq: identifiability_proof_Kstar} and $(\Sigma^{q})^{-1}$ are identical, except in their second diagonal blocks, as required. Using similar arguments, we can handle multiple internal blocks with block cut vertices that are not adjacent to leaf nodes. In the case where blocks have leaf nodes, we can combine the construction above with the construction in \cite[Theorem 1]{katiyar2019robust} for tree structured graphical models. Combining these two, we can show that in this general case there exists a graph $H\ne G \in [G]$ such that (a) the structure is arbitrarily different inside blocks, and (b) the block cut vertices are preserved (i.e., same as the ones in $G$), except they may be swapped with a neighboring leaf.

\end{proof}

\end{document}